\documentclass{article} %
\usepackage{iclr2027_conference,times}

\usepackage{amsmath,amsfonts,bm}

\def\1{\bm{1}}

\DeclareMathAlphabet{\mathsfit}{\encodingdefault}{\sfdefault}{m}{sl}
\SetMathAlphabet{\mathsfit}{bold}{\encodingdefault}{\sfdefault}{bx}{n}

\usepackage{microtype}
\usepackage{amsmath,amssymb,amsfonts,amsthm,mathtools}
\usepackage{booktabs,tabularx,array,multirow}
\usepackage{float}
\usepackage{enumitem}
\usepackage{graphicx}
\usepackage{xcolor}
\usepackage{tikz}
\usepackage{placeins}
\usepackage{subcaption}
\usetikzlibrary{arrows.meta,calc,positioning,decorations.pathreplacing,
                decorations.markings,shapes.geometric,fit,backgrounds}

\newtheorem{definition}{Definition}
\newtheorem{theorem}{Theorem}
\newtheorem{proposition}{Proposition}
\newtheorem{corollary}{Corollary}

\newcommand{\cO}{\mathcal{O}}

\newcommand{\cX}{\mathcal{X}}
\newcommand{\cE}{\mathcal{E}}

\newcommand{\cT}{\mathcal T}
\newcommand{\cG}{\mathcal G}
\newcommand{\cW}{\mathcal W}

\newcommand{\inner}[2]{\left\langle #1,#2\right\rangle}

\newcommand{\cI}{\mathcal I}

\usepackage{xcolor}
\usetikzlibrary{calc,positioning,decorations.pathreplacing}
\definecolor{degc}{RGB}{106,61,154}
\definecolor{rankc}{RGB}{23,138,88}
\definecolor{ink}{RGB}{60,60,60}
\providecommand{\ind}{\mathbf{1}}
\providecommand{\rankcp}{\operatorname{rank}_{\mathrm{CP}}}

\usepackage{hyperref}
\usepackage{url}

\definecolor{tnblue}{RGB}{45,110,168}
\definecolor{tnorange}{RGB}{206,113,30}
\definecolor{tngreen}{RGB}{40,138,90}
\definecolor{tngray}{RGB}{122,129,138}
\definecolor{tnink}{RGB}{38,42,50}
 
\tikzset{
  >={Latex[length=1.7mm,width=1.5mm]},
  tn/.style       = {draw=tnblue!85!black,fill=tnblue!12,rounded corners=1.8pt,
                     minimum width=8mm,minimum height=7mm,inner sep=1.5pt,
                     font=\scriptsize},
  core/.style     = {draw=tnblue!85!black,fill=tnblue!16,circle,
                     minimum size=4.4mm,inner sep=0pt,font=\tiny},
  bigcore/.style  = {core,minimum size=6.2mm,font=\scriptsize},
  qv/.style       = {draw=tnorange!85!black,fill=tnorange!13,circle,
                     minimum size=4.7mm,inner sep=0pt,font=\tiny},
  qb/.style       = {draw=tnorange!85!black,fill=tnorange!13,rounded corners=1pt,
                     minimum width=4.7mm,minimum height=4.7mm,inner sep=1pt,
                     font=\tiny},
  qcap/.style     = {draw=tnorange!85!black,fill=tnorange!13,
                     isosceles triangle,shape border rotate=90,
                     minimum width=4.4mm,minimum height=3.2mm,inner sep=0pt,
                     font=\tiny},
  loc/.style      = {draw=tngreen!80!black,fill=tngreen!13,rounded corners=.8pt,
                     minimum width=4.6mm,minimum height=4.6mm,inner sep=1pt,
                     font=\tiny},
  copydot/.style  = {circle,fill=tnink,minimum size=1.9mm,inner sep=0pt},
  bond/.style     = {draw=tnink,line width=.55pt},
  qbond/.style    = {draw=tnorange!80!black,line width=.55pt},
  oleg/.style     = {draw=tngreen!70!black,line width=.6pt},
  flow/.style     = {draw=tnink!55,line width=.5pt,->},
  tndim/.style      = {font=\tiny,text=tngray,inner sep=1.3pt},
  tnttl/.style      = {font=\scriptsize\bfseries,text=tnink,align=center},
  tncap/.style      = {font=\tiny,text=tnink,align=center},
  tnhint/.style     = {font=\tiny,text=tnink!70,align=center},
  maskon/.style   = {draw=tnblue!70!black,fill=tnblue!55,minimum size=2.4mm,inner sep=0pt},
  maskoff/.style  = {draw=tnblue!55!black,fill=white,minimum size=2.4mm,inner sep=0pt},
  panelbox/.style = {draw=tnink!15,line width=.4pt,rounded corners=2.5pt},
}
 
\newcommand{\ttchainV}[4]{%
  \foreach \tnk in {1,...,5}{%
    \node[core] (#3\tnk) at ({#1-2*#4+#4*(\tnk-1)},#2) {};}%
  \foreach \tnk in {1,...,4}{%
    \draw[bond] (#3\tnk)--(#3\the\numexpr\tnk+1\relax);}%
}

\title{Tensor Network Datamodels for Training Data Attribution}

\author{
Farzaneh Heidari$^{1,2}$,
Md Rifat Arefin$^{1,2}$,
Louis Jaburi$^{3}$,
Guillaume Rabusseau$^{1,2,4}$\\
$^{1}$Mila -- Quebec AI Institute,
$^{2}$DIRO, Université de Montréal\\
$^{3}$ EleutherAI,
$^{4}$Canada CIFAR AI Chair\\
\texttt{farzaneh.heidari@umontreal.ca}
}

\iclrfinalcopy %

\begin{document}

\maketitle
\lhead{}

\begin{abstract}

Datamodels predict how a model's behavior changes when different parts of its training data are included or removed. However, linear datamodels assign fixed contributions to training
units and cannot represent non-additive joint effects.
We introduce Tensor Network Datamodels (TNDMs), which use low-rank tensor networks to model dependencies among training units while keeping the representation compact.
TNDMs learn a map from subsets of the training data to the outcome of training on those subsets, allowing them to represent interactions among several training units. We prove that complementary and redundant training groups
can induce high-degree subset functions with compact
low-rank representations. 
A single fitted TNDM can then be used for several training data questions, including predicting the effect of removing data, measuring the importance of individual training units, identifying interactions between training units, and selecting useful subsets of the training data.
We evaluate TNDMs on controlled tasks designed to contain strong interactions, as well as standard fine-tuning settings for classification and
reasoning. On a controlled reasoning task, TNDMs outperform every baseline while observing only half as many training subsets,
and TNDM-TT reduces prediction NMSE by $89\%$ relative to the linear
datamodel after observing approximately $0.0061\%$ of all possible
training subsets. In factual-association fine-tuning, a single TNDM-CP
raises held-out $\mathrm{R}^2$ from $0.579$ to $0.668$ over the linear
datamodel. These results demonstrate that compact low-rank models
improve prediction when training data effects depend on combinations
of training groups.
\end{abstract}

\section{Introduction}
At its simplest, training data attribution asks how individual training units contribute to a model's behavior.
However, training data rarely affects a model one example at a time. The usefulness of one training example, document, or data source can depend on what other data was used during training. Two sources may provide similar information, making one less useful when the other is present. In other cases, several sources may only become useful when they appear together, amplify or harm each other~\citep{ghorbani2019data, wang2025data, heo2026interactionaware}.

Understanding these dependencies matters for questions such as which training data contributed to a behavior, what would happen if some data were removed, and which subset of data should be kept. Existing methods approach these questions in different ways. Influence functions approximate how a model prediction changes when the weight of a training example is perturbed \citep{koh2017influence}. TracIn uses gradients recorded along the training trajectory \citep{pruthi2020tracin}, while TRAK provides a scalable gradient-based approach for attributing model predictions to training examples \citep{park2023trak}. MAGIC differentiates through  training  to obtain estimates of the effect of changing training data weights \citep{ilyas2025magic}. All the methods above, when used to predict subset effects by summing fixed
per-example scores, induce additive predictors and fail to capture dependencies.
A training unit's contribution can depend on which other units are
present, particularly when sources provide complementary or redundant
information. We discuss the limitations of predicting joint effects from fixed
per-example scores in Sec.~\ref{sec:examples} and
Appendix~\ref{app:if-trak}.

Instead of directly assigning one score to each training example, datamodels learn to predict how model behavior changes across different subsets of the training data \citep{ilyas2022datamodels}. 
Linear datamodels~\citep{ilyas2022datamodels} assign each training unit a fixed contribution, independent of which other units are present. This formulation supports scalable estimation but cannot capture redundancy or complementarity among training units. Polynomial and kernel datamodels~\citep{zhang2026efficient}
capture nonlinear training data interactions, but explicit
polynomial expansions grow combinatorially with interaction
degree, while standard kernel formulations do not explicitly
exploit low-rank structure among interactions.

To capture high-order dependencies efficiently, we introduce \emph{tensor network datamodels} (TNDMs), which leverage low-rank tensor factorizations to compactly parameterize training data interactions. 
For \(d\) training units, the outcomes over all \(2^d\) training subsets form an order-\(d\) \emph{value tensor}, with one binary mode per training unit. This tensor view naturally suggests a low-rank parameterization.
TNDMs approximate the value tensor through a low-rank factorization learned from observed coalition--outcome pairs. Rather than restricting interaction degree, the factorization constrains the rank of dependence across partitions of the training units, allowing high-order interactions when this dependence admits a compact representation. We consider two parameterizations: TNDM-CP, based on CP decomposition, and TNDM-TT, based on the tensor train
(TT) decomposition.

To make this distinction precise, we characterize datamodels as set functions through two distinct notions of complexity: interaction degree and tensor rank (Sec.~\ref{sec:tndm}). 
We then prove that this regime arises naturally in training data attribution: complementary and redundant training groups can induce high interaction degree while retaining low tensor rank (Sec.~\ref{sec:examples}).

\begin{figure}[t]
\centering
\includegraphics[width=\textwidth]{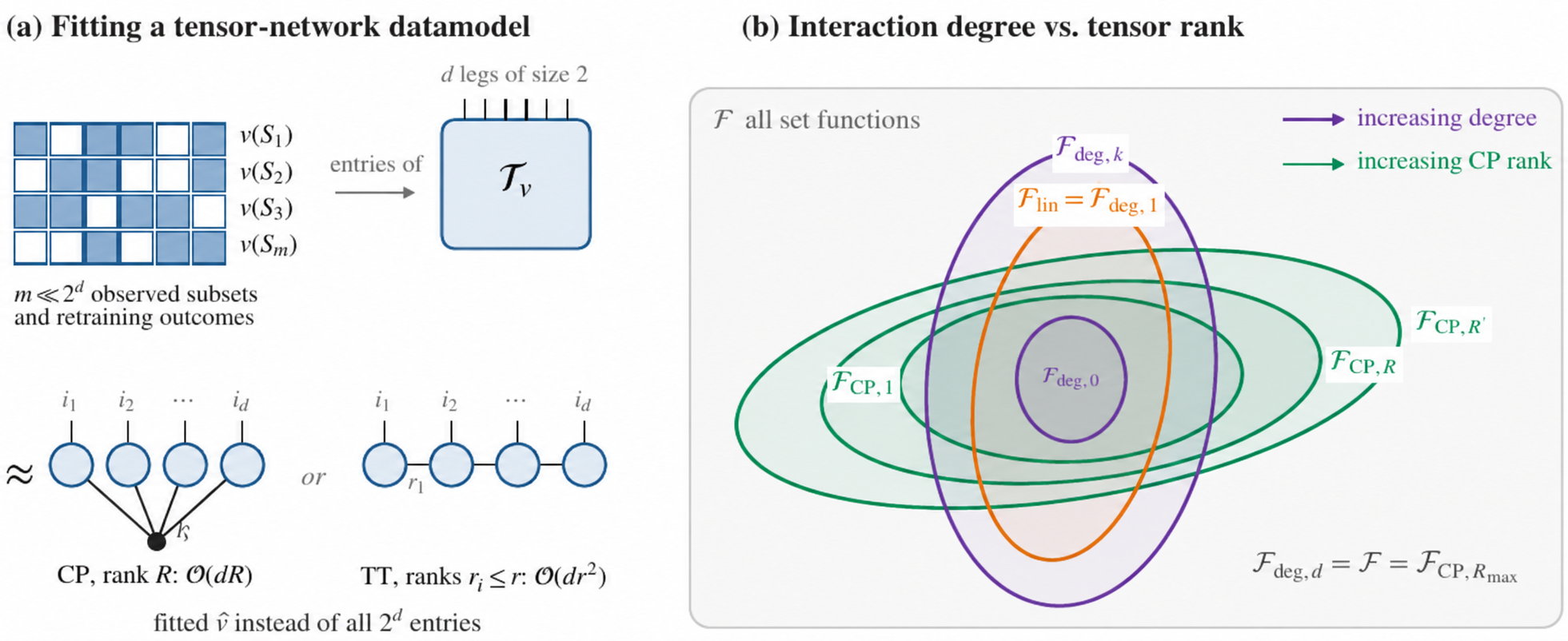}
\caption{\textbf{Tensor-network datamodels.}
(a)~Each observed subset $S_n$ of training groups (a row; shaded groups are included) gives a
retraining outcome $v(S_n)$, one entry of the value tensor $\mathcal T_v$. From $m\ll 2^d$ such
entries, a TNDM fits a low-rank approximation $\widehat v$ of $\mathcal T_v$: a rank-$R$ CP
decomposition (factors joined by a copy tensor, black dot) or a tensor train with ranks $r_j$.
(b)~Interaction degree (purple) and CP rank (green) define two nested families of set functions;
linear datamodels have degree at most one. 
The diagrams illustrate two distinct inductive biases: low interaction degree and low tensor rank, with the latter able to represent high-order interactions compactly.}
\label{fig:overview}

\end{figure}

A single fitted TNDM supports a range of training data attribution
tasks through direct computations on the low-rank representation.
Besides predicting outcomes for unseen training subsets, tensor
contractions yield estimates of individual importance, including
Shapley and Banzhaf values, higher-order interactions, and joint-removal
effects. The attribution computations reuse the learned factors
without additional retraining of the underlying model, while predicted
subset outcomes support data selection.
Figure~\ref{fig:overview} summarizes TNDM;
Appendix~\ref{app:queries} details the attribution computations.

Our experiments demonstrate the practical benefits of capturing
training data interactions through low-rank structure. On controlled
reasoning tasks, TNDMs predict unseen subset outcomes more accurately
than linear, polynomial, and kernel baselines, even with substantially
fewer observed training subsets. TNDM-TT also improves recovery of
complementary training groups and achieves the lowest error in
predicting joint-removal effects. The benefits extend beyond the
controlled reasoning setting: in factual-association fine-tuning,
TNDM-CP outperforms tuned linear, matched-pair, and quadratic
datamodels. Further experiments evaluate prediction and attribution
in LoRA fine-tuning, classifier retraining, and in-context learning
with frozen language models (Sec.~\ref{sec:experiments}).

\paragraph{Our contributions are:}
\begin{enumerate}[leftmargin=*,nosep,topsep=2pt]
    \item
    \textbf{Tensor Network Datamodels.}
    We introduce TNDMs, a family of datamodels representing
training-subset outcomes through low-rank tensors.
TNDMs capture complementarity and redundancy among training units,
including interactions involving many units. Fitted TNDMs also support Shapley, Banzhaf, and interaction
queries without additional retraining.

    \item
\textbf{High interaction degree with low tensor rank.}
For reasoning tasks with complementary or redundant training groups,
we prove that interaction degree can grow while tensor rank remains
bounded. The results establish a theoretical motivation for capturing
high-order training data interactions through low-rank representations
(Secs.~\ref{sec:tndm} and~\ref{sec:examples}).
\item
\textbf{Accurate prediction and attribution from limited observations.}
On a controlled reasoning task, TNDM-CP and TNDM-TT
outperform all evaluated non-tensor baselines while using half as many
observed subsets.
 A single fitted TNDM-CP achieves the lowest mean error
among the compared methods for Shapley values and
pairwise and third-order Shapley interactions
at a shared observation budget.
\end{enumerate}

\section{Datamodels and multilinear representations}
\label{sec:tndm}
In this section, we introduce datamodels and show that every datamodel on subsets of a fixed training collection has a unique affine multilinear representation. This representation has $2^d$ possible terms, but its coefficients may have structure that allows a more compact description. We consider two forms of structure: low interaction degree and low tensor rank.

\subsection{Datamodels as set functions}

Let $N=\{1,\ldots,d\}$ index training datasets
$\mathcal D_1,\ldots,\mathcal D_d$, each containing one or more
examples. For a subset $S\subseteq N$, let
$f_S=\operatorname{Train}(\bigcup_{j\in S}\mathcal D_j)$ denote
the model trained on the selected datasets. A scalar outcome of
$f_S$, such as its accuracy or prediction on an evaluation query,
defines a set function $v:2^N\to\mathbb R$. We denote the set of all such set functions by \(\mathcal{F}\). A datamodel $\widehat v$ learns from observed
subset--outcome pairs to predict $v(S)$ for unobserved
subsets~\citep{ilyas2022datamodels}.
We identify each subset $S$ with its inclusion vector
$\mathbf z\in\{0,1\}^d$, where $z_j=\mathbf 1\{j\in S\}$,
and use $v(S)$ and $v(\mathbf z)$ interchangeably.

An (affine) linear datamodel takes the form
$\widehat v(\mathbf z)=\theta_{\emptyset}
+\sum_{j=1}^d\theta_jz_j$,
where $\theta_{\emptyset}$ is a baseline and $\theta_j$
is the contribution assigned to training group $j$.
Adding group $j$ therefore changes the prediction by the same
amount $\theta_j$, regardless of which other groups are present.

To capture dependencies between training groups, we extend
the linear model with interaction terms. Every set function
has a unique multilinear representation
\begin{equation}
    v(\mathbf z)
    = c_{\emptyset}
    + \sum_{i=1}^d c_{\{i\}}z_i
    + \sum_{i<j} c_{\{i,j\}}z_i z_j
    + \sum_{i<j<k} c_{\{i,j,k\}}z_i z_j z_k
    + \cdots,
    \label{eq:multilinear-expansion}
\end{equation}
where the expansion includes terms up to order $d$.
Pairwise terms allow a group's contribution to depend on
whether another group is present, while higher-order terms
describe interactions involving larger combinations of groups.
The existence and uniqueness of the multilinear representation are
established in Appendix~\ref{app:set-function-foundations}.

Equivalently, Eq.~\eqref{eq:multilinear-expansion} can be written
as \(v(\mathbf{z})=\sum_{A\subseteq N}c_A\prod_{j\in A}z_j\), where
\(A\) indexes the training groups appearing in each monomial. For
\(A=\emptyset\), we adopt the convention
\(\prod_{j\in\emptyset}z_j=1\), so the corresponding term is the
constant \(c_{\emptyset}\). Since every variable appears with exponent
at most one, the representation is affine in each variable when the
others are fixed. We collect the coefficients in the
\emph{coefficient tensor}
\(\mathcal{C}_v\in\mathbb{R}^{2\times\cdots\times2}\), defined by
\((\mathcal{C}_v)_{i_1\cdots i_d}=c_{\{j:i_j=1\}}\), where \(i_j=2\)
selects the constant component in mode \(j\). The multilinear
representation can then be written compactly as\footnote{We use the tensor notation and mode-product
conventions of \citet{kolda2009tensor}, except that contraction with a
vector is written without transposing the vector.}
\[
    v(\mathbf z)
    =
    \mathcal{C}_v
    \times_1\begin{bmatrix}z_1\\1\end{bmatrix}
    \cdots
    \times_d\begin{bmatrix}z_d\\1\end{bmatrix}.
\]
We also define the \emph{value tensor}
\(\mathcal{T}_v\in\mathbb{R}^{2\times\cdots\times2}\) by
\((\mathcal{T}_v)_{i_1\cdots i_d}
=v(\{j:i_j=1\})\). Its entries are therefore the values of \(v\) on
all subsets of \(N\), as illustrated in
Fig.~\ref{fig:overview}(a). The tensors \(\mathcal{C}_v\) and
\(\mathcal{T}_v\) are related by an invertible change of basis applied
independently to each mode and thus have the same CP and TT ranks; see
Appendix~\ref{app:set-function-foundations}.

\subsection{Interaction degree and Tensor rank}
Interaction degree and tensor rank describe two different
forms of structure in the coefficient tensor $\mathcal C_v$.
Interaction degree limits the number of training groups
appearing together in a nonzero term. Tensor rank measures
the complexity of a tensor factorization. We introduce
both notions below; Appendix~\ref{app:set-function-foundations}
provides the full mathematical development and proofs.

\begin{definition}[Interaction degree]
Let $v\in\mathcal F$ have the unique expansion
$v(\mathbf z)=\sum_{A\subseteq N}c_A\prod_{j\in A}z_j$.
For nonzero $v$, the \emph{interaction degree} of $v$ is
$\operatorname{deg}(v)=\max\{|A|:c_A\neq0\}$.
We set $\operatorname{deg}(0)=0$ and denote the class of
set functions of interaction degree at most $k$ by
$\mathcal F_{\deg,k}
=\{v\in\mathcal F:\operatorname{deg}(v)\leq k\}$.
\label{def:degree}
\end{definition}

A datamodel of interaction degree at most $k$ includes
interactions among at most $k$ training groups.
Linear datamodels are exactly the functions of interaction
degree at most one, while
quadratic datamodels only allow interactions involving at
most two training groups.

Alternatively, low tensor rank provides another way to represent
$\mathcal C_v$ compactly. A CP representation expresses the datamodel as a sum of
products of simple affine functions.

\begin{definition}[CP rank]
The \emph{CP rank} of a nonzero set function $v\in\mathcal F$
is the smallest integer $R$ for which
\[
    v(\mathbf z)
    =\sum_{r=1}^R\prod_{j=1}^d
    (\alpha_{rj}+\beta_{rj}z_j),
    \qquad \alpha_{rj},\beta_{rj}\in\mathbb R.
\]
We set $\operatorname{rank}_{\mathrm{CP}}(0)=0$ and write
$\mathcal F_{\mathrm{CP},R}
=\{v\in\mathcal F:\operatorname{rank}_{\mathrm{CP}}(v)\leq R\}$.
\label{def:cp_rank}
\end{definition}

Each rank-one product uses \(2d\) raw parameters. A representation with
\(R\) products therefore uses \(2dR\) raw parameters and can include
interactions of any degree up to \(d\). The product representation in
Definition~\ref{def:cp_rank} is equivalent to a rank-\(R\) CP
decomposition of \(\mathcal{C}_v\), and, through the change of basis
above, of \(\mathcal{T}_v\). Thus, low interaction degree and low CP
rank provide two distinct ways of imposing structure on a datamodel; replacing CP rank with Tucker
(multilinear), TT, or hierarchical Tucker rank would yield alternative
forms of structured low-rank bias.

Interaction degree and CP rank measure different properties.
For example, $v(\mathbf z)=\prod_{j=1}^d z_j$ has interaction
degree $d$ but CP rank one. Figure~\ref{fig:overview}(b) illustrates the
degree-bounded and rank-bounded families.

\subsection{Tensor Network Datamodels}
\label{sec:tndm-model}

We now define the proposed model. Let
\(\operatorname{TN}_{\mathcal G}(\mathbf r)\) denote the family of
tensors represented by a tensor network with topology \(\mathcal G\)
and internal dimensions \(\mathbf r\); that is, by contracting a
collection of smaller parameter tensors arranged according to
\(\mathcal G\).

\begin{definition}[Tensor network datamodel]
    \label{def:tndm}
A \emph{tensor network datamodel} (TNDM) with topology \(\mathcal G\)
and internal dimensions \(\mathbf r\) is a set function of the form
\[
    \widehat v_{\Theta}(\mathbf z)
    =
    \mathcal C_{\Theta}
    \times_1\begin{bmatrix}z_1\\1\end{bmatrix}
    \cdots
    \times_d\begin{bmatrix}z_d\\1\end{bmatrix},
    \qquad
    \mathcal C_{\Theta}
    \in \operatorname{TN}_{\mathcal G}(\mathbf r),
\]
where \(\Theta\) denotes the collection of tensor network factors.
\end{definition}

Given observed coalition--outcome pairs
\(\mathcal O=\{(\mathbf z^{(s)},y_s)\}_{s=1}^n\), where
\(y_s=v(\mathbf z^{(s)})\), we learn the factors by solving
\[
    \widehat\Theta
    \in
    \arg\min_{\Theta}
    \frac{1}{n}\sum_{s=1}^n
    \bigl(
        \widehat v_{\Theta}(\mathbf z^{(s)})-y_s
    \bigr)^2
    +\lambda\Omega(\Theta),
\]
where \(\Omega\) is a regularizer. Ranks and regularization parameters
are selected using held-out coalition observations.

We consider two tensor network parameterizations. In
\emph{TNDM-CP}, \(\mathcal C_{\Theta}\) has CP rank at most \(R\), so
Definition~\ref{def:tndm} reduces to the sum of products of affine
functions in Definition~\ref{def:cp_rank}. In \emph{TNDM-TT},
\(\mathcal C_{\Theta}\) is represented by a chain of tensor cores with
TT ranks \(r_1,\ldots,r_{d-1}\). For uniform rank \(r\), these
parameterizations use respectively \(\mathcal O(dR)\) and
\(\mathcal O(dr^2)\) parameters, rather than the \(2^d\) entries of an
unstructured tensor. Their explicit tensor network constructions are
given in Appendix~\ref{app:tndm-definitions}.

Each observed outcome \(y_s\) is one entry of the value tensor
\(\mathcal T_v\). Fitting a TNDM from \(n\ll 2^d\) such observations
can therefore be viewed as a structured tensor-completion problem.
When the underlying datamodel is approximately low rank, the shared
tensor network factors provide an inductive bias for predicting
unobserved coalitions, in addition to making the representation
tractable.

Crucially, low tensor rank does not restrict the datamodel to
low-order interactions. A TNDM can represent functions with high
interaction degree whenever their interaction coefficients possess a
compact tensor network structure. Section~\ref{sec:examples} shows
that such high-degree, low-rank datamodels arise naturally in data
attribution for \(k\)-hop reasoning tasks.

\section{Low-rank structure in reasoning-based data attribution}
\label{sec:examples}

We now show that high-interaction-degree, low-rank set functions arise
naturally in reasoning-based data attribution.

Let $q_1,\ldots,q_m$ be evaluation queries.
For each query $q_a$, let $P_a\subseteq N$ index the
training groups required to answer $q_a$.
We consider the stylized setting in which
\(f_S\) answers \(q_a\) correctly if and only if \(P_a\subseteq S\).
The resulting average-accuracy datamodel is
$v(S)=\frac{1}{m}\sum_{a=1}^m\mathbf 1\{P_a\subseteq S\}$.

The following theorem allows arbitrary positive query
weights and possibly overlapping support sets.

\begin{theorem}
    \label{thm:support-rank-degree}
Let \(P_1,\ldots,P_m\) be nonempty subsets of \(N\), and let
\(c_1,\ldots,c_m>0\). Consider the set function
$
    v(S)
    =
    \sum_{a=1}^m c_a\,\mathbf{1}\{P_a\subseteq S\},
    \qquad S\subseteq N.
$
Then
$
    \operatorname{deg}(v)=\max_{1\leq a\leq m}|P_a|
    \qquad\text{and}\qquad
    \operatorname{rank}_{\mathrm{CP}}(v)\leq m.
$
If the support sets \(P_1,\ldots,P_m\) are pairwise disjoint, then the
CP-rank bound is tight:
$
    \operatorname{rank}_{\mathrm{CP}}(v)=m.
$
These conclusions remain valid when some elements of \(N\) do not
belong to any support set.
\end{theorem}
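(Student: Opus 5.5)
The plan is to work entirely with the unique multilinear expansion from Appendix~\ref{app:set-function-foundations}. The key observation is that each indicator is a monomial: $\mathbf 1\{P_a\subseteq S\}=\prod_{j\in P_a}z_j$. Hence
\[
v(\mathbf z)=\sum_{a=1}^m c_a\prod_{j\in P_a}z_j=\sum_{A\subseteq N}\Bigl(\sum_{a:\,P_a=A}c_a\Bigr)\prod_{j\in A}z_j .
\]
By uniqueness of the expansion, $c_A=\sum_{a:P_a=A}c_a$. Because the $c_a$ are strictly positive, $c_A\neq 0$ exactly when $A$ equals some $P_a$; no cancellation can occur even if several support sets coincide. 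This gives $\deg(v)=\max_a|P_a|$ directly from Definition~\ref{def:degree}. For the upper bound $\rankcp(v)\le m$, each monomial $\prod_{j\in P_a}z_j$ is a single product of affine factors in the sense of Definition~\ref{def:cp_rank}. Take $(\alpha,\beta)=(0,1)$ for $j\in P_a$ and $(1,0)$ otherwise, and absorb $c_a$ into one factor. Elements of $N$ lying in no $P_a$ simply receive the factor $1$ in every term, so they cause no difficulty.

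For tightness with pairwise disjoint supports, I would first reduce to an affine function by restriction. Pick a representative $j_a\in P_a$ for each $a$, which is possible since $P_a$ is nonempty. Then fix $z_j=1$ for all $j\in P_a\setminus\{j_a\}$ and $z_j=0$ for all $j$ outside $\bigcup_a P_a$. Substituting constants into a product of affine factors gives a product of affine factors in the remaining variables. Hence CP rank cannot increase under this restriction, and by disjointness the restricted function is $g(x_1,\dots,x_m)=\sum_{a=1}^m c_a x_a$ with $x_a=z_{j_a}$. It therefore suffices to prove the following lemma: every affine function $g=b+\sum_{a=1}^m c_a x_a$ with all $c_a\neq0$ satisfies $\rankcp(g)\ge m$. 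This is essentially the rank of the $W$-type tensor, and it is the main obstacle. Flattening arguments fail here because every matricization has rank at most $2$, so a substitution argument is needed instead.

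I would prove the lemma by induction on $m$, with the constant $b$ included so that the induction closes. The case $m=1$ holds because $g$ is nonzero. For the inductive step, suppose $g=\sum_{r=1}^R\prod_{j}(\alpha_{rj}+\beta_{rj}x_j)$ on $\{0,1\}^m$. By uniqueness of the multilinear representation, this identity holds as a polynomial identity and thus for all real inputs. If $\beta_{rm}=0$ for every $r$, then $g$ would not depend on $x_m$, contradicting $c_m\neq0$. So pick $r_0$ with $\beta_{r_0m}\neq0$ and set $t=-\alpha_{r_0m}/\beta_{r_0m}$. Substituting $x_m=t$ kills the $r_0$-th term and leaves at most $R-1$ products of affine factors in $x_1,\dots,x_{m-1}$. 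The resulting function is $(b+c_m t)+\sum_{a<m}c_a x_a$, which is affine with $m-1$ nonzero linear coefficients. By induction $R-1\ge m-1$, so $R\ge m$. Combined with the upper bound, this gives $\rankcp(v)=m$ for disjoint supports.
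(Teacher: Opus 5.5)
Your proof is correct, and it differs from the paper's mainly in how the key lower bound is obtained.

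\textbf{Shared parts.} The degree computation and the rank-\(m\) upper bound match the paper's. The paper writes the same rank-one terms as value-tensor factors \(\mathbf e=[1,0]^\top\) on \(P_a\) and \(\mathbf u=[1,1]^\top\) elsewhere. Your reduction step is also the paper's in disguise. The paper changes basis so that \(\mathbf u\mapsto\mathbf e_0\) and \(\mathbf e\mapsto\mathbf e_1\). It then contracts every non-representative mode with a functional \(\varphi\) satisfying \(\varphi(\mathbf e_0)=\varphi(\mathbf e_1)=1\). Pulled back to the original basis, this functional simply reads off the inclusion coordinate, i.e.\ it fixes \(z_j=1\). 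The result is a weighted \(W_m\)-tensor, which is exactly the coefficient tensor of your restricted function \(\sum_a c_a x_a\). You set variables outside \(\bigcup_a P_a\) to \(0\) rather than \(1\), which makes no difference.

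\textbf{Where you differ.} The paper rescales to the standard \(\mathcal W_m\) and cites Chen et al.\ (Theorem~3(a)) for \(\operatorname{rank}_{\mathrm{CP}}(\mathcal W_m)=m\). You instead prove the lower bound directly by a substitution argument. You strengthen the lemma to affine functions \(b+\sum_a c_a x_a\) so that the induction closes after setting \(x_m=-\alpha_{r_0m}/\beta_{r_0m}\), which kills one product term.

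\textbf{What each route buys.} The paper's route is shorter. However, it imports an external rank result together with two facts: CP rank is invariant under invertible mode-wise maps, and it does not increase under contraction. Your route is self-contained and stays within the sum-of-affine-products language of Definition~\ref{def:cp_rank}. Your lemma also says that any affine function with \(m\) nonzero linear coefficients has CP rank exactly \(m\). Applied after fixing all non-private variables to \(1\), it directly covers the paper's later remark that disjointness can be relaxed to each \(P_a\) having a private element. Your side observation is also right: every unfolding of a \(W\)-type tensor has rank at most \(2\), so flattening bounds cannot work here. That is why either a citation or an argument like yours is genuinely needed.
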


The proof is given in Appendix~\ref{app:set-function-foundations}.

\usetikzlibrary{calc,positioning,decorations.pathreplacing,fit}
\definecolor{degc}{RGB}{106,61,154}
\definecolor{rankc}{RGB}{23,138,88}
\definecolor{ink}{RGB}{60,60,60}
\providecommand{\ind}{\mathbf{1}}
\providecommand{\rankcp}{\operatorname{rank}_{\mathrm{CP}}}

\begin{figure}[t]
\centering
\begin{tikzpicture}[
  font=\small,
  grp/.style={draw=black!80, line width=0.5pt, fill=black!4,
              minimum width=0.62cm, minimum height=0.5cm, inner sep=0pt},
  qry/.style={draw=ink, line width=0.6pt, rounded corners=1.5pt, fill=white,
              minimum width=0.66cm, minimum height=0.44cm, inner sep=1pt},
  sup/.style={ink, line width=0.55pt},
  lab/.style={font=\footnotesize, text=black!65},
  stat/.style={line width=0.6pt, rounded corners=1.5pt, font=\footnotesize,
               inner xsep=4pt, inner ysep=3pt, align=center, minimum width=2.9cm},
  brc/.style={decorate, decoration={brace, mirror, amplitude=3.5pt}, rankc, line width=0.6pt},
  ttl/.style={font=\small\bfseries, anchor=west},
  enc/.style={draw=ink, dashed, line width=0.5pt, rounded corners=2pt, inner sep=3pt},
]
\foreach \i in {1,...,8}{
  \pgfmathsetmacro{\x}{(\i-1)*0.75 + (\i>4)*0.3}
  \node[grp] (D\i) at (\x,0) {$D_{\i}$};
}
\coordinate (g1) at ($(D2)!0.5!(D3)$);
\coordinate (g2) at ($(D6)!0.5!(D7)$);
\coordinate (mida) at ($(D4)!0.5!(D5)$);
\node[ttl] at ($(D1.west)+(-0.6,2.45)$) {(a) Required support sets (Theorem~\ref{thm:support-rank-degree})};

\node[qry] (q1) at ($(g1)+(0,1.3)$) {$q_1$};
\node[qry] (q2) at ($(g2)+(0,1.3)$) {$q_2$};
\foreach \i in {1,...,4} \draw[sup] (q1.south) -- (D\i.north);
\foreach \i in {5,...,8} \draw[sup] (q2.south) -- (D\i.north);
\node[lab, above=0pt of q1] {$P_1=\{1,2,3,4\}$};
\node[lab, above=0pt of q2] {$P_2=\{5,6,7,8\}$};

\def\ey{-1.1}
\node (t1) at ($(g1)+(0,\ey)$) {$c_1\,\textcolor{degc}{z_1z_2z_3z_4}$};
\node (t2) at ($(g2)+(0,\ey)$) {$c_2\,\textcolor{degc}{z_5z_6z_7z_8}$};
\node at ($(mida)+(0,\ey)$) {$+$};
\node[anchor=east] at ($(t1.west)+(0.05,0)$) {$v(S)=$};
\draw[brc] ($(t1.south west)+(0.05,0)$) -- ($(t1.south east)+(-0.05,0)$)
  node[midway, below=3pt, font=\footnotesize\itshape, text=rankc] {one product term};
\draw[brc] ($(t2.south west)+(0.05,0)$) -- ($(t2.south east)+(-0.05,0)$)
  node[midway, below=3pt, font=\footnotesize\itshape, text=rankc] {one product term};

\def\sy{-2.8}
\node[stat, draw=degc, fill=degc!5] at ($(g1)+(0,\sy)$)
  {\textcolor{degc}{interaction degree}\\[1pt]
   $\deg(v)=\max_a|P_a|=\textcolor{degc}{\mathbf{4}}$\\[1pt]
   {\color{black!60} support-set size}};
\node[stat, draw=rankc, fill=rankc!5] at ($(g2)+(0,\sy)$)
  {\textcolor{rankc}{CP rank}\\[1pt]
   $\rankcp(v)=m=\textcolor{rankc}{\mathbf{2}}$\\[1pt]
   {\color{black!60} number of support sets}};

\draw[black!22, line width=0.4pt] (6.45,2.6) -- (6.45,-3.55);

\def\xb{8.05}
\foreach \i in {1,...,4}{
  \pgfmathsetmacro{\x}{\xb + (\i-1)*1.0 + (\i>2)*0.6}
  \node[grp] (E\i) at (\x,0) {$D_{\i}$};
}
\coordinate (midb) at ($(E2)!0.5!(E3)$);
\node[ttl] at (6.75,{2.45}) {(b) Redundant support (Proposition~\ref{prop:coverage-tt})};

\node[enc, fit=(E1)(E2)] (G1) {};
\node[enc, fit=(E3)(E4)] (G2) {};
\node[qry] (q) at ($(midb)+(0,1.3)$) {$q$};
\draw[sup] (q.south) -- (G1.north);
\draw[sup] (q.south) -- (G2.north);
\node[lab, below=1pt of G1] {$G_1=\{1,2\}$};
\node[lab, below=1pt of G2] {$G_2=\{3,4\}$};
\node[lab, align=left, anchor=west] at ($(q.east)+(0.15,0)$) {one group\\from each $G_h$};

\node (eqb) at ($(midb)+(0,\ey-0.15)$)
  {$v(S)=(z_1+z_2-z_1z_2)\,(z_3+z_4-z_3z_4)$};
\node[lab] at ($(midb)+(0,\ey-0.68)$)
  {9 nonzero monomials; highest: $\textcolor{degc}{z_1z_2z_3z_4}$};

\node[stat, draw=degc, fill=degc!5] at ($(midb)+(-1.58,\sy)$)
  {\textcolor{degc}{interaction degree}\\[1pt]
   $\deg(v)=\sum_h|G_h|=\textcolor{degc}{\mathbf{4}}$\\[1pt]
   {\color{black!60} total support size}};
\node[stat, draw=rankc, fill=rankc!5] at ($(midb)+(1.58,\sy)$)
  {\textcolor{rankc}{TT ranks}\\[1pt]
   $r_j\le\textcolor{rankc}{\mathbf{2}}$\\[1pt]
   {\color{black!60} each $G_h$ contiguous}};

\node[font=\small\bfseries] at ($(D1.west)!0.5!(E4.east)+(0,-4.0)$)
  {High interaction degree does not imply high tensor rank.};
\end{tikzpicture}

\caption{\textbf{Interaction degree and tensor rank measure different properties.}
High-order interactions can have low tensor rank.
(a) Each query requires four training groups together,
giving interaction degree four but CP rank two.
(b) A query requires one group from each pair of alternatives,
giving interaction degree four but TT ranks at most two
when each pair is consecutive in the ordering.}
\label{fig:degree-rank}
\end{figure}

\begin{corollary}
    \label{cor:k-hop-rank-degree-main}
The average-accuracy datamodel satisfies
$
    \operatorname{deg}(v)=\max_{1\leq a\leq m}|P_a|
$ and 
$
    \operatorname{rank}_{\mathrm{CP}}(v)\leq m.
$
If the query support sets are pairwise disjoint, then
\(\operatorname{rank}_{\mathrm{CP}}(v)=m\).

In particular, if query \(q_a\) requires \(k_a\) reasoning facts stored
in \(k_a\) distinct datasets, then
\(\operatorname{deg}(v)=\max_a k_a\). If every query is a \(k\)-hop
reasoning task of this form, then \(\operatorname{deg}(v)=k\),
regardless of whether the query support sets overlap.

Datasets with indices outside \(\bigcup_{a=1}^mP_a\) affect
neither the interaction degree nor the CP rank of \(v\).
\end{corollary}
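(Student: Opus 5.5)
The plan is to obtain the corollary as a direct specialization of Theorem~\ref{thm:support-rank-degree}. The average-accuracy datamodel
\[
    v(S)=\frac{1}{m}\sum_{a=1}^m\mathbf 1\{P_a\subseteq S\}
\]
has exactly the form required by the theorem, with weights $c_a=1/m>0$ for every $a$. The theorem assumes nonempty support sets, so I will first check this. In the stylized setting each query requires at least one training group, so $P_a\neq\emptyset$. If some $P_a$ were empty, its indicator would equal $1$ identically. It would then add only the constant $1/m$, which raises neither the degree nor, after absorbing it into another rank-one term, the rank beyond $m$. I will note this only as a remark.

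With the hypotheses checked, the theorem gives $\operatorname{deg}(v)=\max_a|P_a|$ and $\operatorname{rank}_{\mathrm{CP}}(v)\le m$. It also gives $\operatorname{rank}_{\mathrm{CP}}(v)=m$ when the $P_a$ are pairwise disjoint. This settles the first paragraph of the corollary.

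For the $k$-hop statement, I will formalize ``query $q_a$ requires $k_a$ reasoning facts stored in $k_a$ distinct datasets'' as $|P_a|=k_a$. The degree formula then gives $\operatorname{deg}(v)=\max_a k_a$. If every $k_a=k$, this maximum is $k$. The degree formula in the theorem holds without any disjointness assumption, so overlap between support sets does not affect this conclusion. The reason is that $v(\mathbf z)=\frac1m\sum_a\prod_{j\in P_a}z_j$ is already the multilinear expansion. The coefficient of each monomial $\prod_{j\in A}z_j$ is $|\{a:P_a=A\}|/m$, which is never negative, so no top-degree terms can cancel. I will cite the theorem for this rather than reprove it.

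The final claim concerns datasets with indices outside $\bigcup_aP_a$. It is the theorem's last sentence: $v$ does not depend on $z_j$ for such $j$. Formally, the degree is computed from the coefficients $c_A$, which vanish whenever $A\not\subseteq\bigcup_aP_a$. For the rank, each rank-one term can use $\alpha_{rj}=1,\beta_{rj}=0$ on those coordinates. Conversely, any CP decomposition can be restricted by setting $z_j=0$ on those coordinates. This yields a decomposition of the restricted function with no more terms, so the rank is unchanged. The only point that needs care is matching the informal ``$k$-hop'' wording to $|P_a|=k$. Everything else is immediate from Theorem~\ref{thm:support-rank-degree}.
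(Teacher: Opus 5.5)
Your proposal is correct and is essentially the paper's argument: the paper proves the corollary in one line by applying Theorem~\ref{thm:support-rank-degree} with $c_a=1/m$, and your additional checks (nonemptiness of the $P_a$, the nonnegative-coefficient reason why top-degree terms cannot cancel, and the extension/restriction argument for indices outside $\bigcup_a P_a$) are all sound. One small wording fix in your side remark: a constant term coming from an empty $P_a$ cannot in general be absorbed into another product term, but it does not need to be, since it is itself a rank-one term among the $m$.
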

The corollary separates the two complexity measures: interaction degree
is controlled by the largest query-support size, whereas the CP-rank
upper bound is controlled by the number of queries—or, after identical
supports are combined, by the number of distinct support patterns.
Thus, with the number of queries fixed, increasing the number of
reasoning steps can increase interaction degree without increasing the
CP-rank bound.

Similarly, redundant training groups can also produce high-order
interactions with low tensor rank.
Figure~\ref{fig:degree-rank}(b) illustrates a query requiring
one group from each set of alternatives. Proposition~\ref{prop:coverage-tt} 
in Appendix~\ref{app:set-function-foundations} formalizes this case:
let $G_1,\ldots,G_k$ be nonempty, pairwise-disjoint sets
of interchangeable training groups, and suppose that the
query is answered correctly exactly when $S$ contains
at least one group from every $G_h$. Then
$v(S)=\prod_{h=1}^k[1-\prod_{j\in G_h}(1-z_j)]$. The interaction degree is $\sum_{h=1}^k|G_h|$, while
the TT ranks are at most two when the groups in each
$G_h$ are consecutive in the tensor ordering.

Theorem~\ref{thm:support-rank-degree} and the redundant-support example
expose the principal modeling advantage of TNDMs. With the number of
query-support patterns fixed, enlarging the supports can increase
interaction degree without increasing the CP-rank bound. Similarly,
reasoning over interchangeable data sources can produce high-degree
functions with small TT ranks. A degree-bounded datamodel may exclude
such functions, whereas a TNDM can retain the relevant high-order
interactions while controlling complexity through tensor rank.
Additional results on datamodel structure and fixed attribution scores
are provided in Appendices~\ref{app:structural-theory}
and~\ref{app:if-trak}.

\section{Experiments}
\label{sec:experiments}
We experimentally investigate four questions:
(1) Can TNDMs predict the outcome of training on unseen data subsets? (Sec.~\ref{sec:khop-prediction}; Table~\ref{tab:d24-main-readable})
(2) Can TNDMs recover interactions between training groups? (Sec.~\ref{sec:khop-interactions}; Fig.~\ref{fig:khop-evidence})
(3) Do TNDMs remain useful outside our controlled reasoning task? (Sec.~\ref{sec:finetuning-results}; Tables~\ref{tab:sportmusic-prediction} and~\ref{tab:agnews-lora-attribution})
(4) Can one fitted TNDM support several attribution queries? (Sec.~\ref{sec:query-experiments}; Fig.~\ref{fig:main-har-attribution})

All methods receive the same observed subset--outcome pairs, and hyperparameters
are selected using validation subsets only.
Appendices~\ref{app:surrogates} and~\ref{app:protocol} give the full models,
training, and evaluation details.
\subsection{Prediction and joint-removal effects}
\label{sec:khop-experiment}
\label{sec:khop-prediction}

We first use a controlled reasoning task, inspired by the compositional setting of~\citet{wang2024grokked}, where a query specifies a sequence of three relations to compose, e.g.,
\(2\xrightarrow{+4}6\xrightarrow{+7}13\xrightarrow{+3}16\).
Different training groups provide the same relation, so
some groups are redundant, while groups providing different relations are
complementary. Each training subset is used to train a separate transformer from scratch. The full task construction is described in Appendix~\ref{app:protocol}. 

We fit datamodels on observed coalition–outcome pairs and test how well they predict unseen subsets.
On the 24-group task, TNDM-CP and TNDM-TT achieve
lower prediction error, outperforming linear, polynomial, sparse polynomial, and RBF
baselines. Notably, with 512 observed subsets, both TNDMs outperform the
baselines even when they use 1,024 observations
(Table~\ref{tab:d24-main-readable}). 

We also test a joint-removal intervention, removing all groups that teach one required relation and asking
each datamodel to predict the resulting change in
performance. TNDM-TT gives the lowest error among the compared methods
(Table~\ref{tab:d24-main-readable}).

\begin{table}[t]
\centering
\small
\setlength{\tabcolsep}{4pt}
\begin{tabular}{lccc}
\toprule
Datamodel
& NMSE, $m=512$
& NMSE, $m=1{,}024$
& Joint-removal RMSE \\
\midrule
Linear
& $0.687\pm0.001$
& $0.669\pm0.001$
& $2.741\pm0.016$ \\

Degree-2 ridge
& $0.467\pm0.001$
& $0.348\pm0.001$
& $2.125\pm0.007$ \\

Degree-3 ridge
& $0.510\pm0.002$
& $0.342\pm0.002$
& $3.869\pm0.019$ \\

Degree-3 OMP
& $0.635\pm0.016$
& $0.271\pm0.021$
& $6.292\pm0.057$ \\

RBF
& $0.386\pm0.001$
& $0.257\pm0.001$
& $2.100\pm0.035$ \\

TNDM-CP
& $\mathbf{0.148\pm0.026}$
& $0.087\pm0.002$
& $1.939\pm0.258$ \\

TNDM-TT
& $0.188\pm0.004$
& $\mathbf{0.071\pm0.006}$
& $\mathbf{1.395\pm0.043}$ \\
\bottomrule
\end{tabular}
\caption{Prediction and joint-removal effects on the
24-group reasoning task. Entries show means $\pm$ sample
SD over three learner seeds; lower is better.
The last column uses $m=1{,}024$ and measures error in the
utility change when all sources for one required relation
are removed. Bold marks the lowest mean in each column. At 512 observations, both TNDMs outperform every
non-tensor baseline evaluated at 1,024 observations;
TNDM-TT also achieves the lowest joint-removal error.}
\label{tab:d24-main-readable}
\end{table}
In this setting, with three interchangeable providers for each of two required
relations, the ideal coverage function has degree six and CP rank
four, while admitting TT ranks at most two under a provider-contiguous
ordering (Prop.~\ref{prop:coverage-tt};
Thm.~\ref{thm:general-support-structure}).
The prediction and joint-removal gains are consistent with the
practical value of this high-degree, low-rank structure.

\subsection{Recovering interactions and selecting data}
\label{sec:khop-interactions}

We next ask whether the fitted datamodel can identify groups that are useful
together.
In the 12-group task, a useful triple contains one group for each required
relation. We rank all triples using the interaction predicted by each datamodel.
As more subset outcomes are observed, TNDM-TT becomes substantially better at
recovering these complementary triples than the degree-bounded and RBF
baselines (Fig.~\ref{fig:khop-evidence}(a)).

We also test whether individual Shapley values from the fitted datamodel can
select useful groups. At the largest budget, Linear + TNDM-TT, a hybrid model that augments a fitted linear predictor with a TT residual correction (Appendix~\ref{app:surrogate-fitting}), gives the highest
mean completeness among the compared methods
(Fig.~\ref{fig:khop-evidence}(b)). However, the best method changes across
budgets and attribution rules.
Improved recovery of complementary triples supports the practical
value of representing higher-order interactions through low tensor
rank rather than a fixed degree cutoff.
Strong selection performance from Degree-2 does not imply that the
underlying utility function is low degree: a model can select useful subsets without 
accurately reconstructing the full subset function.

\begin{figure}[t]
\centering
\includegraphics[width=\linewidth]{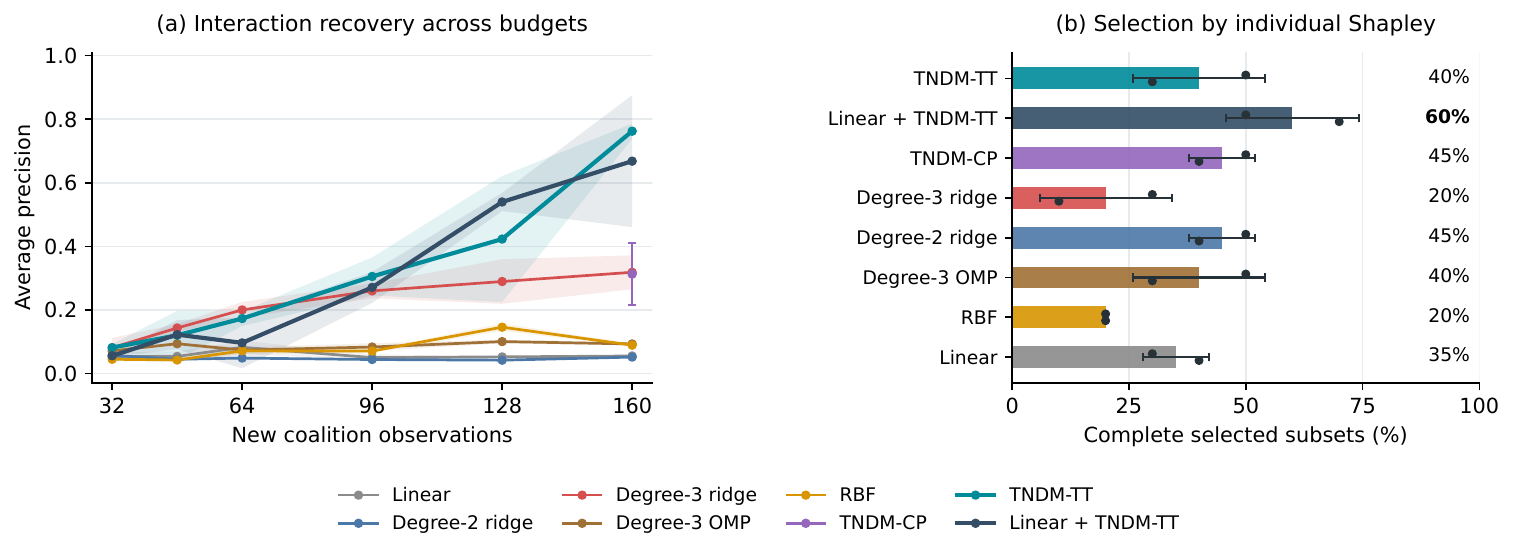}
\caption{Interaction recovery and Shapley-based selection on the 12-group reasoning task.
(a) Average precision for ranking complete triples by predicted third-order differences, across observation budgets.
(b) Completeness of subsets formed by selecting the
three groups with the largest individual Shapley values
at 160 new observations.
Curves and bars show means across two learner seeds, each averaged over ten relation sequences;
bands and whiskers show sample SD, and dots in (b) show both seed means.
 TNDM-TT improves interaction recovery at larger budgets,
while Linear + TNDM-TT achieves the highest mean
selection completeness at 160 new observations.}
\label{fig:khop-evidence}
\label{fig:shapley-selection-main}
\end{figure}

Finally, we rank candidate subsets directly using their predicted utility.
Here several nonlinear methods perform similarly: TNDM-TT, degree-3 ridge, and RBF
all identify complete top-ranked subsets in the primary comparison.
Thus, better interaction recovery does not always imply better subset selection.
The full comparison is given in Appendices~\ref{app:interaction-results} and~\ref{app:shapley-selection}.

\subsection{Pretrained-model fine-tuning}
\label{sec:finetuning-results}

The previous experiments were designed to contain strong interactions.
We therefore test whether TNDMs also help in more standard fine-tuning settings.
\paragraph{Sport--Music.}
We adapt the Sport--Music task of \citet{huang2025generalization}.
Each subset chooses which sport and music examples are used to fine-tune
Qwen2-7B. TNDM-CP predicts held-out subset outcomes better than the linear,
matched-pair, and degree-2 baselines, and an ensemble of TNDMs improves further
(Table~\ref{tab:sportmusic-prediction}). The improvement from a single rank-five CP model supports the
usefulness of the low-rank modeling assumption beyond the
controlled support constructions.

\paragraph{AG News.}
We fine-tune RoBERTa with LoRA on subsets of AG News~\citep{zhang2015character}. Generic text-classification fine-tuning need not exhibit
the low-rank, high-degree interaction structure motivating
the theoretical results. Nevertheless,
TNDMs provide competitive attribution overall, with
TNDM-TT achieving the lowest median Shapley and Banzhaf
errors among the compared fitted datamodels at 128 observations (Table~\ref{tab:agnews-lora-attribution}).
Across-budget results favor degree-truncated and RBF baselines;
comparisons with dedicated attribution estimators appear in
Appendix~\ref{app:finetuning-evidence}.

\begin{table}[t]
\centering
\begin{minipage}[t]{.49\linewidth}
\vspace{0pt}\centering\footnotesize
\setlength{\tabcolsep}{3pt}
\begin{tabular}{@{}lc@{}}
\toprule
Datamodel & Held-out $R^2$ $\uparrow$ \\
\midrule
Linear & $0.579\pm0.048$ \\
Matched-pair & $0.556\pm0.054$ \\
Degree-2 & $0.565\pm0.109$ \\
TNDM-CP, single & $0.668\pm0.084$ \\
TNDM-CP, ensemble & $\mathbf{0.734\pm0.061}$ \\
\bottomrule
\end{tabular}
\caption{Sport--Music prediction: mean $R^2$ $\pm$ sample SD over
15 shared coalition splits. Baselines are tuned. Single-fit TNDM-CP
uses rank five; the ranks 3--6 ensemble uses additional fitting compute.
Bold marks the highest mean. TNDM-CP improves prediction over the linear, matched-pair, and degree-2 baselines on the Sport–Music fine-tuning task.}
\label{tab:sportmusic-prediction}
\end{minipage}\hfill
\begin{minipage}[t]{.49\linewidth}
\vspace{0pt}\centering\footnotesize
\setlength{\tabcolsep}{3pt}
\begin{tabular}{@{}lcc@{}}
\toprule
Datamodel & Banzhaf $\downarrow$ & Shapley $\downarrow$ \\
\midrule
Linear & 0.1701 & 0.3555 \\
Degree-2 & 0.2013 & 0.3721 \\
RBF & 0.1724 & 0.3368 \\
TNDM-CP & 0.1736 & 0.3358 \\
TNDM-TT & \textbf{0.1634} & \textbf{0.3285} \\
\bottomrule
\end{tabular}
\caption{AG News RoBERTa--LoRA attribution at 128 observations:
median relative $\ell_2$ error over six adapter ranks, three learner
seeds, and 20 observation-sampling repeats per combination.
Bold marks the lowest median among these datamodels. TNDM-TT has the lowest median Banzhaf and Shapley error among the fitted datamodels.}
\label{tab:agnews-lora-attribution}
\end{minipage}
\end{table}

\begin{figure}[ht]
\centering
\includegraphics[width=\linewidth]{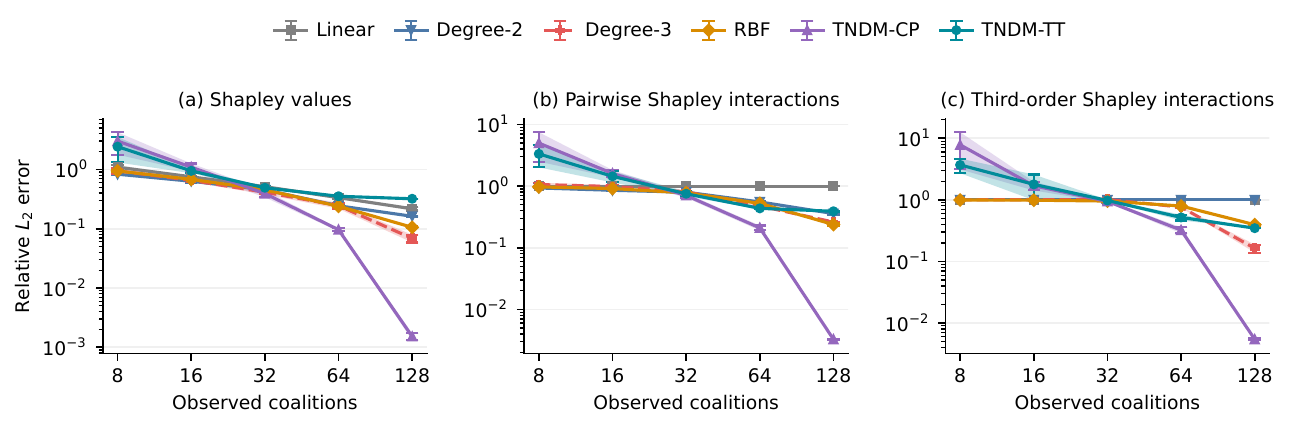}
\caption{ \textbf{Training data attribution on a modular
retraining task.} Relative $\ell_2$ error for Shapley values and pairwise
and third-order Shapley interaction indices over eight
training groups. Exact references use all 256 retrained
coalition outcomes. All queries reuse the same fitted
surrogate, and budgets include validation observations.
Errors are averaged over 20 coalition-sampling repetitions
within each of three learner seeds. Curves show the mean
of the three seed averages; bands and bars show $\pm1$
sample SD across those averages. Both axes are logarithmic;
TNDM-TT uses additive-informed initialization.
At 64 observations, TNDM-CP achieves the lowest mean
error among the plotted methods for all three quantities.}
\label{fig:main-har-attribution}
\end{figure}

\subsection{Several attribution quantities from one fit}
\label{sec:query-experiments}

A fitted TNDM models the subset-to-outcome function, allowing the
same fit to answer several attribution questions. We evaluate this
on a controlled modular retraining task with eight training groups.
For each coalition, we retrain a conditional-logit learner; all
256 coalition outcomes provide exact attribution references.
From each fitted surrogate, we compute Shapley values and pairwise
and third-order Shapley interaction indices
(Fig.~\ref{fig:main-har-attribution}).
TNDM-CP is competitive with the regression baselines at
32 observations. At 64 observations, it achieves the lowest mean error among
the plotted methods for all three quantities.
One fitted CP surrogate thus recovers both individual
contributions and higher-order interactions from the
same limited set of retraining outcomes.
Appendix~\ref{app:modular-attribution} provides the task and evaluation details.
 \paragraph{Additional experiments.}
Appendices~\ref{app:broader-prediction}
and~\ref{app:large-icl} report further prediction comparisons,
with gains that vary across settings.
Appendix~\ref{app:completion-results} reports the enumerated
eight-group prediction experiment, and
Appendix~\ref{app:arithmetic-dynamics} explores attribution
and tensor compressibility during learning.

\section{Related Work}

\paragraph{Datamodels and training data attribution.}
Datamodels predict outcomes from training-subset membership
\citep{ilyas2022datamodels}, supporting model comparison
\citep{shah2023modeldiff} and data selection
\citep{engstrom2024dsdm}.
Data Shapley and Data Banzhaf summarize marginal contributions
\citep{ghorbani2019data,wang2023databanzhaf}, while influence
functions, TracIn, TRAK, and SOURCE estimate training data effects
from fitted models or training trajectories
\citep{koh2017influence,pruthi2020tracin,park2023trak,bae2024source}.
TNDMs use low-rank tensors to capture  joint
effects among complementary and redundant training groups. These
dependencies can remain compact even at high interaction
degree, supporting prediction, attribution, and selection
from one fitted subset function.

\paragraph{Nonlinear datamodels and tensor regression.}
Kernel surrogates model nonlinear training data dependencies
\citep{zhang2026efficient}; TNDMs represent these dependencies
through low-rank interaction tensors.
CP, TT, and hierarchical trees compactly represent high-order
tensors \citep{kolda2009tensor,oseledets2011tt,Grasedyck2010}.
Exponential Machines already use TT coefficients to represent
interactions of all orders \citep{novikov2016exponential}.
TNDMs build on this representation, connecting tensor rank
and interaction degree to training data support structure
and enabling reusable attribution queries from a surrogate
fitted to retraining outcomes.

\paragraph{Tensor networks for attribution.}
TN-SHAP and TN-SHAP-G use tensor network surrogates for
masked-input attribution
\citep{heidari2026tnshap,heidari2026tnshapg}.
In TNDMs, tensor modes index training groups and entries record
outcomes after training on their subsets. The resulting
attributions describe training data interventions rather than
input-feature masking. 
\section{Limitations}
TNDMs offer compact models of training data interactions.
However, empirical gains depend on task structure,
rank selection, and optimization, and retraining remains costly.
Topology also matters: TT requires a suitable group ordering,
while tensor trees require a suitable hierarchy.
Structural knowledge, especially in scientific applications,
can guide these choices.

\section{Conclusion}
Tensor network datamodels capture higher-order training data
interactions through compact low-rank representations.
The theory separates tensor rank from interaction degree
for complementary and redundant training groups.
Experiments demonstrate gains in subset prediction and
interaction recovery, while one fitted surrogate supports
Shapley, Banzhaf, and higher-order attribution.
Low tensor rank offers a practical structural prior for
learning training data effects from limited retraining outcomes.
\section*{Reproducibility Statement}
Proofs and experimental details are provided in the appendix.
Code is available at
\url{https://github.com/farzana0/Tensor_datamodels}.
\section*{Acknowledgment}
Farzaneh Heidari thanks Fazl Barez and Gonçalo Paulo for their helpful discussions, thoughtful feedback, and suggestions throughout this work. 
\section*{AI Use Statement}
Generative AI tools assisted with manuscript editing, verification of the theoretical and experimental results, figure preparation and  coding. The authors take responsibility for the final manuscript,
including its claims, results, and supporting artifacts.
\bibliography{iclr2027_conference}
\bibliographystyle{iclr2027_conference}
\newpage

\appendix

\section{TNDM: Definitions, Properties and Theorems}

\label{app:set-function-foundations}

\subsection{Set functions and tensors}

Let \(N \coloneqq \{1,\ldots,d\}\), where \(d\geq 1\). We are interested in
set functions
\[
    v\colon 2^N \to \mathbb{R}.
\]
We denote by \(\mathcal{F}\) the set of all such functions.

\begin{proposition}
The set \(\mathcal{F}\), endowed with pointwise addition and scalar
multiplication, is a vector space over \(\mathbb{R}\).
\end{proposition}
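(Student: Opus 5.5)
The plan is to recognize \(\mathcal F\) as a special case of the standard function-space construction: for any nonempty set \(X\), the set \(\mathbb R^X\) of all maps \(X\to\mathbb R\) is a real vector space under pointwise operations. Here \(X=2^N\), which is nonempty (it contains \(\emptyset\) and \(N\)). Concretely, I will define \((v+w)(S)\coloneqq v(S)+w(S)\) and \((\lambda v)(S)\coloneqq \lambda\, v(S)\) for \(v,w\in\mathcal F\), \(\lambda\in\mathbb R\), and \(S\subseteq N\).

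The first step is closure. For every \(S\), the right-hand sides are real numbers, so \(v+w\) and \(\lambda v\) are again maps \(2^N\to\mathbb R\). Hence both lie in \(\mathcal F\).

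The second step is to verify the vector-space axioms pointwise. Each identity between elements of \(\mathcal F\) is an equality of functions, so it suffices to check it at an arbitrary \(S\subseteq N\). At that point it reduces to the corresponding field axiom of \(\mathbb R\):
\begin{itemize}
\item associativity and commutativity of addition;
\item distributivity \(\lambda(v+w)=\lambda v+\lambda w\) and \((\lambda+\mu)v=\lambda v+\mu v\);
\item compatibility \((\lambda\mu)v=\lambda(\mu v)\);
\item the unit law \(1\cdot v=v\).
\end{itemize}
For the zero vector I take the zero function \(0(S)=0\), which is the function denoted \(0\) in Definitions~\ref{def:degree} and~\ref{def:cp_rank}. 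The additive inverse of \(v\) is \((-v)(S)=-v(S)\).

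No step is a genuine obstacle. The only point needing care is to state explicitly that equality in \(\mathcal F\) means equality at every subset \(S\). Once that is said, every axiom is inherited directly from \(\mathbb R\).

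I may also note that \(v\mapsto\mathcal T_v\) is a linear bijection from \(\mathcal F\) onto \(\mathbb R^{2\times\cdots\times 2}\). This gives \(\dim\mathcal F=2^d\), which is useful later for the uniqueness of the multilinear representation, though it is not needed for the statement itself.
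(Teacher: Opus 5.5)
Your proof is correct and takes the same approach as the paper: the paper simply observes that the operations are defined pointwise, so every vector-space axiom is inherited from \(\mathbb{R}\), and your version spells out the closure and axiom checks explicitly. Your closing remark that \(v\mapsto\mathcal T_v\) is a linear bijection is also correct (it is the paper's Theorem~\ref{thm:set-functions-tensors}), though, as you note, it is not needed for this statement.
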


\begin{proof}
This follows immediately from the vector space structure of
\(\mathbb{R}\), since the vector space operations are defined pointwise.
\end{proof}

Any set function \(v\in\mathcal{F}\) can be encoded by its
$d$-th order \emph{value tensor}
$ \mathcal{T}_v
    \in
    \mathbb{R}^{2\times\cdots\times 2}
$,
whose entries are defined by
\begin{equation}
    \label{eq:value-tensor}
    (\mathcal{T}_v)_{i_1\cdots i_d}
    \coloneqq
    v\bigl(S(i_1,\ldots,i_d)\bigr),
    \qquad
    S(i_1,\ldots,i_d)
    \coloneqq
    \{j\in N:i_j=1\},
\end{equation}
for \(i_1,\ldots,i_d\in\{1,2\}\).

\begin{theorem}
    \label{thm:set-functions-tensors}
The vector space \(\mathcal{F}\) is isomorphic to the space of $d$-th order tensors
$ \mathcal{T}_v
    \in
    \mathbb{R}^{2\times\cdots\times 2}
$
through the map
\[
    \Phi\colon v\longmapsto \mathcal{T}_v
\]
defined in \eqref{eq:value-tensor}.
\end{theorem}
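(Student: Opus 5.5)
The plan is to verify directly that \(\Phi\) is a linear bijection between \(\mathcal{F}\) and \(\mathbb{R}^{2\times\cdots\times 2}\).

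\emph{Well-definedness.} We first check that \(\Phi\) is well defined. For each multi-index \((i_1,\ldots,i_d)\in\{1,2\}^d\), the set \(S(i_1,\ldots,i_d)=\{j:i_j=1\}\) is a subset of \(N\). Hence \((\mathcal{T}_v)_{i_1\cdots i_d}=v(S(i_1,\ldots,i_d))\) is a real number, and \(\mathcal{T}_v\) is a genuine order-\(d\) tensor with all modes of size two.

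\emph{Linearity.} Take \(v,w\in\mathcal{F}\) and \(\lambda\in\mathbb{R}\). Since the operations on \(\mathcal{F}\) are pointwise, every entry satisfies
\[
(\mathcal{T}_{\lambda v+w})_{i_1\cdots i_d}=\lambda v(S(i_1,\ldots,i_d))+w(S(i_1,\ldots,i_d))=\lambda(\mathcal{T}_v)_{i_1\cdots i_d}+(\mathcal{T}_w)_{i_1\cdots i_d}.
\]
Therefore \(\Phi(\lambda v+w)=\lambda\Phi(v)+\Phi(w)\).

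\emph{Bijectivity.} The key observation is that \(\sigma\colon\{1,2\}^d\to 2^N\), \((i_1,\ldots,i_d)\mapsto S(i_1,\ldots,i_d)\), is a bijection. Its inverse sends a subset \(S\) to the index with \(i_j=1\) if \(j\in S\) and \(i_j=2\) otherwise. To see this, check both compositions:
\begin{itemize}
\item starting from \(S\), the index it produces has \(\{j:i_j=1\}=S\);
\item starting from an index, the subset \(\{j:i_j=1\}\) recovers \(i_j=1\) exactly on that set and \(i_j=2\) elsewhere, since indices take only the values \(1\) and \(2\).
\end{itemize}

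Given this, define \(\Psi\colon \mathcal{T}\mapsto v_{\mathcal{T}}\) with \(v_{\mathcal{T}}(S)=\mathcal{T}_{\sigma^{-1}(S)}\). Then
\[
\Phi(v_{\mathcal{T}})_{\mathbf{i}}=v_{\mathcal{T}}(\sigma(\mathbf{i}))=\mathcal{T}_{\mathbf{i}}
\qquad\text{and}\qquad
\Psi(\mathcal{T}_v)(S)=(\mathcal{T}_v)_{\sigma^{-1}(S)}=v(S),
\]
so \(\Psi\) is a two-sided inverse of \(\Phi\). Alternatively, one can argue by injectivity alone: \(\mathcal{T}_v=0\) forces \(v(S)=0\) for every \(S\) because \(\sigma\) is surjective. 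Combined with \(\dim\mathcal{F}=2^d=\dim\mathbb{R}^{2\times\cdots\times2}\), this gives bijectivity.

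No step is a real obstacle. The only point needing care is the bijectivity of \(\sigma\), specifically the convention that index value \(2\) encodes exclusion, so that every subset is hit exactly once.
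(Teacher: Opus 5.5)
Your proposal is correct and follows essentially the same route as the paper's proof. The paper gets linearity from the pointwise operations, injectivity from the fact that $\mathcal{T}_v$ records $v(S)$ for every $S$, and surjectivity from the explicit preimage $v(S)=\mathcal{T}_{i_1(S)\cdots i_d(S)}$ with $i_j(S)=1$ if $j\in S$ and $2$ otherwise, which is exactly your $\Psi$; your explicit verification that $\sigma$ is a bijection makes precise what the paper leaves implicit.
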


\begin{proof}
The map \(\Phi\) is linear because
\[
    \mathcal{T}_{\alpha v+\beta w}
    =
    \alpha\mathcal{T}_v+\beta\mathcal{T}_w
\]
for every \(v,w\in\mathcal{F}\) and \(\alpha,\beta\in\mathbb{R}\).

It is injective because \(\mathcal{T}_v\) contains the value of \(v(S)\)
for every \(S\subseteq N\). It is surjective because, given any tensor
\(\mathcal{T}\in\mathbb{R}^{2\times\cdots\times2}\), one may define
\[
    v(S)
    \coloneqq
    \mathcal{T}_{i_1(S)\cdots i_d(S)},
    \qquad
    i_j(S)
    \coloneqq
    \begin{cases}
        1, & j\in S,\\
        2, & j\notin S.
    \end{cases}
\]
The resulting set function satisfies \(\mathcal{T}_v=\mathcal{T}\).
Hence \(\Phi\) is a vector space isomorphism.
\end{proof}

\subsection{Set functions, datamodels, linearity, and multilinearity}

A \emph{datamodel}, as introduced by \citet{ilyas2022datamodels}, predicts a
quantity associated with a model trained on a subset of a fixed
collection of datasets. Let \(\mathcal{D}_1,\ldots,\mathcal{D}_d\) be a collection of datasets,
indexed by \(N=\{1,\ldots,d\}\). For every \(S\subseteq N\), let
\[
    f_S
    \coloneqq
    \operatorname{Train}\!\left(\bigcup_{i\in S}\mathcal{D}_i\right)
\]
denote the model trained on the datasets indexed by \(S\). A scalar
quantity associated with \(f_S\), such as its expected output on a
validation distribution,
\[
    v(S)
    \coloneqq
    \mathbb{E}_{x\sim\mathcal{D}_{\mathrm{val}}}
    \bigl[f_S(x)\bigr],
\]
defines a set function \(v\colon 2^N\to\mathbb{R}\). In this sense,
datamodels can be viewed as set functions describing how a model's
behavior depends on the data used to train it. Linear datamodels, as introduced in  \citet{ilyas2022datamodels},
motivate the following definition.

\begin{definition}
A set function \(v\colon 2^N\to\mathbb{R}\) is \emph{linear} if there
exist a parameter vector \(\boldsymbol{\theta}\in\mathbb{R}^d\) and a
bias \(\theta_{\emptyset}\in\mathbb{R}\) such that
\[
    v(S)
    =
    \theta_{\emptyset}
    +
    \sum_{j\in S}\theta_j
\]
for every \(S\subseteq N\). We denote the set of linear set functions
by \(\mathcal{F}_{\mathrm{lin}}\).
\end{definition}

Identifying \(S\subseteq N\) with its indicator vector
\(\mathbf{z}\in\{0,1\}^d\), where
$
    z_j=\mathbf{1}\{j\in S\},
$
a linear set function can equivalently be written as
$
    v(\mathbf{z})
    =
    \theta_{\emptyset}
    +
    \langle\boldsymbol{\theta},\mathbf{z}\rangle.
$
In what follows, we use \(v(S)\) and \(v(\mathbf{z})\) interchangeably.

The class \(\mathcal{F}_{\mathrm{lin}}\) is a vector subspace of
\(\mathcal{F}\). Moreover,
$
    \mathcal{F}_{\mathrm{lin}}\subsetneq\mathcal{F}
$
whenever \(d\geq2\). For example, the set function
$
    v(\mathbf{z})=z_1z_2
$
is not linear.

In the same spirit, we introduce multilinear set functions.

\begin{definition}
A set function \(v\colon 2^N\to\mathbb{R}\) is \emph{multilinear} if
there exists a $d$-th order tensor
\[
    \mathcal{C}
    \in
    \mathbb{R}^{2\times\cdots\times2}
\]
such that
\begin{equation}
    \label{eq:multilinear-representation}
    v(\mathbf{z})
    =
    \mathcal{C}
    \times_1
    \begin{bmatrix}z_1\\1\end{bmatrix}
    \times_2
    \begin{bmatrix}z_2\\1\end{bmatrix}
    \cdots
    \times_d
    \begin{bmatrix}z_d\\1\end{bmatrix}
\end{equation}
for every \(\mathbf{z}\in\{0,1\}^d\).
We denote the set of multilinear\footnote{More precisely, the extension of
\eqref{eq:multilinear-representation} from \(\{0,1\}^d\) to
\(\mathbb{R}^d\) is multiaffine: it is affine in each variable when the
other variables are fixed. By abuse of language, we use the term
``multilinear set function.''} set functions by
\(\mathcal{F}_{\mathrm{ml}}\).
\end{definition}

The tensor \(\mathcal{C}\) in
\eqref{eq:multilinear-representation} is the
\emph{coefficient tensor} of \(v\). It should be distinguished from
the value tensor \(\mathcal{T}_v\) in \eqref{eq:value-tensor}. The two
tensors are nevertheless related by invertible changes of basis.

Indeed, define
\[
    A
    \coloneqq
    \begin{bmatrix}
        1&0\\
        -1&1
    \end{bmatrix}.
\]
For every \(z_j\in\{0,1\}\),
\[
    \begin{bmatrix}
        z_j\\
        1-z_j
    \end{bmatrix}
    =
    A
    \begin{bmatrix}
        z_j\\
        1
    \end{bmatrix}.
\]
The definition of the value tensor gives
\begin{equation}
    \label{eq:value-tensor-evaluation}
    v(\mathbf{z})
    =
    \mathcal{T}_v
    \times_1
    \begin{bmatrix}z_1\\1-z_1\end{bmatrix}
    \cdots
    \times_d
    \begin{bmatrix}z_d\\1-z_d\end{bmatrix}.
\end{equation}
Consequently, the coefficient tensor is
\begin{equation}
    \label{eq:value-coefficient-change-basis}
    \mathcal{C}_v
    =
    \mathcal{T}_v
    \times_1 A^\top
    \times_2 A^\top
    \cdots
    \times_d A^\top.
\end{equation}

\begin{theorem}
    \label{thm:all-set-functions-multilinear}
Every set function is multilinear: 
$
    \mathcal{F}=\mathcal{F}_{\mathrm{ml}}.
$
\end{theorem}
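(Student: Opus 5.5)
The plan is to show \(\mathcal{F}\subseteq\mathcal{F}_{\mathrm{ml}}\) directly by writing down an explicit coefficient tensor for an arbitrary \(v\in\mathcal{F}\). The reverse inclusion \(\mathcal{F}_{\mathrm{ml}}\subseteq\mathcal{F}\) is trivial, since a multilinear set function is by definition a set function on \(2^N\). The natural candidate is the tensor given in \eqref{eq:value-coefficient-change-basis},
\[
    \mathcal{C}
    \coloneqq
    \mathcal{T}_v\times_1 A^\top\times_2 A^\top\cdots\times_d A^\top ,
\]
where \(\mathcal{T}_v\) is the value tensor from \eqref{eq:value-tensor}. This tensor exists for every \(v\) by Theorem~\ref{thm:set-functions-tensors}.

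The key step is to verify \eqref{eq:multilinear-representation} for this \(\mathcal{C}\) at every \(\mathbf{z}\in\{0,1\}^d\).

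\begin{enumerate}
\item First establish \eqref{eq:value-tensor-evaluation}. For \(z_j\in\{0,1\}\), the vector \([z_j,\,1-z_j]^\top\) is the standard basis vector \(e_1\) if \(z_j=1\) and \(e_2\) if \(z_j=0\). Contracting \(\mathcal{T}_v\) with these basis vectors therefore selects the entry indexed by \(i_j=1\) exactly when \(j\in S\), which equals \(v(S)\) by definition of \(\mathcal{T}_v\).
\item Use the identity \([z_j,\,1-z_j]^\top = A\,[z_j,\,1]^\top\), which is checked by direct multiplication.
\item Apply the standard mode-product rule (in the paper's convention, contracting a vector without transposing it): contracting \(\mathcal{T}_v\) in mode \(j\) with \(A\mathbf{x}\) is the same as first applying \(A^\top\) in mode \(j\) and then contracting with \(\mathbf{x}\). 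Concretely, \(\sum_i T_{\dots i\dots}(A\mathbf{x})_i=\sum_k\bigl(\sum_i T_{\dots i\dots}A_{ik}\bigr)x_k\).
\item Since mode products in distinct modes commute, apply step 3 mode by mode to turn \eqref{eq:value-tensor-evaluation} into \(v(\mathbf{z})=\mathcal{C}\times_1[z_1,1]^\top\cdots\times_d[z_d,1]^\top\). This proves multilinearity.
\end{enumerate}

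The only real obstacle is bookkeeping of transposes in the mode-product convention. To be safe, I would do the index computation of step 3 explicitly once, for a single mode, and then invoke commutativity of mode products across different modes.

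As a sanity check, or as an alternative proof, the explicit coefficients can be obtained by Möbius inversion,
\[
    c_A=\sum_{B\subseteq A}(-1)^{|A\setminus B|}\,v(B),
\]
which is exactly what the entries of \(\mathcal{C}\) compute. Expanding \(\prod_{j\in N}\bigl(z_j\,e_1+(1-z_j)\,e_2\bigr)\) recovers the same expansion \(\sum_A c_A\prod_{j\in A}z_j\).
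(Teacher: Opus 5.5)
Your proposal is correct and follows the paper's proof. The paper likewise takes $\mathcal{C}=\mathcal{C}_v=\mathcal{T}_v\times_1A^\top\cdots\times_dA^\top$, starts from the value-tensor evaluation \eqref{eq:value-tensor-evaluation}, and applies the mode-wise change of basis to reach \eqref{eq:multilinear-representation}, with the reverse inclusion immediate from the definition; your explicit check of $\mathcal{T}\times_j(A\mathbf{x})=(\mathcal{T}\times_jA^\top)\times_j\mathbf{x}$ just spells out a step the paper leaves implicit, and it agrees with the paper's mode-product convention.
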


\begin{proof}
Let \(v\in\mathcal{F}\), and let \(\mathcal{T}_v\) be its value tensor.
Equation~\eqref{eq:value-tensor-evaluation} expresses \(v\) using
\(\mathcal{T}_v\). Applying the change of basis
\eqref{eq:value-coefficient-change-basis} yields
\eqref{eq:multilinear-representation} with
\(\mathcal{C}=\mathcal{C}_v\). Thus
\(\mathcal{F}\subseteq\mathcal{F}_{\mathrm{ml}}\). The reverse
inclusion follows directly from the definition of
\(\mathcal{F}_{\mathrm{ml}}\).
\end{proof}

\subsection{Rank and interaction degree of set functions}

Through the isomorphism of
Thm.~\ref{thm:set-functions-tensors}, any notion of tensor rank can
be extended to set functions. We focus here on CP rank.

\begin{definition}
The \emph{CP rank} of a set function \(v\colon2^N\to\mathbb{R}\) is
the CP rank of its value tensor:
\[
    \operatorname{rank}_{\mathrm{CP}}(v)
    \coloneqq
    \operatorname{rank}_{\mathrm{CP}}(\mathcal{T}_v).
\]
For \(R\in\mathbb{N}\), we define
\[
    \mathcal{F}_{\mathrm{CP},R}
    \coloneqq
    \left\{
        v\in\mathcal{F}:
        \operatorname{rank}_{\mathrm{CP}}(v)\leq R
    \right\}.
\]
\end{definition}

This definition is unchanged if the coefficient tensor
\(\mathcal{C}_v\) is used in place of the value tensor. Indeed,
\(A\) in \eqref{eq:value-coefficient-change-basis} is invertible, and
CP rank is invariant under invertible transformations applied
separately to its modes. Therefore,
\begin{equation}
    \label{eq:rank-basis-invariance}
    \operatorname{rank}_{\mathrm{CP}}(v) = \operatorname{rank}_{\mathrm{CP}}(\mathcal{T}_v)
    =
    \operatorname{rank}_{\mathrm{CP}}(\mathcal{C}_v).
\end{equation}

For every fixed \(d\), all tensors in
\(\mathbb{R}^{2\times\cdots\times2}\) have finite CP rank. For example,
the elementary decomposition gives
$
    \operatorname{rank}_{\mathrm{CP}}(\mathcal{T})
    \leq 2^{d-1}.
$
It follows that
$
    \mathcal{F}
    =
    \mathcal{F}_{\mathrm{CP},\,2^{d-1}}.
$

A general set function requires \(2^d\) values to be specified. By
contrast, if \(v\) has CP rank at most \(R\), its value tensor admits a
decomposition
\[
    \mathcal{T}_v
    =
    \sum_{r=1}^R
    \mathbf{a}^{(1)}_r\otimes\cdots\otimes
    \mathbf{a}^{(d)}_r,
    \qquad
    \mathbf{a}^{(j)}_r\in\mathbb{R}^2.
\]
This representation uses \(2dR\) raw parameters, before accounting
for the usual scaling indeterminacies of a CP decomposition. CP rank
therefore induces a notion of model complexity: increasing \(R\)
produces the nested family
\[
    \mathcal{F}_{\mathrm{CP},1}
    \subseteq
    \mathcal{F}_{\mathrm{CP},2}
    \subseteq
    \cdots
    \subseteq
    \mathcal{F}.
\]

A different and equally important notion of complexity is the
interaction degree. For every \(A\subseteq N\), define
\[
    m_A(\mathbf{z})
    \coloneqq
    \prod_{j\in A}z_j,
\]
with the convention \(m_{\emptyset}(\mathbf{z})=1\).

\begin{proposition}
    \label{prop:monomial-basis}
The family
$
    \{m_A:A\subseteq N\}
$
is a basis of \(\mathcal{F}\).
\end{proposition}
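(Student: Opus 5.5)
We need to show that the monomials $\{m_A : A\subseteq N\}$ span $\mathcal F$ and are linearly independent. Since $\dim\mathcal F = 2^d$ by Theorem~\ref{thm:set-functions-tensors}, and the family has exactly $2^d$ members, either property alone would suffice. We prove both directly, because uniqueness of the expansion is what Definition~\ref{def:degree} relies on.

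\emph{Spanning.} Take $v\in\mathcal F$. By Theorem~\ref{thm:all-set-functions-multilinear}, $v(\mathbf z)=\mathcal C_v\times_1[z_1;1]\cdots\times_d[z_d;1]$. Expand the contraction entrywise:
\[
v(\mathbf z)=\sum_{i_1,\ldots,i_d\in\{1,2\}}(\mathcal C_v)_{i_1\cdots i_d}\prod_{j:i_j=1}z_j .
\]
Group the terms by $A=\{j:i_j=1\}$. Each multi-index corresponds to exactly one subset $A$, so this gives $v=\sum_A (\mathcal C_v)_{i(A)}\, m_A$, and $v$ lies in the span of the monomials.

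\emph{Linear independence.} Suppose $\sum_A c_A m_A=0$ as a function on $\{0,1\}^d$. Evaluating at the indicator vector of a set $S$ gives $m_A(\mathbf 1_S)=\mathbf 1\{A\subseteq S\}$, so $\sum_{A\subseteq S}c_A=0$ for every $S$. We show $c_S=0$ by strong induction on $|S|$. For $S=\emptyset$ the equation reads $c_\emptyset=0$. For larger $S$, all $c_A$ with $A\subsetneq S$ vanish by the induction hypothesis, so the equation reduces to $c_S=0$.

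Equivalently, the evaluation matrix $[\mathbf 1\{A\subseteq S\}]_{S,A}$ is unitriangular with respect to any ordering of subsets by size, so it is invertible. Its inverse is given by Möbius inversion, $c_A=\sum_{B\subseteq A}(-1)^{|A|-|B|}v(B)$, which agrees with $\mathcal C_v=\mathcal T_v\times_j A^\top$ in \eqref{eq:value-coefficient-change-basis}.

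There is no real obstacle here. The only point needing care is matching the index convention: $i_j=1$ selects $z_j$ and $i_j=2$ selects the constant component.
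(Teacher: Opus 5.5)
Your proof is correct, but it takes a somewhat different route from the paper's.

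\textbf{The paper's route.} The paper obtains spanning directly from the M\"obius inversion formula $c_A=\sum_{B\subseteq A}(-1)^{|A|-|B|}v(B)$. It states this formula without checking that these coefficients actually reproduce $v$. It then gets independence, and hence uniqueness, from the count $2^d=\dim\mathcal{F}$.

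\textbf{Your route.} You obtain spanning from the already-established multilinear representation (Theorem~\ref{thm:all-set-functions-multilinear}) by expanding the contraction of $\mathcal{C}_v$ entrywise. You prove independence directly from the unitriangularity of the inclusion matrix $[\mathbf{1}\{A\subseteq S\}]_{S,A}$.

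\textbf{Comparison.} The paper's version is shorter and gives the explicit alternating-sum formula for the coefficients up front. Yours is more self-contained in three ways. First, the M\"obius formula appears as a consequence, namely the inverse of the triangular system, rather than as an unchecked assertion. Second, uniqueness of the expansion, which the definition of interaction degree relies on, is proved without appealing to the dimension count. Third, the identification of the monomial coefficients with the entries of $\mathcal{C}_v$ falls out of your spanning step; the paper only asserts this after the proposition.
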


\begin{proof}
Every set function admits the expansion
\begin{equation}
    \label{eq:mobius-expansion}
    v(\mathbf{z})
    =
    \sum_{A\subseteq N}c_A m_A(\mathbf{z}),
\end{equation}
where the coefficients are given by Möbius inversion:
\[
    c_A
    =
    \sum_{B\subseteq A}
    (-1)^{|A|-|B|}v(B).
\]
Hence the family spans \(\mathcal{F}\), and the expansion is unique.
Equivalently, since it contains \(2^d=\dim(\mathcal{F})\) elements, it
is a basis.
\end{proof}

The entries of the coefficient tensor \(\mathcal{C}_v\) are precisely
the coefficients in \eqref{eq:mobius-expansion}. More explicitly,
\[
    (\mathcal{C}_v)_{i_1\cdots i_d}
    =
    c_A,
    \qquad
    A=\{j\in N:i_j=1\},
\]
where \(i_j=2\) selects the constant component in the \(j\)-th mode.

\begin{definition}
Let \(v\in\mathcal{F}\) be nonzero, with the unique expansion
$
    v(\mathbf{z})
    =
    \sum_{A\subseteq N}c_A m_A(\mathbf{z}).
$
The \emph{interaction degree} of \(v\) is
\[
    \operatorname{deg}(v)
    \coloneqq
    \max\bigl\{|A|:c_A\neq0\bigr\}.
\]
For the zero function, we adopt the convention
$
    \operatorname{deg}(0)\coloneqq 0.
$

We denote the set of set functions of interaction degree at most \(k\) by
\[
    \mathcal{F}_{\deg,k}
    \coloneqq
    \left\{
        v\in\mathcal{F}:
        \operatorname{deg}(v)\leq k
    \right\}.
\]
\end{definition}

\begin{proposition}
    \label{prop:linear-order-one}
The following statements hold. 
\begin{itemize}
    \item A set function \(v\) is linear if and only if
$
    \operatorname{deg}(v)\leq 1.
$
In particular, every linear set function has interaction degree at
most one.
\item The interaction degree of an arbitrary set function $v:2^{[d]} \to \mathbb{R}$ is upper bounded by $d$:  $\operatorname{deg}(v)\leq d$.
\end{itemize}

\end{proposition}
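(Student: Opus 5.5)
Both parts of Proposition~\ref{prop:linear-order-one} follow from the uniqueness of the monomial expansion in Proposition~\ref{prop:monomial-basis}. The idea is to compare the linear form directly with that expansion.

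\textbf{First statement, forward direction.} Suppose \(v\) is linear, so \(v(S)=\theta_{\emptyset}+\sum_{j\in S}\theta_j\). In indicator notation this reads
\[
    v(\mathbf z)=\theta_{\emptyset}m_{\emptyset}(\mathbf z)+\sum_{j=1}^d\theta_j m_{\{j\}}(\mathbf z).
\]
This is an expansion of \(v\) in the basis \(\{m_A\}\). By uniqueness, it must be the expansion \eqref{eq:mobius-expansion}. Hence \(c_A=0\) whenever \(|A|\geq2\), and so \(\operatorname{deg}(v)\leq1\). The zero function is linear, with all \(\theta\) equal to zero, and has degree \(0\) by convention, so it is consistent with this.

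\textbf{First statement, converse.} Suppose \(\operatorname{deg}(v)\leq1\). If \(v\neq0\), every nonzero coefficient \(c_A\) has \(|A|\leq1\). The expansion therefore reduces to \(v(\mathbf z)=c_{\emptyset}+\sum_j c_{\{j\}}z_j\). Setting \(\theta_{\emptyset}=c_{\emptyset}\) and \(\theta_j=c_{\{j\}}\) shows that \(v\) is linear. If \(v=0\), take all parameters equal to zero.

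\textbf{Second statement.} The expansion ranges only over \(A\subseteq N\), and every such \(A\) has \(|A|\leq d\). So for nonzero \(v\), the maximum defining \(\operatorname{deg}(v)\) is at most \(d\). For \(v=0\), the degree is \(0\leq d\).

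The only delicate point is that uniqueness must be invoked on \(\{0,1\}^d\), which is where the basis property holds. The zero function also needs its separate convention, handled as above. No genuine obstacle is expected.
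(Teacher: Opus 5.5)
Your proof is correct and follows the paper's own argument. In both, the first statement comes from reading the linear form as an expansion in the monomial basis of Proposition~\ref{prop:monomial-basis}, and the bound $\operatorname{deg}(v)\le d$ comes from $|A|\le d$ for every $A\subseteq N$; you additionally make the appeal to uniqueness and the zero-function convention explicit, which the paper leaves implicit.
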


\begin{proof}
If \(v\in\mathcal{F}_{\mathrm{lin}}\), then
\[
    v(\mathbf{z})
    =
    \theta_{\emptyset}m_{\emptyset}(\mathbf{z})
    +
    \sum_{j=1}^d\theta_jm_{\{j\}}(\mathbf{z}).
\]
Only monomials indexed by sets of cardinality zero or one occur, so
\( \operatorname{deg}(v)\leq1\).

Conversely, if \( \operatorname{deg}(v)\leq1\), its expansion in the
basis of Prop.~\ref{prop:monomial-basis} has the form
\[
    v(\mathbf{z})
    =
    c_{\emptyset}
    +
    \sum_{j=1}^d c_{\{j\}}z_j.
\]
Thus \(v\in\mathcal{F}_{\mathrm{lin}}\).

The second statement follows directly from
\eqref{eq:mobius-expansion}: every monomial is indexed by a subset
\(A\subseteq N\), and therefore satisfies \(|A|\leq d\).
\end{proof}

Similarly, increasing the degree bound produces the nested family
\[
\mathcal F_{\deg,0}
\subseteq
\mathcal F_{\deg,1}
=
\mathcal F_{\mathrm{lin}}
\subseteq
\mathcal F_{\deg,2}
\subseteq\cdots\subseteq
\mathcal F_{\deg,d}
=
\mathcal F.
\]
Increasing the CP-rank bound likewise gives
\[
\mathcal F_{\mathrm{CP},1}
\subseteq
\mathcal F_{\mathrm{CP},2}
\subseteq\cdots\subseteq
\mathcal F.
\]
These two filtrations describe different notions of complexity and,
in general, neither refines the other, as illustrated in
Fig.~\ref{fig:overview}(b).

\subsection{Interaction degree and CP rank in k-hop reasoning data attribution}

We first establish a general result for set functions defined by possibly
overlapping support sets.

\begin{theorem}
    \label{thm:support-rank-degree_app}
Let \(P_1,\ldots,P_m\) be nonempty subsets of \(N\), and let
\(c_1,\ldots,c_m>0\). Consider the set function
\[
    v(S)
    =
    \sum_{a=1}^m c_a\,\mathbf{1}\{P_a\subseteq S\},
    \qquad S\subseteq N.
\]
Then
\[
    \operatorname{deg}(v)=\max_{1\leq a\leq m}|P_a|
    \qquad\text{and}\qquad
    \operatorname{rank}_{\mathrm{CP}}(v)\leq m.
\]
If the support sets \(P_1,\ldots,P_m\) are pairwise disjoint, then the
CP-rank bound is tight:
\[
    \operatorname{rank}_{\mathrm{CP}}(v)=m.
\]
These conclusions remain valid when some elements of \(N\) do not
belong to any support set.
\end{theorem}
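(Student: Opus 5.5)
The plan has three parts: the interaction degree, the CP-rank upper bound, and the matching lower bound when the supports are disjoint. First I would write down the multilinear expansion explicitly. On \(\{0,1\}^d\) we have \(\mathbf 1\{P_a\subseteq S\}=\prod_{j\in P_a}z_j=m_{P_a}(\mathbf z)\). Hence
\[
v=\sum_{a=1}^m c_a m_{P_a}.
\]
Supports may coincide. Grouping equal supports, the coefficient of \(m_A\) in the unique expansion of Prop.~\ref{prop:monomial-basis} is \(\sum_{a:P_a=A}c_a\). Since all \(c_a>0\), this coefficient is strictly positive whenever \(A\) equals some \(P_a\), and it is zero otherwise. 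No cancellation can occur, so \(\deg(v)=\max_a|P_a|\).

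For the upper bound, I would write each term as a product of affine factors, in the form of Definition~\ref{def:cp_rank}:
\[
c_a m_{P_a}(\mathbf z)=\prod_{j=1}^d(\alpha_{aj}+\beta_{aj}z_j).
\]
To do this, take \((\alpha_{aj},\beta_{aj})=(0,1)\) for \(j\in P_a\) and \((1,0)\) for \(j\notin P_a\), and absorb \(c_a\) into one factor. This gives at most \(m\) rank-one terms, so \(\operatorname{rank}_{\mathrm{CP}}(v)\le m\). Indices outside \(\bigcup_a P_a\) only contribute constant factors \(1\), so neither the degree argument nor this construction uses the assumption that every index lies in some support.

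The lower bound for pairwise disjoint supports is the main step. I would use flattening: the CP rank of \(\mathcal T_v\), equivalently of \(\mathcal C_v\) by \eqref{eq:rank-basis-invariance}, is at least the matrix rank of any unfolding. If \(m=1\), the bound \(\operatorname{rank}_{\mathrm{CP}}(v)\ge1\) holds because \(v\neq0\). For \(m\ge2\), I would instead work with the \(m\)-way grouping in which mode group \(a\) collects the variables in \(P_a\), with all remaining variables attached to group \(1\). The coefficient tensor then becomes, up to reshaping, an order-\(m\) tensor
\[
\sum_a c_a\, \mathbf 1\otimes\cdots\otimes \mathbf e_a^{\text{top}}\otimes\cdots\otimes\mathbf 1,
\]
where \(\mathbf e^{\text{top}}\) is the basis vector of the monomial \(z_{P_a}\) and \(\mathbf 1\) denotes the constant basis vector \(e_{\emptyset}\). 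For disjoint \(P_a\), the vectors \(e_{\emptyset}\) and \(e_{P_a}\) are linearly independent in each group, so this is a ``W-state''-type tensor. A single matrix unfolding only gives rank \(2\), which is why the \(m\)-way grouping is needed. Any rank-\(R\) CP decomposition of the original tensor induces a rank-\(\le R\) decomposition of the grouped tensor.

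The obstacle is then the known fact that \(\sum_a e_0\otimes\cdots\otimes e_1^{(a)}\otimes\cdots\otimes e_0\) has CP rank exactly \(m\). I would cite it or prove it by induction on \(m\) via a substitution argument. The induction runs as follows. Contract group \(m\) with a functional \(\phi\), chosen so that \(\phi\) kills the \(m\)-th factor of at least one rank-one term while \(\phi(e_{\emptyset})\neq0\). This forces \(\phi(e_{P_m})=\lambda\) for a suitable value. The result is again a W-type tensor in \(m-1\) groups, using \(c_a\phi(e_\emptyset)\) and a modified first factor; the modified first factor keeps \(e_{P_1}\) independent of \(e_\emptyset\). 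It has at most \(R-1\) terms, so \(R-1\ge m-1\). I would check the modified-coefficient case carefully: \(\lambda\) shifts one component by a multiple of \(e_\emptyset\) but preserves the W-structure. With this, \(\operatorname{rank}_{\mathrm{CP}}(v)=m\).
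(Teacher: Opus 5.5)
Your proposal is correct and follows essentially the paper's proof: the same no-cancellation argument for the degree, the same $m$ rank-one terms for the upper bound, and a reduction of the disjoint case to the $W_m$ tensor, whose CP rank is $m$. The differences are minor: you reach $W_m$ by grouping the modes of each $P_a$, whereas the paper keeps one representative $p_a\in P_a$ and contracts all other modes with the functional $\varphi$; and you prove $\operatorname{rank}_{\mathrm{CP}}(W_m)=m$ by substitution instead of citing Chen et al.\ (in that induction, a term whose $m$-th factor is not parallel to $e_\emptyset$ exists, because a functional vanishing on $e_\emptyset$ but not on $e_{P_m}$ sends the tensor to $c_m e_\emptyset^{\otimes(m-1)}\neq 0$).
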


\begin{proof}
Identifying \(S\subseteq N\) with its indicator vector
\(\mathbf{z}\in\{0,1\}^d\), we have
\[
    v(\mathbf{z})
    =
    \sum_{a=1}^m c_a\prod_{j\in P_a}z_j.
\]
After combining terms with identical support sets, all coefficients
remain positive. Hence no highest-degree monomial can cancel, and
\(\operatorname{deg}(v)=\max_a|P_a|\).

To bound the CP rank, let
\(\mathbf{e}=[1,0]^\top\) and \(\mathbf{u}=[1,1]^\top\). For each
\(a\in\{1,\ldots,m\}\), define
\[
    \mathcal{X}_a
    \coloneqq
    \bigotimes_{j=1}^d\mathbf{x}_{a,j},
    \qquad
    \mathbf{x}_{a,j}
    \coloneqq
    \begin{cases}
        \mathbf{e}, & j\in P_a,\\
        \mathbf{u}, & j\notin P_a.
    \end{cases}
\]
The value tensor of \(v\) is
\(\mathcal{T}_v=\sum_{a=1}^m c_a\mathcal{X}_a\). Since each
\(\mathcal{X}_a\) has CP rank one,
\(\operatorname{rank}_{\mathrm{CP}}(v)\leq m\).

Suppose now that the support sets are pairwise disjoint. Apply an
invertible change of basis in every mode sending
\(\mathbf{u}\) to \(\mathbf{e}_0\) and
\(\mathbf{e}\) to \(\mathbf{e}_1\). Choose one representative
\(p_a\in P_a\) for each support set and contract every other mode with
the linear functional \(\varphi\) satisfying
\(\varphi(\mathbf{e}_0)=\varphi(\mathbf{e}_1)=1\). The resulting tensor is
\[
    \sum_{a=1}^m
    c_a\,
    \mathbf{e}_0^{\otimes(a-1)}
    \otimes\mathbf{e}_1
    \otimes\mathbf{e}_0^{\otimes(m-a)}.
\]
Because every \(c_a\) is nonzero, invertible rescaling of the modes
reduces this tensor to the standard \(W_m\)-tensor, defined by
\[
    \mathcal{W}_m
    \coloneqq
    \sum_{a=1}^m
    \mathbf{e}_0^{\otimes(a-1)}
    \otimes \mathbf{e}_1
    \otimes \mathbf{e}_0^{\otimes(m-a)},
\]
whose CP rank is
\(m\)~\cite[Theorem~3(a)]{chen2010tensor}. Since contraction with linear functionals cannot increase CP rank,
\(\operatorname{rank}_{\mathrm{CP}}(v)\geq m\). Combined with the upper
bound, this gives \(\operatorname{rank}_{\mathrm{CP}}(v)=m\).
\end{proof}

We apply this result to a simplified model of data attribution for
multi-hop reasoning. Let \(\mathcal{D}_1,\ldots,\mathcal{D}_d\) be a
collection of datasets and let \(q_1,\ldots,q_m\) be evaluation
queries. For each query \(q_a\), let \(P_a\subseteq N\) index the
datasets containing the facts required to answer \(q_a\).

For \(S\subseteq N\), let \(f_S\) denote the model trained on
\(\bigcup_{j\in S}\mathcal{D}_j\). We make the simplifying assumption
that \(f_S\) answers \(q_a\) correctly if and only if \(P_a\subseteq S\).
The average accuracy of \(f_S\) is therefore the set function
\[
    v(S)
    =
    \frac{1}{m}
    \sum_{a=1}^m\mathbf{1}\{P_a\subseteq S\}.
\]

\begin{corollary}
    \label{cor:k-hop-rank-degree}
The average-accuracy datamodel satisfies
\[
    \operatorname{deg}(v)=\max_{1\leq a\leq m}|P_a|
    \qquad\text{and}\qquad
    \operatorname{rank}_{\mathrm{CP}}(v)\leq m.
\]
If the query support sets are pairwise disjoint, then
\(\operatorname{rank}_{\mathrm{CP}}(v)=m\).

In particular, if query \(q_a\) requires \(k_a\) reasoning facts stored
in \(k_a\) distinct datasets, then
\(\operatorname{deg}(v)=\max_a k_a\). If every query is a \(k\)-hop
reasoning task of this form, then \(\operatorname{deg}(v)=k\),
regardless of whether the query support sets overlap.

Datasets whose indices lie outside \(\bigcup_{a=1}^mP_a\) affect
neither the interaction degree nor the CP rank of \(v\).
\end{corollary}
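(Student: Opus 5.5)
The plan is to reduce everything to Theorem~\ref{thm:support-rank-degree_app}, applied with equal weights. Set \(c_a=1/m\) for every \(a\). These weights are positive, and the support sets \(P_a\) are nonempty because each query needs at least one fact. So the average-accuracy datamodel
\[
v(S)=\tfrac1m\sum_{a=1}^m\mathbf 1\{P_a\subseteq S\}
\]
is exactly of the form treated there. The theorem then gives \(\operatorname{deg}(v)=\max_a|P_a|\) and \(\operatorname{rank}_{\mathrm{CP}}(v)\leq m\) directly. It also gives equality \(\operatorname{rank}_{\mathrm{CP}}(v)=m\) when the supports are pairwise disjoint. Overlaps are harmless for the degree: after merging identical supports, the coefficients stay positive, so no top-degree monomial can cancel.

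For the \(k\)-hop specialization, the key observation is that if \(q_a\) needs \(k_a\) facts stored in \(k_a\) distinct datasets, then \(|P_a|=k_a\). Hence \(\operatorname{deg}(v)=\max_a k_a\), and this equals \(k\) when every \(k_a=k\). No disjointness is needed here, because the degree formula in Theorem~\ref{thm:support-rank-degree_app} already allows overlapping supports. One point to state explicitly is that this relies on the facts lying in distinct datasets; otherwise \(|P_a|\) could be smaller than \(k_a\).

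For the final claim about datasets outside \(U=\bigcup_a P_a\), I would argue as follows.

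\textbf{Degree.} The multilinear expansion \(\frac1m\sum_a\prod_{j\in P_a}z_j\) contains no variable \(z_j\) with \(j\notin U\). So the degree formula does not depend on such indices, and adding or removing them leaves \(\max_a|P_a|\) unchanged.

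\textbf{CP rank.} Let \(v'\) be the restriction of \(v\) to subsets of \(U\). In the construction from the proof of Theorem~\ref{thm:support-rank-degree_app}, \(\mathcal T_v\) equals \(\mathcal T_{v'}\) tensored with \(\mathbf u=[1,1]^\top\) in every mode \(j\notin U\). Tensoring with a fixed nonzero vector preserves CP rank. The upper bound is clear, since one factor can be appended to each rank-one term. For the lower bound, contract each extra mode with a functional \(\varphi\) satisfying \(\varphi(\mathbf u)=1\); this recovers \(\mathcal T_{v'}\) and cannot increase CP rank. Therefore \(\operatorname{rank}_{\mathrm{CP}}(v)=\operatorname{rank}_{\mathrm{CP}}(v')\).

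The only step needing real care is this last rank-invariance argument; everything else is immediate from the theorem. I expect it to go through by the same contraction argument already used in the proof of Theorem~\ref{thm:support-rank-degree_app}.
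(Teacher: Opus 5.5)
Your proposal is correct and follows the paper's proof, which is exactly the one-line reduction to Theorem~\ref{thm:support-rank-degree_app} with $c_a=1/m$. Your explicit argument that $\mathcal T_v=\mathcal T_{v'}\otimes\mathbf u^{\otimes(d-|U|)}$ (up to mode ordering) has the same CP rank as $\mathcal T_{v'}$ is a worthwhile addition: for overlapping supports the theorem only gives the upper bound $m$, so it does not by itself show that unused datasets leave the exact CP rank unchanged.
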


\begin{proof}
Apply Thm.~\ref{thm:support-rank-degree} with \(c_a=1/m\) for every
\(a\).
\end{proof}

This result provides a direct motivation for Tensor Network Datamodels
(TNDMs). TNDMs fit multilinear set functions while imposing a low-rank
structure on their associated tensors, with CP rank being one possible
choice. Besides making the representation computationally tractable,
this structure provides an inductive bias for generalization: when the
underlying datamodel is approximately low rank, observations from a
limited number of training subsets can inform predictions on unseen
subsets, for essentially the same reason that low-rank structure enables
tensor completion. The strength of TNDMs therefore lies in their ability
to represent set functions that are low rank but may nevertheless have
high interaction degree, as can naturally occur in realistic
data-attribution problems involving \(k\)-hop reasoning over facts
distributed across multiple data sources.

Note that the pairwise-disjointness assumption can be relaxed. The same
CP-rank conclusion holds whenever each support \(P_a\) contains a
private dataset, that is, an element
\[
    p_a\in P_a\setminus\bigcup_{b\neq a}P_b.
\]
Indeed, retaining one private mode per support in the contraction
argument again produces a weighted \(W_m\)-tensor.

Without such an assumption, the CP-rank bound need not be tight. For
example, the set function
\(v(\mathbf{z})=z_1+z_1z_2=z_1(1+z_2)\) has two overlapping support
sets but CP rank one.

\subsection{Coverage with redundant supporting groups}

\begin{proposition}[Tensor-train ranks of a coverage function]
\label{prop:coverage-tt}
Let $G_1,\ldots,G_k$ be nonempty, pairwise-disjoint subsets
of $N$, and define
\[
    c(S)
    =
    \prod_{h=1}^{k}
    \left[1-\prod_{i\in G_h}(1-z_i)\right],
    \qquad z_i=\mathbf 1\{i\in S\}.
\]
Then
\[
    \deg(c)=\sum_{h=1}^{k}|G_h|.
\]
In any ordering in which each $G_h$ forms a contiguous block,
the value tensor of $c$ has TT ranks at most two.
Groups outside $\bigcup_h G_h$ affect neither conclusion.
\end{proposition}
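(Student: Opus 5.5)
The proof splits into two independent parts: the degree computation via the monomial expansion, and the TT-rank bound via an explicit matrix-product construction over the chosen ordering.

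\textbf{Degree.} Write $g_h(\mathbf z)=1-\prod_{i\in G_h}(1-z_i)$. Expanding the product, $g_h=\sum_{\emptyset\neq B\subseteq G_h}(-1)^{|B|+1}\prod_{i\in B}z_i$. This is a multilinear polynomial supported only on variables in $G_h$, and its top monomial $\prod_{i\in G_h}z_i$ has coefficient $(-1)^{|G_h|+1}\neq0$.

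Because the $G_h$ are pairwise disjoint, the product $c=\prod_h g_h$ of such polynomials is again multilinear: no variable is squared, so no reduction $z_i^2=z_i$ is needed. Its expansion is therefore the unique expansion of Prop.~\ref{prop:monomial-basis}. In this expansion, the monomial indexed by $\bigcup_h G_h$ arises only as the product of the top monomials of the factors, so its coefficient is $\prod_h(-1)^{|G_h|+1}\neq0$. Every monomial is indexed by a subset of $\bigcup_h G_h$, so no larger monomial can appear. Hence $\deg(c)=\sum_h|G_h|$.

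Variables outside $\bigcup_h G_h$ never appear in $c$. They therefore do not change this count.

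\textbf{TT ranks.} I will work with the value tensor $\mathcal T_c$ and build an explicit TT (matrix product) representation with bond dimension at most two. Fix an ordering in which each $G_h$ is a contiguous block; unused indices may sit anywhere. The construction uses a two-state automaton reading $z_1,\dots,z_d$ in order. The state records whether the current block has already been covered, i.e.\ whether some $z_i=1$ has been seen within it.

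For each position $j$ I define $2\times2$ slices $M_j(z_j)$ as follows.
\begin{itemize}
\item If $j$ lies in a block $G_h$ and is not its first element, the states are $\{\text{uncovered},\text{covered}\}$. The transition sends uncovered to covered when $z_j=1$, keeps uncovered when $z_j=0$, and keeps covered in both cases.
\item At the first element of a block, I reset the state to uncovered. Any path arriving in the uncovered state from the previous block is killed by a zero entry, since that block was not covered.
\item At positions outside every block, I use the identity on the states.
\end{itemize}
The boundary vectors start in the covered state (a vacuous previous block) and, at the end, accept only the covered state. The product $\mathbf a^\top M_1(z_1)\cdots M_d(z_d)\mathbf b$ then equals $c(\mathbf z)$, since each block must be covered for a surviving path to exist.

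Each bond has dimension at most two, so $\mathcal T_c$ has TT ranks at most two. More carefully: the TT rank $r_j$ equals the rank of the unfolding separating modes $1..j$ from $j+1..d$, and this rank is bounded by the bond dimension.

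\textbf{Main obstacle.} I expect the bookkeeping at block boundaries to be the main obstacle. A cleaner route that avoids it is a direct unfolding argument. Fix a cut between $j$ and $j+1$. At most one block $G_h$ straddles the cut, and $c=L(\mathbf z_{\le j})\,R(\mathbf z_{>j})$ times that block's factor. Writing $g_h=1-u(\mathbf z_{\le j})\,w(\mathbf z_{>j})$, where $u$ and $w$ are the products of $(1-z_i)$ over the two halves of $G_h$, gives
\[
c=(L)(R)-(Lu)(Rw),
\]
a sum of two separable terms. If no block straddles the cut, a single separable term suffices. Hence every unfolding has rank at most two, and by \citet{oseledets2011tt} the TT ranks are at most two. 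I would present this unfolding argument as the main proof and keep the automaton only as intuition.
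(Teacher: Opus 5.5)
Your proof is correct and takes the same approach as the paper. The degree argument is the paper's top-monomial observation, and your remark that a product of polynomials in disjoint variable sets is already multilinear (hence is the unique expansion) makes explicit a detail the paper leaves implicit. Your unfolding argument, $c=LR-(Lu)(Rw)$ at a cut that splits one block and rank one otherwise, combined with the identification of minimal TT ranks with prefix-unfolding ranks, is exactly the paper's proof. If you keep the automaton as intuition, the first-element core must still read $z_j$ (covered $\to$ covered if $z_j=1$, covered $\to$ uncovered if $z_j=0$, uncovered $\to 0$); a literal ``reset to uncovered'' would ignore that variable.
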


\begin{proof}
Each factor has degree $|G_h|$ and a nonzero coefficient
on the product of all variables in $G_h$. Since the sets
are disjoint, their product has a nonzero coefficient on
$\prod_{i\in\cup_h G_h}z_i$, establishing the degree claim.

Consider a prefix cut in the stated ordering, with selected
groups $S_L$ and $S_R$ on its two sides. A cut between
support blocks separates the function into a product of
a left function and a right function, so its unfolding
has rank one.

A cut within a block splits at most one support set.
Let $u(S_L)$ and $w(S_R)$ indicate that no group has been
selected from the left and right portions of that block,
respectively. The remaining support blocks contribute
factors $a(S_L)$ and $b(S_R)$, giving
\[
    c(S_L\cup S_R)
    =
    a(S_L)b(S_R)
    -
    a(S_L)u(S_L)b(S_R)w(S_R).
\]
This is a sum of two separable terms, so the unfolding
has rank at most two. The minimal TT ranks equal the
ranks of the prefix unfoldings, proving the claim.
Groups outside the support sets contribute constant
modes and do not change the argument.
\end{proof}

\section{TNDM: fitting and Parameterizations}
\label{app:tndm-definitions}

This appendix gives the explicit CP and tensor-train
parameterizations used by the TNDMs of Sec.~\ref{sec:tndm}.
Throughout, we use the coefficient-tensor representation
\[
    \widehat v_\Theta(\mathbf z)
    =
    \mathcal C_\Theta
    \times_1
    \begin{bmatrix}z_1\\1\end{bmatrix}
    \cdots
    \times_d
    \begin{bmatrix}z_d\\1\end{bmatrix}.
\]
By the invertible mode-wise change of basis described in
Appendix~\ref{app:set-function-foundations}, the same tensor network
ranks are obtained if the corresponding value tensor is used instead.

\subsection{CP Parameterization}

A rank-$R$ CP tensor has the form
\[
    \mathcal C_\Theta
    =
    \sum_{r=1}^{R}
    \mathbf a_r^{(1)}
    \otimes\cdots\otimes
    \mathbf a_r^{(d)},
    \qquad
    \mathbf a_r^{(j)}\in\mathbb R^2.
\]
Consequently,
\[
    \widehat v_\Theta(\mathbf z)
    =
    \sum_{r=1}^{R}
    \prod_{j=1}^{d}
    \left(
        a_{r,1}^{(j)} z_j
        +
        a_{r,2}^{(j)}
    \right).
\]
Thus TNDM-CP is exactly the sum-of-products representation of
Definition~\ref{def:cp_rank}. It uses $2dR$ parameters and evaluating
one coalition requires $O(dR)$ operations.
\subsection{Tensor-Train Parameterization}

For an ordering $\pi$ of the training units, a tensor-train represents
the coefficient tensor using cores
\[
    G^{(j)}
    \in
    \mathbb R^{r_{j-1}\times 2\times r_j},
    \qquad
    r_0=r_d=1.
\]
Writing $G^{(j)}[i]$ for the matrix obtained by fixing the physical
index to $i\in\{1,2\}$, the corresponding datamodel is
\[
    \widehat v_\Theta(\mathbf z)
    =
    \bigl(z_{\pi(1)}G^{(1)}[1]+G^{(1)}[2]\bigr)
    \bigl(z_{\pi(2)}G^{(2)}[1]+G^{(2)}[2]\bigr)
    \cdots
    \bigl(z_{\pi(d)}G^{(d)}[1]+G^{(d)}[2]\bigr),
\]
where each factor contracts a coefficient-tensor core with
$[z_{\pi(j)},1]^\top$.

Equivalently, after transforming the cores to the value-tensor basis,
the $j$-th factor is $\widetilde G^{(j)}[2-z_{\pi(j)}]$,
where index $1$ denotes inclusion and index $2$ denotes exclusion.
The two representations are related by the invertible local change
of basis of Appendix~\ref{app:set-function-foundations}.

If all internal ranks satisfy $r_j\le r$, the model uses
$O(dr^2)$ parameters and evaluates a coalition in $O(dr^2)$ time.

Unlike CP rank, TT rank depends on the ordering of the tensor modes.
For a fixed ordering, each TT bond corresponds to a prefix
bipartition of the training units. The minimum exact bond dimension
at that cut equals the rank of the corresponding matrix unfolding;
we state this explicitly in Appendix~\ref{app:proofs-representation}.
\subsection{Fitting from Coalition Observations}

Both parameterizations are fitted using the same observed
coalition--outcome pairs as the other datamodels. Given
\[
    \mathcal O
    =
    \{(\mathbf z^{(s)},y_s)\}_{s=1}^{n},
\]
we optimize the regularized least-squares objective of
Sec.~\ref{sec:tndm},
\[
    \widehat\Theta
    \in
    \arg\min_\Theta
    \frac1n\sum_{s=1}^{n}
    \left(
        \widehat v_\Theta(\mathbf z^{(s)})-y_s
    \right)^2
    +
    \lambda\Omega(\Theta).
\]
Ranks and regularization parameters are selected using held-out
coalition observations.

Figure~\ref{fig:completion} illustrates this tensor-completion
view for a tensor-train: observed coalition outcomes determine a
subset of tensor entries, the TT factors are fitted on those entries,
and the fitted network predicts the remaining coalition values.

\begin{figure}[t]
\centering
\begin{tikzpicture}[
  font=\small,
  chip/.style={draw=black!45,line width=0.3pt,minimum size=2.4mm,inner sep=0pt},
  mbox/.style={rounded corners=0.6mm,draw=black!30,fill=black!4,
               minimum width=8.4mm,minimum height=4.8mm,inner sep=0pt,font=\scriptsize},
  msel/.style={mbox,draw=blue!60,fill=blue!14,line width=0.7pt},
  flow/.style={-{Stealth[length=2.4mm,width=2mm]},line width=0.9pt,draw=black!70},
  sel/.style={draw=blue!65,line width=1.1pt},
  pick/.style={draw=black!40,line width=0.4pt,dotted},
  stage/.style={font=\footnotesize\bfseries},
  sub/.style={font=\scriptsize,text=black!65},
  hd/.style={font=\scriptsize\bfseries,text=black!75},
]
\def\cs{0.26}  %

\def\mkrow#1#2#3{%
  \foreach \b [count=\c from 0] in {#2}{%
    \ifnum\b=1 \fill[blue!45] (\c*\cs,#1) rectangle ++(\cs,-\cs);\fi
    \draw[black!35,line width=0.3pt] (\c*\cs,#1) rectangle ++(\cs,-\cs);}%
  \node[sub,anchor=west,text=black!80] at (5*\cs+0.16,{#1-0.5*\cs}) {#3};}

\node[stage,anchor=south west] at (0,1.30) {(a) what we measure};
\node[hd,anchor=south west] at (0,0.66) {subset $S$};
\node[hd,anchor=south west] at (5*\cs+0.16,0.66) {$v(S)$};
\draw[black!40,line width=0.3pt] (0,0.62) -- (1.95,0.62);
\mkrow{0.46}{1,0,1,1,0}{$0.71$}
\mkrow{0.20}{0,1,1,0,1}{$0.55$}
\mkrow{-0.06}{1,1,0,1,0}{$0.83$}
\mkrow{-0.32}{0,0,1,1,1}{\textcolor{black!35}{?}}
\mkrow{-0.58}{1,1,1,0,1}{\textcolor{black!35}{?}}
\node[sub] at (0.65,-1.00) {$\vdots$};
\node[sub,anchor=north,align=center] at (0.95,-1.22)
  {$2^{d}$ rows in all;\\one retraining each};

\draw[flow] (2.02,-0.06) -- ++(0.38,0);

\begin{scope}[shift={(3.15,0)}]
  \node[stage,anchor=south] at (2.90,1.30) {(b) how a row is predicted};

  \node[sub,anchor=east]  at (0.42,0.80)  {mask};
  \node[hd,anchor=east]   at (0.42,0.10)  {absent};
  \node[hd,anchor=east]   at (0.42,-0.62) {present};
  \node[sub,anchor=east]  at (0.42,-1.06) {unit};

  \foreach \i/\z in {0/1,1/0,2/1,3/1,4/0}{
    \pgfmathtruncatemacro{\u}{\i+1}
    \pgfmathsetmacro{\x}{0.98+\i*1.04}
    \ifnum\z=1 \node[chip,fill=blue!45] (c\i) at (\x,0.80) {};
    \else       \node[chip,fill=white]  (c\i) at (\x,0.80) {};\fi
    \ifnum\z=0
      \node[msel] (s\i) at (\x,0.10)  {$G_{\u}[2]$};
      \node[mbox]        at (\x,-0.62) {$G_{\u}[1]$};
    \else
      \node[mbox]        at (\x,0.10)  {$G_{\u}[2]$};
      \node[msel] (s\i) at (\x,-0.62) {$G_{\u}[1]$};
    \fi
    \draw[pick] (c\i) -- (s\i);
    \node[sub] at (\x,-1.06) {\u};
  }
  \foreach \i in {0,1,2,3}{\pgfmathtruncatemacro{\j}{\i+1}\draw[sel] (s\i) -- (s\j);}

  \node[sub,anchor=north,align=center] at (2.90,-1.50)
    {the mask selects one matrix per unit;\\$\widehat{v}(S)$ is the product along the blue path};
\end{scope}

\draw[flow] (8.85,-0.06) -- ++(0.45,0);

\begin{scope}[shift={(9.45,0)}]
  \node[stage,anchor=south east] at (1.95,1.30) {(c) what we get};
  \node[hd,anchor=south west] at (0,0.66) {subset $S$};
  \node[hd,anchor=south west] at (5*\cs+0.16,0.66) {$\widehat{v}(S)$};
  \draw[black!40,line width=0.3pt] (0,0.62) -- (1.95,0.62);
  \mkrow{0.46}{1,0,1,1,0}{$0.70$}
  \mkrow{0.20}{0,1,1,0,1}{$0.56$}
  \mkrow{-0.06}{1,1,0,1,0}{$0.82$}
  \mkrow{-0.32}{0,0,1,1,1}{\textcolor{blue!60!black}{$0.62$}}
  \mkrow{-0.58}{1,1,1,0,1}{\textcolor{blue!60!black}{$0.94$}}
  \node[sub] at (0.65,-1.00) {$\vdots$};
  \node[sub,anchor=north,align=center] at (0.95,-1.22)
    {every row predicted;\\blue: never retrained};
\end{scope}

\draw[black!25,line width=0.3pt] (0,-2.16) -- (11.40,-2.16);
\node[stage,anchor=north west] at (0,-2.28) {how it is fitted};
\node[anchor=north,align=center,font=\small] at (6.30,-2.26)
  {$\displaystyle
     \min_{\{G_i\}}\ \frac{1}{m}\sum_{n=1}^{m}\bigl(\widehat{v}(\mathbf{z}^{(n)})-y_n\bigr)^{2}
     \;+\;\lambda\,\Omega(\{G_i\})$};
\node[sub,anchor=north,align=center] at (5.90,-3.22)
  {least squares over the matrix entries, on the measured rows only ($\approx 2dr^{2}$ unknowns, not $2^{d}$);\\
   the rank $r$, the network topology, and $\lambda$ are chosen on held-out validation coalitions};

\end{tikzpicture}
\caption{
\textbf{Fitting a TNDM as tensor completion.}
(a) Each observed coalition provides one entry of the coalition tensor.
(b) The fitting objective in Eq.~\ref{eq:tndm-objective} estimates the parameters of a low-rank
tensor network to the observed entries.
(c) The fitted network defines predictions for unobserved coalitions.
The example uses a tensor train; CP and tree networks use different
factorizations of the same coalition tensor. A TNDM learns from a small set of measured training subsets and uses the shared low-rank structure to predict outcomes for subsets that were never retrained.}
\label{fig:completion}
\end{figure}

\section{Tensor Network Notation and Constructions}
\label{app:diagrams}

We use the conventions collected by \citet{rakhshan2026tensor}: a node is a
tensor and each leg is one mode, with its dimension in gray at the free end;
joining two legs of equal dimension sums that index; a filled black dot is a
copy tensor; attaching $\mathbf e_0=(0,1)^\top$ or $\mathbf e_1=(1,0)^\top$ to
a physical leg fixes absence or presence of a training unit; and a network with
no free legs is a scalar. Colour is a reading aid only: blue is the learned
coalition tensor or its cores, orange represents attribution weights, green is a local
change of basis. Figure~\ref{fig:primer} summarizes these conventions.

\begin{figure}[ht!]
\centering
\begin{tikzpicture}[x=1cm,y=1cm,font=\small]
\node[tn] (A) at (1.25,0) {$A$};
\draw[bond] (A.west)--++(-0.48,0);  \node[tndim,anchor=south] at (0.55,0.03) {$d_1$};
\draw[bond] (A.north)--++(0,0.46);  \node[tndim,anchor=south] at (1.25,0.44) {$d_2$};
\draw[bond] (A.east)--++(0.48,0);   \node[tndim,anchor=south] at (1.95,0.03) {$d_3$};
\node[tncap,anchor=north,text width=2.4cm,align=center] at (1.25,-0.52)
     {$3$ free legs\\ $=$ order-$3$ tensor $A_{ijk}$};
 
\node[tn] (B) at (3.65,0) {$A$};  \node[tn] (C) at (4.85,0) {$B$};
\draw[bond] (B.east)--(C.west);   \node[tndim,anchor=south] at (4.25,0.06) {$a$};
\draw[bond] (B.west)--++(-0.38,0); \draw[bond] (C.east)--++(0.38,0);
\node[tncap,anchor=north,text width=2.4cm,align=center] at (4.25,-0.52)
     {joined leg is summed\\ $\textstyle\sum_a A_{ia}B_{aj}$};
 
\node[copydot] (cp) at (6.95,0.06) {};
\draw[bond] (cp)--++(-0.44,0.34); \draw[bond] (cp)--++(0.44,0.34);
\draw[bond] (cp)--++(0,-0.44);
\node[tncap,anchor=north,text width=2.4cm,align=center] at (6.95,-0.52)
     {copy tensor\\ $\Delta_{ijk}=\mathbf1\{i{=}j{=}k\}$};
 
\node[tn] (D) at (9.35,0) {$\widehat{\cT}$};
\draw[bond] (D.north)--++(0,0.30);
\node[qv] (e) at (9.35,0.73) {$\mathbf e_1$};
\draw[bond] (D.west)--++(-0.36,0);
\draw[bond] (D.east)--++(0.36,0);
\node[tncap,anchor=north,text width=2.5cm,align=center] at (9.35,-0.52)
     {basis vector on a leg\\ fixes $z_i=1$ (unit retained)};
 
\node[tn,minimum width=12mm] (E) at (11.95,0) {$\widehat{\cT}$};
\foreach \x in {-0.42,-0.14,0.14,0.42}{
   \draw[bond] ({11.95+\x},0.36)--({11.95+\x},0.52);
   \node[qv,minimum size=3.4mm] at ({11.95+\x},0.69) {};}
\node[tncap,anchor=north,text width=2.6cm,align=center] at (11.95,-0.52)
     {no free legs left\\ $\Rightarrow$ the network is a scalar};
 
\foreach \x in {2.55,5.90,8.20,10.70}{
   \draw[draw=tnink!14,line width=.4pt] (\x,-1.32)--(\x,1.00);}
\end{tikzpicture}
\caption{Diagram notation. A node is a tensor and each leg is one mode, with
its dimension in gray at the free end; joining two legs of equal dimension sums
that index; a filled dot is a copy tensor; attaching $e_0=(0,1)^\top$ or
$e_1=(1,0)^\top$ to a physical leg fixes absence or presence of a training unit;
and a network with no free legs evaluates to a scalar. Only connectivity carries algebraic meaning. Tensor-network diagrams provide a compact graphical notation for the tensor operations used throughout the paper.}
\label{fig:primer}
\end{figure}

\section{Tensor-Train Rank and Coalition Unfoldings}
\label{app:proofs-representation}

For an ordering $\pi$, let
\[
    L_k^\pi=\{\pi(1),\ldots,\pi(k)\},
    \qquad
    R_k^\pi=N\setminus L_k^\pi,
\]
and define the coalition unfolding
\[
    \mathbf M_k^\pi[A,C]
    =
    v(A\cup C),
    \qquad
    A\subseteq L_k^\pi,\;
    C\subseteq R_k^\pi .
\]

\begin{proposition}[TT rank and coalition unfoldings]
\label{prop:state-count}
For a fixed ordering $\pi$, the minimum exact TT bond dimension
at cut $k$ equals
\[
    \operatorname{rank}(\mathbf M_k^\pi).
\]
Equivalently, $\operatorname{rank}(\mathbf M_k^\pi)$ is the smallest
integer $r$ for which maps
\[
    \sigma:2^{L_k^\pi}\to\mathbb R^r,
    \qquad
    \rho:2^{R_k^\pi}\to\mathbb R^r
\]
satisfy
\[
    v(A\cup C)
    =
    \langle \sigma(A),\rho(C)\rangle
\]
for every $A\subseteq L_k^\pi$ and $C\subseteq R_k^\pi$.
\end{proposition}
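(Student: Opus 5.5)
Our plan is to prove two inequalities at the fixed cut $k$: any exact TT representation with bond dimension $r_k$ forces $\operatorname{rank}(\mathbf M_k^\pi)\le r_k$, and conversely a TT representation exists whose $k$-th bond equals $\operatorname{rank}(\mathbf M_k^\pi)$. The equivalence with the $(\sigma,\rho)$ factorization we treat first, because it is just the rank-factorization characterization of matrix rank. Indexing rows of $\mathbf M_k^\pi$ by $A\subseteq L_k^\pi$ and columns by $C\subseteq R_k^\pi$, a factorization $v(A\cup C)=\langle\sigma(A),\rho(C)\rangle$ with $\sigma,\rho$ valued in $\mathbb R^r$ is the same thing as $\mathbf M_k^\pi=\Sigma P^\top$ with $\Sigma\in\mathbb R^{2^k\times r}$ and $P\in\mathbb R^{2^{d-k}\times r}$. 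The least such $r$ is therefore $\operatorname{rank}(\mathbf M_k^\pi)$.

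\textbf{Lower bound.} We work in the value-tensor basis described in Appendix~\ref{app:tndm-definitions}, so that $v(\mathbf z)=\widetilde G^{(1)}[2-z_{\pi(1)}]\cdots\widetilde G^{(d)}[2-z_{\pi(d)}]$. Splitting this product after the $k$-th factor, we set $\sigma(A)$ to be the row vector obtained from the first $k$ cores at the coalition $A$. This vector lies in $\mathbb R^{1\times r_k}$ because $r_0=1$. We set $\rho(C)$ to be the column vector obtained from the remaining cores at $C$. Then $v(A\cup C)=\sigma(A)\rho(C)$, which gives a factorization of size $r_k$, and hence $\operatorname{rank}(\mathbf M_k^\pi)\le r_k$. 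If the TT is instead given in the coefficient basis, the local invertible change of basis $A$ of Appendix~\ref{app:set-function-foundations} transforms it into a value-basis TT with the same bond dimensions, so the argument is unchanged.

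\textbf{Achievability.} This is the main step. We would invoke the standard TT-SVD result of \citet{oseledets2011tt}: for a tensor whose prefix unfoldings have ranks $\rho_1,\ldots,\rho_{d-1}$, the sequential SVD construction yields an exact TT with $r_j=\rho_j$ at every cut simultaneously. To keep the argument self-contained at a single cut, it suffices to build a representation with $r_k=\operatorname{rank}(\mathbf M_k^\pi)$ and bonds of any size elsewhere. The construction takes a rank factorization $\mathbf M_k^\pi=\Sigma P^\top$. The left part $\Sigma$, viewed as a tensor with $k$ binary legs and one bond leg of dimension $r_k$, can always be written as a TT of its first $k$ modes using sufficiently large intermediate bonds, for example by unrestricted TT-SVD. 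The right part $P$ is handled the same way over the remaining $d-k$ modes. Joining the two trains through the shared leg of dimension $r_k$ reproduces $\mathcal T_v$ exactly. Combined with the lower bound, this shows that the minimum bond at cut $k$ is exactly $\operatorname{rank}(\mathbf M_k^\pi)$.

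The only subtlety we foresee is that the notion of ``minimum exact TT bond dimension at cut $k$'' must be basis-independent. This holds because the mode-wise change of basis acts separately on the left modes and the right modes, so it multiplies $\mathbf M_k^\pi$ by invertible Kronecker factors on each side and leaves its rank unchanged.
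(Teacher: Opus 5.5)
Your proof is correct, and its central step is exactly the paper's one-line argument: a representation $v(A\cup C)=\langle\sigma(A),\rho(C)\rangle$ is the same as a factorization $\mathbf M_k^\pi=\Sigma P^\top$ of inner dimension $r$, so the least such $r$ is $\operatorname{rank}(\mathbf M_k^\pi)$. The paper stops there and treats the TT bond at cut $k$ as such a factorization without further comment, whereas your lower bound (splitting the core product after the $k$-th factor, using $r_0=r_d=1$) and your achievability step (joining TT representations of $\Sigma$ and $P$ through the shared rank-$r_k$ leg, or invoking TT-SVD) make that link explicit, so your version is the more complete one.
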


\begin{proof}
A representation
\[
    v(A\cup C)=\langle\sigma(A),\rho(C)\rangle
\]
corresponds exactly to a matrix factorization
\[
    \mathbf M_k^\pi=\Sigma P^\top
\]
with inner dimension $r$, where the rows of $\Sigma$ are
$\sigma(A)^\top$ and the rows of $P$ are $\rho(C)^\top$.
The smallest admissible inner dimension therefore equals
$\operatorname{rank}(\mathbf M_k^\pi)$.
\end{proof}
\section{General Complementary and Redundant Support Structure}
\label{app:structural-theory}

The support-set model of Sec.~\ref{sec:examples} assumes one
training group for each required fact. We now allow several training
groups to provide the same required fact.

For query $a\in\{1,\ldots,m\}$, let
\[
    G_{a,1},\ldots,G_{a,k_a}\subseteq N
\]
be nonempty, pairwise-disjoint sets. Query $a$ is answered correctly
when the selected coalition contains at least one training group from
every $G_{a,h}$. Define
\begin{equation}
    u_a(S)
    =
    \prod_{h=1}^{k_a}
    \left[
        1-\prod_{j\in G_{a,h}}(1-z_j)
    \right],
    \qquad
    z_j=\mathbf 1\{j\in S\}.
    \label{eq:general-query-coverage}
\end{equation}
Sets belonging to different queries may overlap.

Let
\[
    D_a
    =
    \sum_{h=1}^{k_a}|G_{a,h}|,
    \qquad
    \nu_a
    =
    \bigl|\{h:|G_{a,h}|>1\}\bigr|.
\]
Thus, $D_a$ is the number of training groups involved in query $a$,
and $\nu_a$ is the number of required facts with redundant supporting
groups.

For an ordering $\pi$ and prefix cut
\[
    L_\ell^\pi=\{\pi(1),\ldots,\pi(\ell)\},
    \qquad
    R_\ell^\pi=N\setminus L_\ell^\pi,
\]
define
\[
    s_{a,\ell}^{\pi}
    =
    \bigl|
    \{
        h:
        G_{a,h}\cap L_\ell^\pi\neq\emptyset,
        \;
        G_{a,h}\cap R_\ell^\pi\neq\emptyset
    \}
    \bigr|.
\]
The quantity $s_{a,\ell}^{\pi}$ counts the sets $G_{a,h}$ split by
the cut.

\begin{theorem}[Complementary and redundant support structure]
\label{thm:general-support-structure}
For every query $a$,
\[
    \deg(u_a)=D_a,
    \qquad
    \operatorname{rank}_{\mathrm{CP}}(u_a)=2^{\nu_a}.
\]
For every ordering $\pi$ and prefix cut $\ell$,
\[
    \operatorname{rank}
    \left(
        \mathbf M_\ell^\pi(u_a)
    \right)
    =
    2^{s_{a,\ell}^{\pi}}.
\]

For positive weights $c_1,\ldots,c_m$, define
\[
    v(S)
    =
    \sum_{a=1}^{m}c_a u_a(S).
\]
Then
\[
    \deg(v)\le\max_a D_a,
\]
\[
    \operatorname{rank}_{\mathrm{CP}}(v)
    \le
    \sum_{a=1}^{m}2^{\nu_a},
\]
and
\[
    \operatorname{rank}
    \left(
        \mathbf M_\ell^\pi(v)
    \right)
    \le
    \sum_{a=1}^{m}2^{s_{a,\ell}^{\pi}}.
\]
The degree bound is an equality whenever the degree-$\max_aD_a$
terms do not cancel.
\end{theorem}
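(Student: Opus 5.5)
We reduce everything to the single-group factor \(f_G(\mathbf z)=1-\prod_{j\in G}(1-z_j)\) and use the fact that \(u_a\) is a product of such factors over disjoint variable sets.

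\textbf{Degree.} Expanding \(\prod_{j\in G}(1-z_j)\) shows that \(f_G\) has coefficient \(-(-1)^{|G|}\neq0\) on \(\prod_{j\in G}z_j\), and every monomial of \(f_G\) lies inside \(G\). Since the \(G_{a,h}\) are disjoint, the coefficient of \(\prod_{j\in\cup_hG_{a,h}}z_j\) in \(u_a\) is the product of these top coefficients, hence nonzero. No monomial of \(u_a\) can exceed degree \(D_a\), so \(\deg(u_a)=D_a\), as in Proposition~\ref{prop:coverage-tt}.

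\textbf{CP rank of \(u_a\).} For the upper bound, \(f_G\) with \(|G|=1\) is the rank-one function \(z_j\). For \(|G|>1\), \(f_G=1-\prod_{j\in G}(1-z_j)\) is a sum of two rank-one terms. Tensoring over the disjoint blocks, and using the constant vector \(\mathbf u\) on modes outside all blocks, gives \(\operatorname{rank}_{\mathrm{CP}}(u_a)\le 2^{\nu_a}\).

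For the lower bound, I would contract modes with linear functionals, which cannot raise CP rank (as in the proof of Theorem~\ref{thm:support-rank-degree_app}). On each singleton block and each unused mode, evaluate at \(z_j=1\) or sum out, giving a nonzero scalar. On each redundant block keep two modes \(j_1,j_2\) and fix the others to \(z=0\). This reduces \(u_a\) to \(\bigotimes_{h}T_h\), where \(T_h\) is the \(2\times 2\) value matrix of \(z_{j_1}\lor z_{j_2}\), namely \(\begin{bmatrix}1&1\\1&0\end{bmatrix}\), which has rank \(2\).

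\emph{Main obstacle.} CP rank is not multiplicative under tensor products in general, so a Kronecker product of \(\nu_a\) rank-two matrices need not have CP rank \(2^{\nu_a}\) as a higher-order tensor. The step I would try is to group the two kept modes of each block into one mode of dimension \(4\). CP rank can only drop under merging modes, so it suffices to bound the merged tensor. This tensor is a \(\nu_a\)-fold tensor product of a vector \(t\in\mathbb R^4\), hence rank one. That kills the approach, so I would instead flatten along a bipartition: rows indexed by the \(j_1\) modes and columns by the \(j_2\) modes. The flattening is \(\bigotimes_hT_h\) as a matrix, with matrix rank \(2^{\nu_a}\), and CP rank is at least the rank of any flattening. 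This gives \(\operatorname{rank}_{\mathrm{CP}}(u_a)=2^{\nu_a}\).

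\textbf{Unfolding ranks.} At a prefix cut \(\ell\), every block not split by the cut contributes a rank-one factor (left function times right function). A split block \(G\), with left part \(G_L\) and right part \(G_R\), contributes \(1-p_L(A)p_R(C)\), where \(p_L,p_R\) indicate that no element of that part is selected. Its \(2^{|G_L|}\times 2^{|G_R|}\) unfolding has rank exactly \(2\): the vectors \(\mathbf 1\) and \(p_L\) are linearly independent because \(p_L(\emptyset)=1\) and \(p_L(\{g\})=0\) for \(g\in G_L\), and similarly on the right. The unfolding of \(u_a\) is the Kronecker product of the block unfoldings, up to row and column permutations, and Kronecker products multiply matrix rank. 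Hence \(\operatorname{rank}\mathbf M_\ell^\pi(u_a)=2^{s_{a,\ell}^\pi}\).

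\textbf{Sums.} For \(v=\sum_a c_au_a\), the degree bound holds because the degree of a sum is at most the maximum of the degrees. Equality holds exactly when the top-degree terms do not cancel, which is the stated condition. The CP and unfolding bounds follow from subadditivity of CP rank and of matrix rank, since \(\mathbf M_\ell^\pi(v)=\sum_ac_a\mathbf M_\ell^\pi(u_a)\).
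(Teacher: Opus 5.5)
Your proof is correct and follows essentially the same route as the paper: the top-monomial argument for the degree, expansion of the nonsingleton factors for the CP upper bound, and a flattening that splits every nonsingleton block, so that the unfolding becomes a Kronecker product of rank-two factors. That flattening gives the CP lower bound $2^{\nu_a}$ and the exact TT unfolding ranks, and subadditivity handles the weighted sum. Your preliminary contraction to the $2\times 2$ OR matrices and your linear-independence check of $\mathbf 1$ and $p_L$ just spell out the rank-two claims the paper states directly, and the abandoned mode-merging detour can simply be dropped.
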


\begin{proof}
For a fixed query $a$, each factor
\[
    1-\prod_{j\in G_{a,h}}(1-z_j)
\]
has degree $|G_{a,h}|$ and contains the nonzero highest-degree term
\[
    (-1)^{|G_{a,h}|-1}
    \prod_{j\in G_{a,h}}z_j.
\]
The sets $G_{a,h}$ are disjoint, so their product contains the
nonzero term
\[
    (-1)^{D_a-k_a}
    \prod_{j\in \cup_h G_{a,h}}z_j.
\]
Hence $\deg(u_a)=D_a$.

A singleton set $G_{a,h}$ contributes the rank-one factor $z_j$.
For $|G_{a,h}|>1$,
\[
    1-\prod_{j\in G_{a,h}}(1-z_j)
\]
is a sum of two rank-one terms. Expanding the $\nu_a$ nonsingleton
factors therefore gives
\[
    \operatorname{rank}_{\mathrm{CP}}(u_a)\le 2^{\nu_a}.
\]

For the reverse inequality, choose a matrix unfolding that places at
least one variable from each nonsingleton $G_{a,h}$ on each side.
The unfolding of every split factor has matrix rank two, while
singleton factors and unused variables have rank one. After permuting
rows and columns, the full unfolding is a Kronecker product of these
factor unfoldings, so its matrix rank is
\[
    2^{\nu_a}.
\]
Matrix-unfolding rank cannot exceed CP rank, giving
\[
    \operatorname{rank}_{\mathrm{CP}}(u_a)=2^{\nu_a}.
\]

For a fixed TT cut, every set $G_{a,h}$ lying entirely on one side
contributes unfolding rank one. Every set $G_{a,h}$ split by the cut
contributes unfolding rank two because
\[
    1-
    \left[
        \prod_{j\in G_{a,h}\cap L_\ell^\pi}(1-z_j)
    \right]
    \left[
        \prod_{j\in G_{a,h}\cap R_\ell^\pi}(1-z_j)
    \right]
\]
is the sum of two separable terms whose left and right factors are
linearly independent. The full unfolding is therefore a Kronecker
product with rank
\[
    2^{s_{a,\ell}^{\pi}}.
\]

Finally, degree, CP rank, and matrix rank are respectively bounded
under addition by the maximum degree, the sum of CP ranks, and the
sum of matrix ranks. Applying these bounds to
$v=\sum_a c_a u_a$ gives the three aggregate bounds.
\end{proof}

\begin{corollary}[Pure complementarity]
\label{cor:general-pure-complementarity}
If every $G_{a,h}$ contains one training group, then
\[
    \operatorname{rank}_{\mathrm{CP}}(u_a)=1
\]
for every query $a$, and
\[
    \operatorname{rank}_{\mathrm{CP}}(v)\le m.
\]
Writing
\[
    P_a=\bigcup_{h=1}^{k_a}G_{a,h},
\]
gives
\[
    u_a(S)=\mathbf 1\{P_a\subseteq S\}.
\]
Thus the support-set model of Sec.~\ref{sec:examples} is recovered.
\end{corollary}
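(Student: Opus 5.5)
The statement to prove is Corollary~\ref{cor:general-pure-complementarity}. I would derive it as a specialization of Theorem~\ref{thm:general-support-structure}, with one short computation to identify $u_a$ with the support indicator.

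\textbf{Step 1: reduce each factor.} Assume every $G_{a,h}$ is a singleton $\{j_{a,h}\}$. Then the $h$-th factor in \eqref{eq:general-query-coverage} is
\[
1-(1-z_{j_{a,h}})=z_{j_{a,h}}.
\]
Since the $G_{a,h}$ are pairwise disjoint for fixed $a$, the indices $j_{a,1},\ldots,j_{a,k_a}$ are distinct. Hence
\[
u_a(\mathbf z)=\prod_{j\in P_a} z_j ,
\]
with $P_a=\bigcup_h G_{a,h}$ and $|P_a|=k_a$. On $\{0,1\}^d$ this product equals $\mathbf 1\{P_a\subseteq S\}$, which is the identification claimed in the statement.

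\textbf{Step 2: the CP rank of each $u_a$.} All groups are singletons, so the number of nonsingleton groups is $\nu_a=0$. Theorem~\ref{thm:general-support-structure} then gives $\operatorname{rank}_{\mathrm{CP}}(u_a)=2^{0}=1$. I would also give a direct check that does not depend on that theorem's lower-bound argument:
\begin{itemize}
\item \emph{Upper bound.} $u_a$ is a single product of affine factors, namely $z_j$ for $j\in P_a$ and $1$ otherwise. This is a rank-one representation in the sense of Definition~\ref{def:cp_rank}.
\item \emph{Lower bound.} $u_a$ is nonzero, since it equals $1$ at $S=N$. So its CP rank is at least one.
\end{itemize}

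\textbf{Step 3: the aggregate bound and recovery of Section~\ref{sec:examples}.} The aggregate bound in Theorem~\ref{thm:general-support-structure} gives
\[
\operatorname{rank}_{\mathrm{CP}}(v)\le\sum_a 2^{\nu_a}=m.
\]
This also follows directly from subadditivity of CP rank under sums. Substituting $u_a=\mathbf 1\{P_a\subseteq S\}$ into $v=\sum_a c_a u_a$ reproduces exactly the set function of Theorem~\ref{thm:support-rank-degree}. That establishes the final sentence of the corollary.

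\textbf{Expected difficulty.} There is no real obstacle here. The only point needing care is the distinctness of the indices in Step~1, which comes from within-query disjointness; overlaps between different queries play no role in this argument.
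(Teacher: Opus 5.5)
Your proposal is correct and follows the paper's route. The paper gives no separate proof; it reads the corollary off Theorem~\ref{thm:general-support-structure} with $\nu_a=0$, after each singleton factor collapses to $z_j$, so that $u_a=\prod_{j\in P_a}z_j=\mathbf 1\{P_a\subseteq S\}$. Your direct rank-one check is a harmless, self-contained supplement: it uses a single product representation and the fact that $u_a$ is nonzero, since $u_a(N)=1$.
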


\begin{corollary}[Single-query redundant coverage]
\label{cor:general-redundant-coverage}
For one query with nonempty, pairwise-disjoint sets
$G_1,\ldots,G_k$,
\[
    u(S)
    =
    \prod_{h=1}^{k}
    \left[
        1-\prod_{j\in G_h}(1-z_j)
    \right]
\]
has
\[
    \deg(u)=\sum_{h=1}^{k}|G_h|.
\]
If every $G_h$ is contiguous in the tensor ordering, every prefix cut
splits at most one $G_h$, so every TT rank is at most two.
\end{corollary}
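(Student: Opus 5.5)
This corollary is the single-query case $m=1$ of Thm.~\ref{thm:general-support-structure}, combined with an elementary observation about contiguous blocks. I will therefore import the degree statement and the per-cut unfolding rank formula from that theorem, and then only count how many blocks a prefix cut can split.

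\emph{Degree.} Specializing the theorem to one query with $G_{1,h}=G_h$ gives $\deg(u)=D_1=\sum_h|G_h|$. To see directly why no cancellation occurs, note the following. Each factor $1-\prod_{j\in G_h}(1-z_j)$ is a multiaffine polynomial in the variables of $G_h$, and its coefficient on $\prod_{j\in G_h}z_j$ is $(-1)^{|G_h|-1}\neq 0$. The blocks are disjoint, so the factors use disjoint variable sets. Their product is therefore still multiaffine, with no $z_j^2$ reductions needed. The coefficient of the full monomial $\prod_{j\in\cup_h G_h}z_j$ is then the product of the top coefficients, which is nonzero. No monomial can have degree larger than the number of variables involved. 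By uniqueness of the multilinear expansion (Prop.~\ref{prop:monomial-basis}), this gives the claimed degree.

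\emph{TT ranks.} By Prop.~\ref{prop:state-count}, the minimal TT bond dimension at cut $\ell$ equals $\operatorname{rank}(\mathbf M_\ell^\pi(u))$. Thm.~\ref{thm:general-support-structure} identifies this rank as $2^{s_\ell}$, where $s_\ell$ is the number of $G_h$ split by the cut. The key step is to show $s_\ell\le 1$ when every $G_h$ is contiguous.

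Suppose two distinct blocks $G_h$ and $G_{h'}$ were both split by the prefix cut at position $\ell$. Each contiguous block occupies an interval of positions, and being split means that interval contains both $\ell$ and $\ell+1$. Two intervals sharing the positions $\ell$ and $\ell+1$ intersect, which contradicts disjointness. Hence $s_\ell\le1$, and every TT rank is at most $2^1=2$. Groups outside $\bigcup_h G_h$ do not appear in $u$, so they contribute rank-one (constant) modes and do not affect the count.

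The only potential obstacle is making sure the theorem's formula $2^{s}$ is justified when $s=0$ or $s=1$. The case $s=0$ is just separability across the cut. For $s=1$ one can use the explicit two-term decomposition from the proof of Prop.~\ref{prop:coverage-tt}, so only the upper bound is needed and nothing delicate remains.
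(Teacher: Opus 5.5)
Your proposal is correct and takes essentially the paper's route. The corollary is the $m=1$ case of Thm.~\ref{thm:general-support-structure}: the degree comes from the nonvanishing top coefficient $(-1)^{\sum_h(|G_h|-1)}$ of a product over disjoint blocks, and the prefix-unfolding rank is $2^{s_\ell}$ with $s_\ell\le 1$ under contiguity; this is combined with Prop.~\ref{prop:state-count} identifying TT bond dimensions with unfolding ranks, as in the proof of Prop.~\ref{prop:coverage-tt}. Your interval argument for $s_\ell\le 1$, and your observation that only the upper bound from the two-term decomposition is needed, are both sound.
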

\section{Attribution Functionals and Contraction Proofs}
\label{app:queries}

Let $\cT=\mathcal T_v$ denote the value tensor, using physical index
$2-z_j$ for inclusion bit $z_j$. Define
\[
    \mathbf e_1=(1,0)^\top,
    \qquad
    \mathbf e_0=(0,1)^\top,
    \qquad
    \mathbf b(t)=(t,1-t)^\top,
\]
and
\[
    \boldsymbol\delta
    =
    \mathbf e_1-\mathbf e_0
    =
    (1,-1)^\top.
\]
For a weight tensor $\cW$, define the linear functional
\[
    L_{\cW}(\widehat{\cT})
    =
    \langle \widehat{\cT},\cW\rangle_F.
\]

\subsection{Linear Functionals of a Fitted TNDM}

\begin{theorem}[Linear-functional contraction]
\label{thm:query}
Every linear functional on
$\mathbb R^{2\times\cdots\times2}$
has a unique weight tensor $\cW$.
If $\widehat{\cT}$ and $\cW$ have TT ranks bounded by
$r$ and $\rho$, respectively, in the same ordering, then
\[
    \langle\widehat{\cT},\cW\rangle_F
\]
can be evaluated in
$\cO(dr^2\rho^2)$ operations.
For a product weight tensor, the cost reduces to
$\cO(dr^2)$.
Moreover,
\[
    \left|
    \langle\cT-\widehat{\cT},\cW\rangle_F
    \right|
    \le
    \|\cT-\widehat{\cT}\|_F
    \|\cW\|_F .
\]
\end{theorem}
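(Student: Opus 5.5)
The proof has three parts, each handled separately. For existence and uniqueness of the weight tensor I will use the Frobenius inner product $\langle\cdot,\cdot\rangle_F$, which makes $\mathbb R^{2\times\cdots\times2}$ a finite-dimensional inner-product space of dimension $2^d$. By the Riesz representation theorem, or concretely by setting $\cW_{i_1\cdots i_d}=L(\mathcal E_{i_1\cdots i_d})$ on the standard basis tensors $\mathcal E$, every linear functional $L$ can be written as $L(\widehat{\cT})=\langle\widehat{\cT},\cW\rangle_F$. Uniqueness holds because two weight tensors that induce the same functional have a difference orthogonal to every tensor, so the difference is zero. Through the isomorphism of Theorem~\ref{thm:set-functions-tensors}, this is the same as saying that every linear query on set functions (Shapley, Banzhaf, or interaction indices) is a weighted sum of coalition values.

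For the complexity bound, I write both tensors in TT form in the same ordering. Let $\widehat{\cT}$ have cores $G^{(j)}\in\mathbb R^{r_{j-1}\times2\times r_j}$ and let $\cW$ have cores $H^{(j)}\in\mathbb R^{\rho_{j-1}\times2\times\rho_j}$. Then
\[
\langle\widehat{\cT},\cW\rangle_F=\sum_{i_1,\ldots,i_d}\prod_j G^{(j)}[i_j]\otimes H^{(j)}[i_j]
=\prod_{j=1}^d E_j,\qquad E_j=\sum_{i=1}^2 G^{(j)}[i]\otimes H^{(j)}[i],
\]
where the product is ordered and the identity uses the mixed-product property of Kronecker products. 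Forming the transfer matrices $E_j\in\mathbb R^{r_{j-1}\rho_{j-1}\times r_j\rho_j}$ explicitly would be too costly. Instead I sweep from left to right: keep a row vector $\mathbf s_j\in\mathbb R^{r_j\rho_j}$, reshaped as an $r_j\times\rho_j$ matrix $X_j$, and update it by $X_j=\sum_i G^{(j)}[i]^\top X_{j-1}H^{(j)}[i]$. Each update costs $\cO(r^2\rho+r\rho^2)$ operations, which is at most $\cO(r^2\rho^2)$, so the $d$ steps give $\cO(dr^2\rho^2)$.

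Getting the sweep ordering right so that the naive Kronecker cost $r^2\rho^2$ per core is not exceeded is the only point that needs care; the rest is bookkeeping.

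For a product weight tensor we have $\rho=1$, so $H^{(j)}$ reduces to a vector $\mathbf w_j\in\mathbb R^2$. Each step then becomes a matrix–vector product with the $r\times r$ matrix $\sum_i w_{j,i}G^{(j)}[i]$, which costs $\cO(r^2)$ per core and $\cO(dr^2)$ in total. The error bound follows directly: by linearity $\langle\cT,\cW\rangle_F-\langle\widehat{\cT},\cW\rangle_F=\langle\cT-\widehat{\cT},\cW\rangle_F$, and the Cauchy–Schwarz inequality for the Frobenius inner product (viewing tensors as vectors in $\mathbb R^{2^d}$) gives $\bigl|\langle\cT-\widehat{\cT},\cW\rangle_F\bigr|\le\|\cT-\widehat{\cT}\|_F\|\cW\|_F$.
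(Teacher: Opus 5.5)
Your proposal is correct and follows the same route as the paper: $\cW$ is defined by evaluating the functional on the standard basis, the cost comes from a left-to-right sequential contraction of the two tensor trains, and the error bound is Cauchy--Schwarz. Your sweep $X_j=\sum_i G^{(j)}[i]^\top X_{j-1}H^{(j)}[i]$ makes explicit what the paper leaves implicit and even gives the sharper $\cO(d(r^2\rho+r\rho^2))$. One side remark is inaccurate, though harmlessly so: forming each transfer matrix $E_j$ explicitly and applying it to the running row vector costs only $\cO(r^2\rho^2)$ per core, so it already meets the stated bound, and what must actually be avoided is multiplying the $E_j$ together as full matrices.
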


\begin{proof}
Let $\cE_{\mathbf z}$ denote the standard-basis tensor associated
with coalition $\mathbf z$. For a linear functional $L$, define
\[
    \cW[\mathbf z]
    =
    L(\cE_{\mathbf z}).
\]
Linearity gives
\[
    L(\cX)=\langle\cX,\cW\rangle_F
\]
for every tensor $\cX$, while evaluation on the standard basis
establishes uniqueness.

For TT representations of ranks bounded by $r$ and $\rho$,
sequential contraction of the two networks costs
$\cO(dr^2\rho^2)$.
A product weight tensor has TT rank one, giving
$\cO(dr^2)$.
The final bound follows from the Cauchy--Schwarz inequality.
\end{proof}

\subsection{Marginal-Contribution Curves and Semivalues}

For $i\in N$, define the product weight tensor $\cW_i(t)$ by placing
$\boldsymbol\delta$ on mode $i$ and $\mathbf b(t)$ on every other
mode. The resulting marginal-contribution curve is
\begin{equation}
    \widehat g_i(t)
    =
    \langle\widehat{\cT},\cW_i(t)\rangle_F
    =
    \sum_{S\subseteq N\setminus\{i\}}
    t^{|S|}
    (1-t)^{d-1-|S|}
    \left[
        \widehat v(S\cup\{i\})
        -
        \widehat v(S)
    \right].
    \label{eq:marginal-curve}
\end{equation}
The same definition without hats applies to the measured set
function $v$.

\begin{proposition}[Marginal-contribution curve]
\label{prop:marginal}
For every $i\in N$,
\[
    \widehat{\mathrm{LOO}}_i
    =
    \widehat g_i(1),
    \qquad
    \widehat\phi_i^{\,\mathrm{Bz}}
    =
    \widehat g_i\!\left(\tfrac12\right),
\]
and
\[
    \widehat\phi_i^{\,\mathrm{Sh}}
    =
    \int_0^1
    \widehat g_i(t)\,dt .
\]
Each evaluation of $\widehat g_i(t)$ costs
$\cO(dr^2)$ for a rank-$r$ TT.
The polynomial $\widehat g_i$ has degree at most $d-1$, so
Gauss--Legendre quadrature with
$\lceil d/2\rceil$ nodes evaluates the Shapley integral exactly
in exact arithmetic, at total cost $\cO(d^2r^2)$.
\end{proposition}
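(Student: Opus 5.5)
The plan is to verify the three identities by direct expansion of the definition \eqref{eq:marginal-curve}, then handle the cost and quadrature claims using Theorem~\ref{thm:query}.

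\textbf{Leave-one-out.} At \(t=1\), every weight \(t^{|S|}(1-t)^{d-1-|S|}\) vanishes except the one with \(|S|=d-1\), i.e. \(S=N\setminus\{i\}\). Under the convention \(0^0=1\) this weight equals one. So \(\widehat g_i(1)=\widehat v(N)-\widehat v(N\setminus\{i\})\), which is the leave-one-out effect.

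\textbf{Banzhaf.} At \(t=\tfrac12\), every weight equals \(2^{-(d-1)}\). Hence \(\widehat g_i(\tfrac12)\) is the uniform average of marginal contributions over \(S\subseteq N\setminus\{i\}\), which is the Banzhaf value.

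\textbf{Shapley.} Swap the finite sum with the integral and use the Beta integral
\[
\int_0^1 t^{s}(1-t)^{d-1-s}\,dt=\frac{s!\,(d-1-s)!}{d!}.
\]
These are exactly the Shapley weights \(|S|!(d-|S|-1)!/d!\), so \(\int_0^1\widehat g_i(t)\,dt=\widehat\phi_i^{\mathrm{Sh}}\).

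\textbf{Cost of one evaluation.} The tensor \(\cW_i(t)\) is a product weight tensor, so Theorem~\ref{thm:query} gives cost \(\cO(dr^2)\) per evaluation.

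\textbf{Degree and exact quadrature.} Each term in \eqref{eq:marginal-curve} is a polynomial in \(t\) of degree exactly \(d-1\), so \(\widehat g_i\) has degree at most \(d-1\). Gauss--Legendre quadrature with \(n\) nodes, mapped to \([0,1]\), is exact for all polynomials of degree at most \(2n-1\). With \(n=\lceil d/2\rceil\) we get \(2n-1\ge d-1\), so the Shapley integral is computed exactly. The total cost is \(\lceil d/2\rceil\) evaluations at \(\cO(dr^2)\) each, which is \(\cO(d^2r^2)\).

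\textbf{Main obstacle.} The only delicate step is bookkeeping. The marginal-contribution sum must really equal the contraction with \(\boldsymbol\delta\otimes\bigotimes_{j\neq i}\mathbf b(t)\) under the index convention \(2-z_j\). Index \(1\) means inclusion, so \(\mathbf b(t)=(t,1-t)^\top\) places weight \(t\) on inclusion, and \(\boldsymbol\delta\) takes the difference between inclusion and exclusion. Expanding the product over the other modes then yields each subset \(S\) with weight \(t^{|S|}(1-t)^{d-1-|S|}\).
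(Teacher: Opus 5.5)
Your proposal is correct and follows the paper's proof: expand the product weight tensor $\cW_i(t)$ into Eq.~\eqref{eq:marginal-curve}, read off leave-one-out at $t=1$ and Banzhaf at $t=\tfrac12$, get the Shapley weights from the Beta integral, and take the per-evaluation cost from Theorem~\ref{thm:query}. You also spell out two steps the paper leaves implicit: each weight $t^{|S|}(1-t)^{d-1-|S|}$ has degree exactly $d-1$, and $\lceil d/2\rceil$ Gauss--Legendre nodes are exact up to degree $2\lceil d/2\rceil-1\ge d-1$.
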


\begin{proof}
Expanding the product weight tensor $\cW_i(t)$ gives
Eq.~\eqref{eq:marginal-curve}.
Setting $t=1$ retains only
$S=N\setminus\{i\}$, giving leave-one-out.
Setting $t=\tfrac12$ assigns equal probability
$2^{-(d-1)}$ to every context, giving Banzhaf.

For Shapley values,
\[
    \int_0^1
    t^{|S|}
    (1-t)^{d-1-|S|}
    dt
    =
    \frac{|S|!\,(d-|S|-1)!}{d!},
\]
which gives the Shapley weights.
The computational costs follow from
Thm.~\ref{thm:query}.
\end{proof}

Figure~\ref{fig:marginal-schematic} summarizes the three readouts:
leave-one-out evaluates the marginal curve at $t=1$, Banzhaf at
$t=\tfrac12$, and Shapley integrates the curve over $[0,1]$.

\begin{figure}[t]
\centering
\begin{tikzpicture}[x=1cm,y=1cm,font=\small]
\foreach \k in {1,...,7}{
   \node[core,minimum size=5.4mm,font=\scriptsize] (g\k) at ({0.55+0.68*(\k-1)},0) {$\cG$};}
\foreach \k in {1,...,6}{\draw[bond] (g\k)--(g\the\numexpr\k+1\relax);}
\node[tndim,anchor=north] at (0.89,-0.14) {$r$};
\foreach \k in {1,2,3,5,6,7}{
   \node[qv] (b\k) at ($(g\k)+(0,0.70)$) {$\mathbf b(t)$};
   \draw[bond] (g\k)--(b\k);}
\node[qv,fill=tnorange!38,line width=.8pt] (di) at ($(g4)+(0,0.70)$) {$\boldsymbol\delta$};
\draw[bond] (g4)--(di);
\node[tndim,anchor=south,text=tnorange!65!black] at ($(di)+(0,0.26)$) {site $i$};
\draw[decorate,decoration={brace,amplitude=3.5pt,mirror,raise=2pt},
      draw=tnink!45,line width=.5pt] (0.34,-0.34)--(4.70,-0.34);
\node[tncap,anchor=north,text width=4.9cm,align=center] at (2.52,-0.62)
     {one difference leg at site $i$; every other context averaged at retention rate $t$};
\node[tncap,anchor=south] at (2.52,1.30)
     {$\widehat g_i(t)=\inner{\widehat{\cT}}{\delta_i\otimes\bigotimes_{j\neq i}b(t)_j}$};
 
\begin{scope}[xshift=6.90cm,yshift=-0.46cm]
  \def\W{3.60} \def\Hh{1.30}
  \fill[tnorange!12]
     plot[domain=0:1,samples=60] ({\W*\x},{0.30+1.05*\x*\x*(1.15-0.35*\x)})
     -- (\W,0) -- (0,0) -- cycle;
  \draw[bond,draw=tnink!55,->] (0,0) -- (\W+0.34,0) node[tndim,anchor=west,text=tnink] {$t$};
  \draw[bond,draw=tnink!55,->] (0,0) -- (0,\Hh+0.24) node[tndim,anchor=south,text=tnink] {$\widehat g_i(t)$};
  \draw[draw=tnorange!85!black,line width=.9pt]
     plot[domain=0:1,samples=60] ({\W*\x},{0.30+1.05*\x*\x*(1.15-0.35*\x)});
  \draw[draw=tnink!40,dashed,line width=.4pt] (\W/2,0) -- (\W/2,{0.30+1.05*0.25*(1.15-0.175)});
  \draw[draw=tnink!40,dashed,line width=.4pt] (\W,0) -- (\W,{0.30+1.05*0.80});
  \node[tndim,anchor=north,text=tnink] at (\W/2,-0.05) {$t=\tfrac12$};
  \node[tndim,anchor=north,text=tnink] at (\W,-0.05) {$t=1$};
  \node[tndim,anchor=north east,text=tnink] at (-0.02,-0.02) {$0$};
  \fill[tnorange!85!black] (\W/2,{0.30+1.05*0.25*(1.15-0.175)}) circle (1.5pt);
  \fill[tnorange!85!black] (\W,{0.30+1.05*0.80}) circle (1.5pt);
  \node[tncap,anchor=south east,text=tnink] at ({\W/2-0.08},{0.30+1.05*0.25*(1.15-0.175)+0.07}) {Banzhaf};
  \node[tncap,anchor=south east,text=tnink] at ({\W-0.06},{0.30+1.05*0.80+0.10}) {LOO};
  \node[tncap,anchor=south,text=tnorange!55!black] at (\W*0.52,0.010) {Shapley $=\int_0^1 \widehat g_i(t)\,dt$};
\end{scope}
\end{tikzpicture}
\caption{LOO, Banzhaf, and Shapley are three readouts of the same marginal
curve. Left: for a fixed $t$, the rank-one weight tensor $\cW_i(t)$ places
$\boldsymbol\delta=\mathbf e_1-\mathbf e_0$ on site $i$ and
$\mathbf b(t)=(t,1-t)^\top$ on every other site. Its contraction with the
fitted TNDM costs $\cO(dr^2)$. Right: LOO evaluates the curve at $t=1$,
Banzhaf at $t=\tfrac12$, and Shapley integrates the curve over $[0,1]$. Once a TNDM is fitted, several familiar attribution scores can be obtained as different summaries of the same learned subset function.}
\label{fig:marginal-schematic}
\end{figure}

\begin{proposition}[Grouped marginal-curve coefficients]
\label{prop:marginal-coefficients}
Let $m(A)$ denote the M\"obius coefficient associated with
$A\subseteq N$. For every $i\in N$,
\begin{equation}
    g_i(t)
    =
    \sum_{k=1}^{d}
    h_{i,k}\,t^{k-1},
    \qquad
    h_{i,k}
    =
    \sum_{\substack{A\ni i\\|A|=k}}
    m(A).
    \label{eq:grouped-marginal}
\end{equation}
Consequently, $g_i$ is constant if and only if
$h_{i,k}=0$ for every $k\ge2$.

Vanishing grouped coefficients do not require every higher-order
M\"obius coefficient to vanish, because coefficients of equal order
may cancel. In particular, additive set functions satisfy
$m(A)=0$ for every $|A|\ge2$ and therefore have constant marginal
curves.
\end{proposition}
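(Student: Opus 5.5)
The plan is to compute the marginal-contribution curve directly from the Möbius expansion of Prop.~\ref{prop:monomial-basis}, using the product-weight representation Eq.~\eqref{eq:marginal-curve} with the hats dropped. First I write $v(S\cup\{i\})-v(S)$ for $S\subseteq N\setminus\{i\}$ in terms of the expansion $v(\mathbf z)=\sum_A m(A)\prod_{j\in A}z_j$. Every monomial not containing $i$ is unchanged when $i$ is added, so it cancels. Every monomial indexed by $A\ni i$ contributes $m(A)\,\mathbf 1\{A\setminus\{i\}\subseteq S\}$. Hence
\[
  v(S\cup\{i\})-v(S)=\sum_{A\ni i} m(A)\,\mathbf 1\{A\setminus\{i\}\subseteq S\}.
\]

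Next I substitute this into Eq.~\eqref{eq:marginal-curve} and exchange the sums. For a fixed $A\ni i$ with $|A|=k$, I need
\[
  \sum_{S:\,A\setminus\{i\}\subseteq S\subseteq N\setminus\{i\}} t^{|S|}(1-t)^{d-1-|S|}.
\]
This is the probability that $k-1$ prescribed elements are all retained when each of the $d-1$ elements of $N\setminus\{i\}$ is retained independently with probability $t$. It therefore equals $t^{k-1}$. The cleanest way to see this is to factor the sum over the free elements $N\setminus A$ as $\prod_{j\notin A}(t+(1-t))=1$. Grouping the sets $A$ by their cardinality $k$ then gives Eq.~\eqref{eq:grouped-marginal}, with $k$ ranging over $1,\dots,d$.

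The remaining claims follow directly.
\begin{itemize}
\item Since the monomials $t^{k-1}$ are linearly independent as polynomials, $g_i$ is constant if and only if every coefficient $h_{i,k}$ with $k\ge2$ vanishes.
\item To show that the vanishing of grouped coefficients is weaker than the vanishing of all higher-order Möbius coefficients, I give an explicit cancellation example. Take $d=3$, $i=1$, $m(\{1,2\})=1$ and $m(\{1,3\})=-1$. Then $h_{1,2}=0$, yet the Möbius coefficients of order two are not all zero. (The curve $g_1$ is constant, although $g_2$ is not.)
\item For an additive function, $m(A)=0$ whenever $|A|\ge2$, so each $h_{i,k}$ with $k\ge2$ is a sum of zeros and the curve is constant.
\end{itemize}

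The only step needing care is the binomial-sum identity in the second step; the factorization over $N\setminus A$ handles it. There is no real obstacle beyond this bookkeeping.
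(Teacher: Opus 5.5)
Your proposal is correct and follows essentially the same route as the paper: you expand the marginal contribution in the M\"obius basis, sum over contexts containing $A\setminus\{i\}$ to obtain $g_i(t)=\sum_{A\ni i} m(A)\,t^{|A|-1}$, and group by $|A|$; your factorization over $N\setminus A$ spells out the computation the paper leaves implicit. Your explicit $d=3$ cancellation example is a useful addition, since the paper only asserts that coefficients of equal order may cancel.
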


\begin{proof}
The marginal contribution satisfies
\[
    v(S\cup\{i\})-v(S)
    =
    \sum_{\substack{A\ni i\\A\setminus\{i\}\subseteq S}}
    m(A).
\]
Substitution into Eq.~\eqref{eq:marginal-curve} and summation
over all contexts containing $A\setminus\{i\}$ gives
\[
    g_i(t)
    =
    \sum_{A\ni i}
    m(A)t^{|A|-1}.
\]
Grouping terms by $|A|$ yields
Eq.~\eqref{eq:grouped-marginal}.
A polynomial is constant exactly when every coefficient of a
positive power vanishes.
\end{proof}

\begin{proposition}[Semivalue weight tensor]
\label{prop:semivalue}
Let
$\mathbf p=(p_0,\ldots,p_{d-1})$
be a distribution over context sizes and define
\[
    \cW_i^{(\mathbf p)}[\mathbf z]
    =
    (2z_i-1)
    \frac{
        p_{\|\mathbf z_{-i}\|_1}
    }{
        \binom{d-1}{\|\mathbf z_{-i}\|_1}
    }.
\]
Then
\[
    \widehat\phi_i^{(\mathbf p)}
    =
    \left\langle
        \widehat{\cT},
        \cW_i^{(\mathbf p)}
    \right\rangle_F .
\]
The weight tensor admits a counting-network representation with TT
ranks at most $d$. Exploiting the sparse counter transitions,
contraction with a rank-$r$ TNDM costs
$\cO(d^2r^2)$.

Special cases are
\[
    p_s=\frac1d
    \quad\text{for Shapley},
\]
\[
    p_s
    =
    \binom{d-1}{s}2^{-(d-1)}
    \quad\text{for Banzhaf},
\]
and
\[
    p_{d-1}=1
    \quad\text{for leave-one-out}.
\]
\end{proposition}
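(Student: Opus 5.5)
The plan is to verify the identity by direct expansion, then build an explicit counting tensor train for \(\cW_i^{(\mathbf p)}\), and finally bound the contraction cost by a left-to-right sweep.

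\textbf{Identity.} By Thm.~\ref{thm:query}, the functional is evaluated through its weight tensor. I would expand \(\langle\widehat{\cT},\cW_i^{(\mathbf p)}\rangle_F=\sum_{\mathbf z}\widehat v(\mathbf z)\,\cW_i^{(\mathbf p)}[\mathbf z]\) and pair each coalition \(\mathbf z\) having \(z_i=1\) with the coalition obtained by flipping \(z_i\) to \(0\). The factor \(2z_i-1\) equals \(+1\) on the first and \(-1\) on the second. The context size \(s=\|\mathbf z_{-i}\|_1\) is the same for both, since it ignores \(z_i\). Writing \(S\subseteq N\setminus\{i\}\) for the shared context, the sum becomes
\[
\sum_{S\subseteq N\setminus\{i\}}\frac{p_{|S|}}{\binom{d-1}{|S|}}\bigl[\widehat v(S\cup\{i\})-\widehat v(S)\bigr].
\]
This is the standard semivalue with size distribution \(\mathbf p\).

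\textbf{Special cases.} These follow by substituting \(\mathbf p\) into the weight \(p_s/\binom{d-1}{s}\):
\begin{itemize}
\item Shapley: \(p_s=1/d\) gives \(\frac{s!(d-1-s)!}{d!}\), which agrees with the Beta integral in Prop.~\ref{prop:marginal}.
\item Banzhaf: \(p_s=\binom{d-1}{s}2^{-(d-1)}\) gives the uniform weight \(2^{-(d-1)}\).
\item Leave-one-out: \(p_{d-1}=1\) keeps only \(S=N\setminus\{i\}\), with \(\binom{d-1}{d-1}=1\).
\end{itemize}

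\textbf{Counting network.} I would construct a TT with an internal state \(c\in\{0,\ldots,d-1\}\) that counts how many included units \(j\neq i\) have been seen so far.
\begin{itemize}
\item For each mode \(j\neq i\), the core is a shift matrix: it has entry \(1\) from state \(c\) to state \(c+z_j\) and zero elsewhere.
\item Mode \(i\) leaves the state unchanged and multiplies by \(2z_i-1\).
\item The left boundary is the state \(c=0\). The right boundary vector maps the final state \(s\) to \(p_s/\binom{d-1}{s}\).
\end{itemize}
Contracting this chain reproduces \(\cW_i^{(\mathbf p)}[\mathbf z]\) entrywise. After \(k\) modes at most \(\min(k+1,d)\) counter values are reachable, so every bond dimension is at most \(d\). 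These cores are stated in the inclusion-bit convention, so I would restate them in the value-tensor index \(2-z_j\) that the paper uses. This only relabels the physical index and does not change the bond dimensions.

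\textbf{Cost.} The step that needs the most care is the cost bound. Applying Thm.~\ref{thm:query} directly with \(\rho=d\) only gives \(\cO(d^3r^2)\), so I would exploit the sparsity of the counter cores. I would sweep from left to right while maintaining, for each reachable counter value \(c\), a row vector \(\mathbf m_c\in\mathbb R^{1\times r}\). This vector is the partial contraction of the TNDM cores seen so far, summed over all prefixes whose count equals \(c\). At mode \(j\neq i\) the update is
\[
\mathbf m'_{c'}=\mathbf m_{c'}G^{(j)}[\text{excl}]+\mathbf m_{c'-1}G^{(j)}[\text{incl}].
\]
Each counter value needs two vector--matrix products of cost \(\cO(r^2)\), which gives \(\cO(dr^2)\) per mode. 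At mode \(i\) the update is \(\mathbf m'_c=\mathbf m_c(G^{(i)}[\text{incl}]-G^{(i)}[\text{excl}])\), at the same cost. After the last mode the vectors are scalars, and I would read out \(\sum_s \mathbf m_s\,p_s/\binom{d-1}{s}\). Summing over the \(d\) modes gives \(\cO(d^2r^2)\).
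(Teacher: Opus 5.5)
Your proposal is correct and follows the same route as the paper. You expand the Frobenius contraction and group coalitions by the context size $\|\mathbf z_{-i}\|_1$, so that the factor $2z_i-1$ produces the marginal difference. You then represent $\cW_i^{(\mathbf p)}$ by a counter network whose sparse keep/increment transitions give $\cO(jr^2)$ work at site $j$ and $\cO(d^2r^2)$ overall. Your version makes explicit what the paper's proof leaves terse: the explicit cores, the right boundary vector $p_s/\binom{d-1}{s}$, the per-count sweep update, and the verification of the three special cases.
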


\begin{proof}
Expanding the contraction and grouping coalition indicators by
$\|\mathbf z_{-i}\|_1$ gives the semivalue definition.
The factor $2z_i-1$ implements the difference between retaining
and removing training unit $i$.

A counting network propagates the partial Hamming weight from left
to right. At site $j$, the sparse transition either preserves or
increments the count. The contraction therefore costs
$\cO(jr^2)$ at site $j$, and summation over all $d$ sites gives
$\cO(d^2r^2)$.
\end{proof}

Figure~\ref{fig:size} shows the counting-network construction used
to aggregate predictions over coalitions of a fixed cardinality.

\begin{figure}[t]
\centering
\begin{tikzpicture}[x=1cm,y=1cm,font=\small]
\foreach \k in {1,...,6}{
   \node[core,minimum size=5.4mm,font=\scriptsize] (g\k) at ({0.70+0.74*(\k-1)},0) {$\cG$};
   \node[qb,minimum width=5.4mm,minimum height=5.4mm] (k\k) at ({0.70+0.74*(\k-1)},1.10) {$K$};
   \draw[bond] (g\k)--(k\k);}
\foreach \k in {1,...,5}{
   \draw[bond] (g\k)--(g\the\numexpr\k+1\relax);
   \draw[qbond] (k\k)--(k\the\numexpr\k+1\relax);}
\node[qv,minimum size=3.8mm] (l0) at (-0.16,1.10) {$0$};
\node[qv,minimum size=3.8mm] (lS) at (5.02,1.10) {$s$};
\draw[qbond] (l0)--(k1);  \draw[qbond] (k6)--(lS);
\node[tndim,anchor=north] at (2.20,-0.36) {blue bond: model state $r$};
\node[tndim,anchor=south,text=tnorange!60!black] at (2.20,1.44) {orange bond: running count $a_j$};
\draw[decorate,decoration={brace,amplitude=3.5pt,mirror,raise=2pt},
      draw=tnink!45,line width=.5pt] (-0.36,-0.66)--(5.16,-0.66);
\node[tncap,anchor=north] at (2.20,-0.94) {closed network $=\sum_{|S|=s}\widehat v(S)$};
 
\begin{scope}[xshift=6.90cm,yshift=0.00cm]
  \node[tncap,anchor=south,text=tnink] at (1.70,1.86)
       {\textbf{the counting core $K^{(j)}$ is a counter}};
  \foreach \c/\lab in {0/0,1/1,2/2}{
     \node[tndim,anchor=east,text=tnink] at (-0.16,{0.58*\c}) {$a_{j-1}{=}\lab$};}
  \foreach \c in {0,1,2}{
     \foreach \x in {0,1,2}{
        \fill[tnorange!75!black] ({1.02*\x},{0.58*\c}) circle (1.4pt);}}
  \foreach \c in {0,1,2}{
     \foreach \x in {0,1}{
        \draw[qbond,->,draw=tnorange!45] ({1.02*\x+0.10},{0.58*\c}) -- ({1.02*\x+0.92},{0.58*\c});}}
  \foreach \c in {0,1}{
     \foreach \x in {0,1}{
        \draw[qbond,->,draw=tnorange!85!black] ({1.02*\x+0.09},{0.58*\c+0.06}) -- ({1.02*\x+0.90},{0.58*\c+0.52});}}
  \node[tndim,anchor=north,text=tnink] at (1.02,-0.20) {sites $j=1,\dots,d$};
  \draw[qbond,->,draw=tnorange!45] (2.55,1.34) -- (3.05,1.34);
  \node[tncap,anchor=west,text=tnink] at (3.10,1.34) {$z_j{=}0$: keep};
  \draw[qbond,->,draw=tnorange!85!black] (2.55,0.86) -- (3.05,1.06);
  \node[tncap,anchor=west,text=tnink] at (3.10,0.96) {$z_j{=}1$: advance};
\end{scope}
\end{tikzpicture}
\caption{Summing predictions at a fixed coalition size.
Contracting the fitted datamodel (blue) with a counting
weight tensor (orange) computes
$\sum_{|S|=s}\widehat v(S)$.
The counter starts at zero, increments when a group is
included, and terminates at $s$; it has at most $d+1$ states.
For the symmetric semivalue of group $i$, the corresponding
construction places the difference vector
$\boldsymbol\delta$ at $i$ and counts only the remaining
$d-1$ groups. A terminal count $s$ receives weight
$p_s/\binom{d-1}{s}$, where $p_s$ is the probability assigned
to context size $s$
(Proposition~\ref{prop:semivalue}). The tensor network representation lets us aggregate predictions over all subsets of a given size without enumerating those subsets individually.}
\label{fig:size}
\end{figure}

\subsection{Interaction Coefficients}

Under the indexing convention of Sec.~\ref{sec:tndm}, the
value-to-coefficient transformation on each physical mode is
\[
    M
    =
    A^\top
    =
    \begin{bmatrix}
        1 & -1\\
        0 & 1
    \end{bmatrix},
    \qquad
    M^{-1}
    =
    \begin{bmatrix}
        1 & 1\\
        0 & 1
    \end{bmatrix},
\]
where $A$ is defined in
Appendix~\ref{app:set-function-foundations}.
The first transformed coordinate selects the interaction component,
while the second selects the constant component.

\begin{proposition}[Rank-preserving interaction transform]
\label{prop:mobius}
The M\"obius transform acts locally on every physical mode and
therefore preserves the tensor network topology and every internal
rank of $\widehat{\cT}$.
For a rank-$r$ TT, the local transformation costs
$\cO(dr^2)$.

Fixing the transformed physical leg of training unit $i$ to the
interaction coordinate leaves an order-$(d-1)$ tensor whose entries
are precisely the M\"obius coefficients indexed by sets containing
$i$. Contracting the remaining modes with the counting network gives
\[
    \widehat h_{i,k}
    =
    \sum_{\substack{A\ni i\\|A|=k}}
    \widehat m(A)
\]
for all interaction orders $k$, at cost
$\cO(d^2r^2)$.
\end{proposition}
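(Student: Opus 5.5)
The statement to prove is Proposition~\ref{prop:mobius}. Its content has three parts: the Möbius transform is mode-local, fixing a leg gives the coefficients indexed by sets containing $i$, and the grouped sums come from a counting contraction. I would take them in that order.

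\textbf{Locality.} By \eqref{eq:value-coefficient-change-basis}, $\mathcal C_{\widehat v}=\widehat{\cT}\times_1 M\cdots\times_d M$ with $M=A^\top$ invertible. This is a mode-wise product, so in a TT or any tensor network it can be absorbed into the core carrying each physical leg: replace $G^{(j)}$ by $G^{(j)}\times_2 M$. No bond is touched, so topology and every internal rank $r_j$ are unchanged. Because $M$ is invertible, the ranks cannot drop either; this matches the equality of prefix-unfolding ranks from Proposition~\ref{prop:state-count}, since each unfolding is multiplied by a Kronecker product of $M$'s. The cost is one $2\times2$ multiplication per core slice, giving $\cO(r^2)$ per core and $\cO(dr^2)$ in total.

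\textbf{Fixing leg $i$.} By Proposition~\ref{prop:monomial-basis} and the identification $(\mathcal C_v)_{i_1\cdots i_d}=c_A$ with $A=\{j:i_j=1\}$, setting $i_i=1$ (the interaction coordinate) leaves an order-$(d-1)$ tensor. Its entry at $(i_j)_{j\neq i}$ is $\widehat m(\{i\}\cup\{j\neq i:i_j=1\})$. So its entries are exactly the Möbius coefficients of sets $A\ni i$, indexed bijectively by $A\setminus\{i\}$.

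\textbf{Counting contraction.} For $k=1,\ldots,d$, I would contract this order-$(d-1)$ tensor with the counting weight tensor of Proposition~\ref{prop:semivalue} and Fig.~\ref{fig:size}. That tensor gives weight $1$ to index patterns with exactly $k-1$ interaction coordinates and weight $0$ otherwise, which yields $\sum_{|A|=k,\,A\ni i}\widehat m(A)=\widehat h_{i,k}$. Running the counter without a terminal vector yields all $k$ simultaneously: the final boundary vector then holds $\widehat h_{i,k}$ in its state-$(k-1)$ slot. The counter state ranges over $\{0,\ldots,j\}$ at site $j$, and its transitions are sparse (keep or advance). Each site therefore costs $\cO(jr^2)$, and summing over $j$ gives $\cO(d^2r^2)$, as in the proof of Proposition~\ref{prop:semivalue}.

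\textbf{Main obstacle.} The delicate point is the cost accounting. A naive contraction of two TTs would cost $\cO(dr^2\rho^2)$ with $\rho=d$, which is too large. To get $\cO(d^2r^2)$ I would maintain the environment as $j+1$ separate $r$-dimensional vectors, one per counter state, and update each with at most two matrix–vector products per site. That gives $\cO(jr^2)$ per site, which is the cost the bound needs. As a consistency check, the result should agree with Proposition~\ref{prop:marginal-coefficients}.
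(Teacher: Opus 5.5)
Your proposal is correct and follows the same route as the paper's proof. Absorb the invertible local matrix $M$ into each core so that no bond changes, fix the transformed leg of unit $i$ to the interaction coordinate to select the coefficients $\widehat m(A)$ with $A\ni i$, and contract the remaining legs with the sparse counting network at cost $\cO(jr^2)$ per site; your extra bookkeeping (invertibility of $M$ for minimal ranks, and an open counter that yields every $\widehat h_{i,k}$ in a single $\cO(d^2r^2)$ sweep) fills in details the paper only states.
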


\begin{proof}
The Boolean M\"obius transformation applies the matrix $M$
independently to every physical mode.
Multiplication of a physical leg by an invertible local matrix
modifies only the corresponding tensor core and leaves every
internal bond dimension unchanged.

Fixing the transformed leg of unit $i$ to the interaction coordinate
selects coefficients indexed by sets containing $i$.
Contracting the remaining modes with the counting network groups
those coefficients by cardinality, giving the stated quantities and
the $\cO(d^2r^2)$ complexity.
\end{proof}

Figure~\ref{fig:mobius} illustrates the local M\"obius change of basis,
the interaction slice associated with one training unit, and the
subsequent aggregation by interaction order.

\begin{figure}[t]
\centering
\begin{tikzpicture}[x=1cm,y=1cm,font=\small]
\node[tnttl,anchor=south] at (2.20,2.24) {(a) open the interaction slice $\cI^{(i)}$};
\foreach \k in {1,...,7}{
   \node[core,minimum size=5.0mm] (g\k) at ({0.40+0.60*(\k-1)},0) {$\cG$};
   \node[loc] (m\k) at ({0.40+0.60*(\k-1)},0.80) {$M$};
   \draw[bond] (g\k)--(m\k);}
\foreach \k in {1,...,6}{\draw[bond] (g\k)--(g\the\numexpr\k+1\relax);}
\foreach \k in {1,2,3,5,6,7}{
   \draw[oleg] (m\k)--++(0,0.44);
   \node[tndim,anchor=south,text=tngreen!55!black] at ($(m\k)+(0,0.42)$) {$a_{\k}$};}
\node[qv] (one) at ($(m4)+(0,0.62)$) {$e_1$};
\draw[bond] (m4)--(one);
\node[tndim,anchor=south,text=tnorange!65!black] at ($(one)+(0,0.24)$) {fix $a_i{=}1$};
\node[tncap,anchor=north,text width=4.4cm,align=center] at (2.20,-0.36)
     {$d-1$ legs stay \emph{open}: the output is a tensor, not a scalar};
 
\begin{scope}[xshift=5.55cm]
  \node[tnttl,anchor=south] at (1.75,2.24) {(b) summarize it by order};
  \node[tn,minimum width=30mm,minimum height=8mm,fill=tngreen!10,draw=tngreen!80!black]
       (I) at (1.75,0.20) {$\cI^{(i)}$};
  \foreach \x in {-1.24,-0.62,0,0.62,1.24}{
     \draw[oleg] ({1.75+\x},0.60)--({1.75+\x},0.86);
     \node[qb,minimum width=4.6mm,minimum height=4.6mm] at ({1.75+\x},1.10) {$K$};}
  \foreach \x/\y in {-1.24/-0.62,-0.62/0,0/0.62,0.62/1.24}{
     \draw[qbond] ({1.75+\x+0.24},1.10)--({1.75+\y-0.24},1.10);}
  \node[qv,minimum size=3.8mm] (z0) at (-0.05,1.10) {$0$};
  \node[qv,minimum size=5.4mm] (zk) at (3.58,1.10) {$k{-}1$};
  \draw[qbond] (z0)--(0.28,1.10);
  \draw[qbond] (3.22,1.10)--(zk);
  \node[tncap,anchor=north,text width=4.6cm,align=center] at (1.75,-0.36)
       {$H_{i,k}=\inner{\cI^{(i)}}{C^{(-i)}_{k-1}}$\\[2pt]
        $=\displaystyle
  \sum_{\substack{A\subseteq N\\ i\in A,\ |A|=k}}
  \widehat m(A)$};
\end{scope}
 
\begin{scope}[xshift=10.10cm,yshift=-0.10cm]
  \draw[panelbox] (0.00,-0.30) rectangle (3.80,2.34);
  \node[tncap,anchor=north,text width=3.5cm,align=center,text=tnink] at (1.90,2.26)
    {\textbf{green node} $=$ local change of basis};
  \node[anchor=north,inner sep=2pt] at (1.90,1.62)
    {\scriptsize$M=\left[\begin{smallmatrix}1&-1\\0&1\end{smallmatrix}\right]$,\;\;
 $M^{-1}=\left[\begin{smallmatrix}1&1\\0&1\end{smallmatrix}\right]$};
  \node[tncap,anchor=north,text width=3.5cm,align=center] at (1.90,0.86)
    {values $\leftrightarrow$ M\"obius basis. The graph topology and every bond
     dimension are unchanged (Prop.~\ref{prop:mobius}).};
\end{scope}
\end{tikzpicture}
\caption{Interaction discovery for one training unit (example $i=4$). (a)~A
green $2\times2$ matrix on every physical leg rewrites the network in the
M\"obius basis; fixing the transformed leg of unit $i$ to $1$ leaves an
order-$(d-1)$ tensor whose entries are exactly the interactions containing $i$.
Unlike a scalar attribution, this tensor-valued transformation deliberately keeps its
legs open. (b)~Contracting those legs with the counting weight tensor of
Figure~\ref{fig:size} collapses the slice into the degree profile $H_{i,k}$. The fitted TNDM retains information about interactions, allowing us to inspect how strongly each training unit participates in interactions of different orders.}
\label{fig:mobius}
\end{figure}

\section{Fixed-Score Attribution and Subset Prediction}
\label{app:if-trak}

Influence functions, TracIn, and TRAK compute attribution scores
using information from fitted learners or training trajectories
\citep{koh2017influence,pruthi2020tracin,park2023trak}.
Suppose a score vector
$\mathbf a=(a_1,\ldots,a_d)\in\mathbb R^d$
has been computed once and then reused to predict outcomes for
different training subsets. The resulting subset predictor has the
form
\[
    \widehat v_{\mathbf a}(S)
    =
    b+\sum_{i\in S}a_i .
\]

\begin{proposition}[A fixed-score subset predictor is additive]
\label{prop:score-induced}
For any fixed vector $\mathbf a\in\mathbb R^d$, the predictor
\[
    \widehat v_{\mathbf a}(S)
    =
    b+\sum_{i\in S}a_i
\]
has interaction degree at most one and TT ranks at most two in every
ordering. The marginal contribution of training unit $i$ equals
$a_i$ in every context. Consequently, every normalized symmetric
semivalue of training unit $i$ equals $a_i$.
\end{proposition}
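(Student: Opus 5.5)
The predictor \(\widehat v_{\mathbf a}(S)=b+\sum_{i\in S}a_i\) is already written in the monomial basis of Prop.~\ref{prop:monomial-basis}. In indicator form it reads
\[
\widehat v_{\mathbf a}(\mathbf z)=b\,m_\emptyset(\mathbf z)+\sum_{i}a_i m_{\{i\}}(\mathbf z).
\]
By uniqueness of the M\"obius expansion, every coefficient \(c_A\) with \(|A|\ge2\) vanishes. The degree claim then follows from Prop.~\ref{prop:linear-order-one}: the function is linear, so \(\deg\le1\).

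\textbf{TT ranks.} I will use Prop.~\ref{prop:state-count}. Fix any ordering \(\pi\) and cut \(k\), and write \(A\subseteq L_k^\pi\), \(C\subseteq R_k^\pi\). Then
\[
\widehat v_{\mathbf a}(A\cup C)=\Bigl(b+\sum_{i\in A}a_i\Bigr)\cdot 1+1\cdot\sum_{i\in C}a_i .
\]
Taking \(\sigma(A)=(b+\sum_{i\in A}a_i,\,1)\) and \(\rho(C)=(1,\,\sum_{i\in C}a_i)\) gives \(\widehat v_{\mathbf a}(A\cup C)=\langle\sigma(A),\rho(C)\rangle\) with \(r=2\). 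Hence every unfolding has rank at most two, and so every minimal TT bond is at most two. The same bound holds for the coefficient tensor, since the mode-wise change of basis preserves ranks.

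\textbf{Marginal contributions.} For \(S\not\ni i\),
\[
\widehat v_{\mathbf a}(S\cup\{i\})-\widehat v_{\mathbf a}(S)=a_i
\]
by direct subtraction. Equivalently, the marginal curve in Eq.~\eqref{eq:marginal-curve} is constant, as Prop.~\ref{prop:marginal-coefficients} predicts for additive functions.

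\textbf{Semivalues.} A normalized symmetric semivalue, in the form of Prop.~\ref{prop:semivalue}, averages the marginal contributions over contexts with weights \(p_s/\binom{d-1}{s}\). There are \(\binom{d-1}{s}\) contexts of size \(s\), and each has marginal contribution \(a_i\). The semivalue is therefore
\[
\sum_s p_s\,a_i=a_i,
\]
using \(\sum_s p_s=1\). This covers Shapley, Banzhaf, and leave-one-out as special cases.

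There is no real obstacle in this argument. The only point needing care is the TT-rank step: the rank-two factorization must be valid for every cut of every ordering, and the explicit \(\sigma,\rho\) construction above handles this uniformly.
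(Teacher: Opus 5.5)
Your proof is correct. The degree and marginal-contribution steps match the paper's, but the TT-rank step takes a different route. The paper writes down explicit cores, $G_1[2-z_1]=\begin{bmatrix}1 & b+a_1z_1\end{bmatrix}$, $G_j[2-z_j]=\begin{bmatrix}1 & a_jz_j\\ 0 & 1\end{bmatrix}$ for $1<j<d$, and $G_d[2-z_d]=\begin{bmatrix}a_dz_d\\ 1\end{bmatrix}$. It multiplies them out to get $b+\sum_i a_iz_i$, and other orderings follow by relabeling. You instead bound each prefix unfolding by an explicit rank-two factorization and invoke Prop.~\ref{prop:state-count}.

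The two arguments are closely linked. Up to swapping coordinates, your $\sigma(A)$ and $\rho(C)$ are exactly the partial products $G_1\cdots G_k$ and $G_{k+1}\cdots G_d$ of the paper's cores. Your version is visibly uniform over all orderings and cuts. However, passing from ``every unfolding has rank at most two'' to ``one tensor train has every bond at most two'' uses the standard TT-SVD fact that all unfolding ranks are attained simultaneously. The paper relies on the same fact in Prop.~\ref{prop:coverage-tt}. The explicit cores give a self-contained certificate that needs no such theorem.

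For the semivalue claim, the paper argues from the constant marginal curve $\widehat g_i\equiv a_i$ of Eq.~\eqref{eq:marginal-curve}. Your computation $\sum_s p_s a_i=a_i$, using the weights of Prop.~\ref{prop:semivalue}, makes explicit the normalization $\sum_s p_s=1$ that the paper's one-line conclusion leaves implicit.
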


\begin{proof}
Using coalition indicators,
\[
    \widehat v_{\mathbf a}(\mathbf z)
    =
    b+\sum_{i=1}^{d}a_i z_i .
\]
The multilinear expansion contains no coefficients of degree greater
than one, so
\[
    \deg(\widehat v_{\mathbf a})\le 1.
\]

For $d\ge2$, a TT representation with ranks at most two is given by
\[
    G_1[2-z_1]
    =
    \begin{bmatrix}
        1 & b+a_1z_1
    \end{bmatrix},
\]
\[
    G_j[2-z_j]
    =
    \begin{bmatrix}
        1 & a_jz_j\\
        0 & 1
    \end{bmatrix},
    \qquad
    1<j<d,
\]
and
\[
    G_d[2-z_d]
    =
    \begin{bmatrix}
        a_dz_d\\
        1
    \end{bmatrix}.
\]
Multiplying the cores gives
\[
    G_1[2-z_1]\cdots G_d[2-z_d]
    =
    b+\sum_{i=1}^{d}a_i z_i.
\]
The case $d=1$ is immediate.

For every context
$S\subseteq N\setminus\{i\}$,
\[
    \widehat v_{\mathbf a}(S\cup\{i\})
    -
    \widehat v_{\mathbf a}(S)
    =
    a_i.
\]
Hence the marginal-contribution curve of
Eq.~\eqref{eq:marginal-curve} satisfies
\[
    \widehat g_i(t)=a_i
    \qquad
    \text{for all }t\in[0,1].
\]
Every normalized symmetric semivalue therefore returns $a_i$.
\end{proof}

Proposition~\ref{prop:score-induced} concerns the reuse of a
\emph{fixed} score vector across training subsets. Influence
functions, TracIn, and TRAK may use gradients, curvature information,
checkpoints, projections, or ensembles when producing the score
vector. Recomputing the attribution scores after changing the
training subset can produce context-dependent quantities outside the
fixed-score model above.

The coalition-prediction experiments in the paper compare functions
fitted directly to observed subset--outcome pairs. Consequently, the
reported prediction comparisons concern the evaluated datamodel
families. A comparison with attribution algorithms that use
additional information from the trained learner or training
trajectory requires a separate experimental protocol.
\section{Surrogate classes}
\label{app:surrogates}
\label{app:surrogate-definitions}

Let $\mathbf z\in\{0,1\}^d$ denote the indicator vector of
a training coalition. We compare datamodels that impose
different restrictions on the set function predicted from
these indicators.

\paragraph{Linear and polynomial datamodels.}
The linear datamodel is the affine set function defined in
Sec.~\ref{sec:tndm}. A degree-$p$ polynomial datamodel
has the form
\[
    \widehat v(\mathbf z)
    =
    \sum_{\substack{A\subseteq N\\|A|\leq p}}
    \theta_A\prod_{i\in A}z_i.
\]
Degree-2 and degree-3 datamodels correspond to quadratic
and cubic models, respectively; the degree specifies
the maximum interaction order included.
Dense polynomial models fit all monomials through the
specified degree using ridge regression. Sparse degree-3
models select monomials using orthogonal matching pursuit.
Within a degree-bounded model, factorizing the coefficient
tensors imposes an additional rank constraint.

\paragraph{Kernel datamodels.}
RBF kernel ridge uses the kernel
\[
    K_\gamma(\mathbf z,\mathbf z')
    =
    \exp\!\left(
        -\gamma\|\mathbf z-\mathbf z'\|_2^2
    \right),
\]
with bandwidth and regularization selected using
validation coalitions.

\paragraph{Tensor network datamodels.}
Direct CP, TT, and hierarchical Tucker (HT) datamodels
approximate the value tensor using the corresponding
factorization, without imposing a polynomial-degree cutoff.
The TT degree ladder first fits degree-bounded predictors
and then represents them as tensor-trains. We distinguish
this procedure from fitting a TT directly to the observed
coalition outcomes.

Where cardinality adjustment is used, the method label
identifies the surrogate component fitted after adjustment.
Its degree need not equal the degree of the complete
predictor. Preprocessing and fitting configurations are
specified separately for each experiment.

\section{Experimental details}
\label{app:protocol}

\subsection{Relation-composition task}
\label{app:khop-details}
Our construction draws on the controlled relational-composition
setting of \citet{wang2024grokked}, adapted to training data
attribution through removable groups of relation examples.
We use three-step chains with explicitly generated intermediate
entities, whereas their composition experiments study two-hop
inference without emitting the intermediate entity.
\paragraph{Entities and relations.}
Entities are elements of $\mathbb Z_{31}$, and each relation
specifies an additive shift modulo 31. For relation sequence
$a$ and initial entity $x$, the evaluation query $q_{a,x}$
asks for the terminal entity in
\[
    s_0=x,
    \qquad
    s_h=s_{h-1}+b_{a,h}\pmod{31},
    \quad h=1,2,3.
\]
The model generates the intermediate entities before
predicting the terminal entity. Each relation sequence
is evaluated at all 31 initial entities.

\paragraph{Training groups.}
Let $G_{a,h}$ denote the groups supporting the $h$-th
coalition-dependent relation of sequence $a$. These sets
are disjoint within each sequence, although a group may
support relations in several sequences. Table~\ref{tab:task-design}
summarizes the three configurations. All have three reasoning
steps. At $d=24$, the third relation is generic and is taught
through background data available to every coalition.

\begin{table}[t]
    \centering
    \small
    \begin{tabular}{rrrr}
        \toprule
        Training groups &
        Relation sequences &
        Required relations &
        Groups per relation \\
        \midrule
        8  & 8  & 3 & 2 \\
        12 & 10 & 3 & 2 \\
        24 & 10 & 2 & 3 \\
        \bottomrule
    \end{tabular}
    \caption{Reasoning-task configurations. Required relations
    are those whose training examples depend on the selected
    coalition. Each relation sequence defines 31 evaluation
    queries, one per initial entity. The three reasoning settings vary the number of training groups while keeping the same basic relation-composition structure.}
    \label{tab:task-design}
\end{table}

\paragraph{Relation examples.}
Each required relation contributes 372 training examples:
31 initial entities, three positions for its token, and
four generic contexts. Each example contains one
coalition-dependent relation token; the other two positions
contain generic relation tokens.

Groups supporting the same relation supply identical
examples of that relation. The coalition training pool
contains these examples once whenever at least one
supporting group is selected. Evaluation combines the
required relation tokens within a single chain, whereas
training presents them separately. All 31 initial entities
occur in training, so evaluation tests composition of
separately learned relations.

\paragraph{Absent relations.}
When no selected group supports a required relation, its
token embedding is initialized to the identity-relation
embedding and tied to it after every optimizer step.
The shifts are chosen so that no nonempty subset of the
coalition-dependent shifts sums to zero modulo 31.
This prevents omitted shifts from cancelling each other.

\paragraph{Background data and sampling.}
Background examples are three-step chains with randomly
sampled initial entities, shifts, and generic relation
aliases. They contain no coalition-dependent relation
tokens. Approximately one quarter of each background batch
consists of examples with two identity relations.

For a nonempty coalition, each training batch contains
410 background examples and 102 examples sampled with
replacement from the selected relation pool. The empty
coalition uses 512 background examples. Training steps
and batch size are fixed across coalitions. Consequently,
the expected exposure per required relation decreases
as the selected relation pool grows.

\subsection{Transformer learner and training}
\label{app:learner}

\paragraph{Architecture.}
The learner is an eight-layer decoder-only transformer
with width 256, eight attention heads, and MLP expansion
ratio four. It uses parameter-free pre-RMSNorm, rotary
position embeddings, QK RMS normalization, ReLU-squared
MLP activations, value-residual mixing, an embedding
shortcut, and an untied output layer. Logits are capped
using a scaled $\tanh$ with cap 30. There are no biases
or dropout.

The context length is nine tokens. The vocabulary contains
31 entity tokens, 93 generic relation tokens, one token
for each coalition-dependent relation of each sequence,
and an equality token. Vocabulary sizes are 149, 155,
and 145 for $d=8$, $d=12$, and $d=24$, respectively.

\paragraph{Training objective.}
Training is teacher-forced on sequences
$
    [x,r_1,=,s_1,r_2,=,s_2,r_3,=].
$
Cross-entropy is applied to the three next-entity
predictions, with the softmax restricted to the 31
entity logits. The intermediate entities supplied
during training are the correct ones; at evaluation,
they are generated greedily by the model.

\paragraph{Optimization.}
Each coalition is trained from scratch for 1,000 AdamW
steps with batch size 512, learning rate $10^{-3}$,
and weight decay $0.1$ applied to all parameters.
AdamW uses its default momentum and numerical-stability
settings. There is no learning-rate schedule, warmup,
gradient clipping, or mixed precision. We evaluate the
final model; intermediate logging is not used to select
a checkpoint.

Within each experimental configuration and learner seed,
coalitions share the same initialization. The learner
seed also controls data generation. The blind $d=12$
replication and all $d=24$ runs use Quadro RTX 8000 GPUs.

\subsection{Measured outcomes}
\label{app:outcomes}

For evaluation query $q_{a,x}$, the model first generates
intermediate entities $\widehat s_1$ and $\widehat s_2$
greedily. Let $\ell_{a,x}(S)$ denote the natural logarithm
of the probability assigned to the correct terminal entity,
conditional on the resulting context. The softmax is
restricted to the 31 entity logits.

The outcome is therefore a final-answer log probability,
rather than the log probability of the entire generated
sequence. Higher values indicate better predictions.
For each relation sequence, we also record the mean
\begin{equation}
    v_a(S)
    =
    \frac{1}{31}\sum_{x=0}^{30}\ell_{a,x}(S).
        \label{eq:khop-utility}
\end{equation}
We record both the individual-query
outcomes and these sequence means, together with the
corresponding final-answer accuracies.

The $d=12$ and $d=24$ datamodels are fitted to the
sequence means $v_a(S)$, separately for each learner
seed. Outcomes are not averaged across learner seeds
before fitting.

For a fixed learner seed, complete coalition enumeration
provides exact attribution references for the measured
set function. It does not give the expectation over
training randomness. Likewise, the evaluation queries
answered by one model share a single retraining run
and are not independent retraining replicates.
\subsection{Coalition sampling and evaluation splits}
\label{app:coalition-designs}

Within each experiment and observation budget, all
datamodels receive the same coalition--outcome pairs.
Validation coalitions are used for hyperparameter
selection. Test outcomes are excluded from fitting
and model selection.

\paragraph{Complete enumeration at $d=8$.}
We retrain the learner on all $2^8=256$ coalitions.
Datamodels receive only sampled coalition observations,
partitioned into fitting and validation sets. Predictions
are evaluated on the remaining coalitions. The prediction budget sweep
uses four fixed initial-entity-zero outputs, with a 75\%/25\%
fitting/validation split and three repeated coalition splits. Validation
selects hyperparameters within each family, output, and split; NMSE is
then summarized by the median across the four outputs and three splits.
Complete enumeration also supplies reference outcomes.

\paragraph{Fixed observation design at $d=12$.}
The observation design is constructed without retraining
outcomes, using pivoted QR on group-indicator monomials
of degree at most three, subject to cardinality quotas.
It contains 160 new coalitions: 52 of size two, 52 of
size four, and eight of each size from five through
eleven. Nested prefixes define budgets of
32, 48, 64, 96, 128, and 160 new observations.

All $\binom{12}{3}=220$ size-three coalitions are excluded
from surrogate fitting and used for evaluation.
Fourteen additional size-two coalitions complete the
pair observations needed to compute third-order
differences at the empty coalition. A separate collection
of gate observations checks the learner's behavior;
these observations are not included in the reported
count of new coalitions.
At 160 new observations, 177 distinct outcomes are used for fitting
and validation (141 fitting and 36 validation); additional learner checks are accounted for separately from these
fitting and validation outcomes.

The replication uses learner seeds 43 and 59 as blind
evaluation seeds. Seed 31 is reported separately because
results from the earlier experiment were available
during development. Datamodel fits and set nominations
are frozen before their evaluation outcomes are obtained.

\paragraph{Random observation design at $d=24$.}
Coalitions are sampled by including each training group
independently with probability $1/2$. Duplicate masks
are discarded, and the remaining masks are retained
in draw order. Nested prefixes of 256, 512, and 1,024
coalitions define the observation budgets. Each budget
includes a 20\% validation allocation stratified by
coalition cardinality.

A disjoint random set of 256 coalitions is used for
prediction evaluation. An additional 46 evaluation
coalitions comprise the full and empty sets, 24
leave-one-out sets, and 20 sets obtained by removing
all supporting groups for one required relation.
The full and empty coalitions are excluded from the
training masks.

The same observation and evaluation masks are used
across learner seeds 31, 43, and 59. Each datamodel
is fitted separately for each seed, so comparisons
across seeds hold the coalition design fixed.

\subsection{Surrogate fitting and model selection}
\label{app:surrogate-fitting}

Datamodels are fitted to the observed coalition indicators
and measured outcomes. Hyperparameters are selected using
validation coalitions. Test outcomes and attribution
references derived from test outcomes are excluded
from model selection.
\paragraph{Fitting objective.}
Let $\mathcal O_{\mathrm{fit}}=\{(S_b,y_b)\}_{b=1}^{n_{\mathrm{fit}}}$
be the fitting observations. The regularized fitting objective is
\begin{equation}
\label{eq:tndm-objective}
\min_\theta\;
\frac{1}{n_{\mathrm{fit}}}\sum_{b=1}^{n_{\mathrm{fit}}}
w_b\bigl(\widehat v_\theta(S_b)-y_b\bigr)^2
+\lambda\Omega(\theta),
\end{equation}
where $w_b$ are observation weights and $\Omega$ is the configured
regularizer. When cardinality adjustment is used, the tensor model
is fitted to the residual outcomes defined below. Ranks and
regularization settings are selected using validation coalitions.
\paragraph{Linear--tensor hybrids.}
\label{app:linear-tensor-hybrids}
For Linear + TNDM-TT, the residual component above takes
the form
$
r_a(S)=\ell_a(S)+t_a(S),
$
where $\ell_a$ is a fitted linear predictor and $t_a$
is a TT residual correction. The complete predictor retains
the shared cardinality adjustment.
Attribution values are computed from the combined predictor.
Setting the TT correction to zero recovers the fitted
linear baseline.
The Linear + CP and Linear + HT variants use the
corresponding tensor parameterizations for the correction.
\paragraph{Cardinality adjustment at $d=12$ and $d=24$.}
For each relation sequence $a$, the fitted predictor
has the form
$
\widehat v_a(S)
=
p_a(u(S)) + o_a(|S|) + r_a(S),
\qquad
u(S)=\frac{|S|-d/2}{d/2}.
$
Here $p_a$ is a degree-3 polynomial fitted by weighted
ridge regression.
Each fitting observation receives weight inversely
proportional to the number of fitting coalitions
of the same size, with weights normalized to have
mean one.
For each observed size $s$, $o_a(s)$ is the mean
residual after subtracting $p_a$ from the fitting
outcomes of that size.
We set $o_a(s)=0$ when no fitting coalition has
size $s$.
The residual model $r_a$ is then fitted to the
remaining signal.

The cardinality baseline sets $r_a=0$.
The cardinality-plus-linear baseline uses a linear
residual model.
Polynomial, kernel, and tensor network variants
use their respective residual models with the
same cardinality adjustment.
Thus, the family names in these comparisons
identify the residual model; polynomial degree
and tensor rank describe that component.

Cardinality coefficients, size offsets, observation
weights, and residual scales are computed from
fitting outcomes only.
Validation outcomes select model configurations
and stopping points.
The selected predictors are not refitted on the
combined fitting and validation sets.

\paragraph{Tensor orders.}
Known relation-to-group assignments define candidate tensor orders.
At \(d=12\), TT considers the natural order and orders grouping the
supporting groups of selected relation sequences. At \(d=24\), it considers
a relation-based order and a fixed shuffled order. Validation selects
among these candidates. CP and HT use the configured relation-based
mode arrangement.

\paragraph{Polynomial and kernel configurations at $d=24$.}
Polynomial ridge considers 13 logarithmically spaced
regularization values between $10^{-5}$ and $10^2$.
Sparse degree-3 regression uses orthogonal matching pursuit
with the number of selected terms chosen from
$
    \{4,8,16,32,64,128\}.
$
For RBF kernel ridge, let $\delta_{\mathrm{med}}$ denote
the median squared distance between coalition indicators.
The kernel coefficient is selected from
$
    \gamma
    =
    \frac{1}{\eta\,\delta_{\mathrm{med}}},
    \qquad
    \eta\in\{0.25,0.5,1,2,4\},
$
using the same ridge-regularization grid.

\paragraph{Tensor configurations at $d=24$.}
Direct CP, TT, and HT models consider ranks
$\{2,4,8,16\}$ and weight-decay/function-penalty pairs
$
    (10^{-3},10^{-4})
    \quad\text{and}\quad
    (10^{-2},10^{-3}).
$
Optimization uses AdamW with learning rate $0.02$,
at most 201 steps, and gradient clipping at 10.
Validation is evaluated every ten steps, with early
stopping.
Configurations are selected by validation mean-squared
error.

The TT degree ladder uses polynomial predictors through
degree six and distills them into tensor-trains on
2,048 coalitions, with ranks selected from $\{2,4,8\}$.
This procedure is reported separately from direct TT
fitting.

\subsection{Evaluation metrics}
\label{app:metrics}

\paragraph{Coalition prediction.}
For a fixed learner seed, let $\mathcal E$ denote the
test coalitions and let $M$ be the number of relation
sequences. Define
$
    \overline v_a
    =
    \frac{1}{|\mathcal E|}
    \sum_{S\in\mathcal E}v_a(S).
$
In the $d=24$ experiment, normalized mean-squared error
is computed as
$
    \operatorname{NMSE}
    =
    \frac{1}{M}\sum_{a=1}^{M}
    \frac{
        \sum_{S\in\mathcal E}
        \bigl(\widehat v_a(S)-v_a(S)\bigr)^2
    }{
        \sum_{S\in\mathcal E}
        \bigl(v_a(S)-\overline v_a\bigr)^2
    }.
$
Here, $\mathcal E$ contains the 256 random test coalitions.
We report the mean across three learner seeds and their
range. These seeds share the task and coalition design.

\paragraph{Interaction discovery.}
For the $d=12$ experiment, each triple $T\subseteq N$
receives the score
$
    \left|\Delta_T\widehat v_a(\emptyset)\right|
    =
    \left|
        \sum_{A\subseteq T}
        (-1)^{3-|A|}\widehat v_a(A)
    \right|.
$
We rank all 220 triples by this score and compute average
precision, taking complete triples as positives.
A triple is complete when it contains one supporting
group for each required relation.

The reported average precision is averaged over ten
relation sequences and the two blind learner seeds.
This measures recovery of the structural completeness
labels, rather than numerical accuracy of all third-order
differences. Interaction discovery is a secondary endpoint;
comparisons across methods and budgets are not
multiplicity-corrected.

\paragraph{Training-set selection.}
For direct selection, candidate coalitions of a fixed
size are ranked by predicted utility. Let
$\mathcal R_a$ denote the retained recommendations
for relation sequence $a$. We evaluate
\[
    \operatorname{Completeness}(\mathcal R_a)
    =
    \frac{1}{|\mathcal R_a|}
    \sum_{S\in\mathcal R_a}c_a(S),
    \qquad
    \operatorname{Utility}(\mathcal R_a)
    =
    \frac{1}{|\mathcal R_a|}
    \sum_{S\in\mathcal R_a}v_a(S).
\]
Completeness is determined by the support sets, whereas
utility is measured by retraining.

The primary $d=12$ endpoint retains three size-four
coalitions per relation sequence, excluding fitted
coalitions from the candidate set. The $d=24$
structural comparison retains three size-three
coalitions per sequence. A separate seed-31 evaluation
retrains the highest-ranked nomination per sequence.
Attribution-based selection is reported separately
from direct utility ranking.
\subsection{Modular retraining: attribution from one fit}
\label{app:modular-attribution}
We construct a controlled retraining task with eight training data
groups. The groups are $A_{\mathrm{primary}}$, $B$, $C$,
$A_{\mathrm{replica}}$, and four distractor groups. Each contains
eleven training facts describing a relation on eleven entities;
the two $A$ groups provide the same relation. The seven distinct
relations are affine permutations modulo eleven, with
multiplier--offset pairs
$(1,1),(2,0),(1,3),(3,1),(4,2),(5,3),(7,4)$.
The learner consists of seven relation-specific $11\times11$
conditional-logit tables.
For each learner seed in $\{31,43,59\}$, every coalition starts
from the same seed-specific initialization, with entries sampled
from $\mathcal{N}(0,0.05^2)$.
Each nonempty coalition is trained for 200 full-batch Adam
updates at learning rate $0.1$; the empty coalition retains
the initial state.
The primary utility is the mean log probability of the correct
target after composing the learned transition matrices for
$A\rightarrow B\rightarrow C$.
We measure all $2^8=256$ coalition outcomes for each seed.
This conditional-logit experiment is distinct from the
transformer-retraining experiments: its players are removable
training groups, not input features.

For each learner seed and budget
$m\in\{8,16,32,64,128\}$, we draw 20 observation sets by
uniformly sampling $m$ distinct coalitions without replacement.
The empty and full coalitions are not forced.
All surrogate methods share the observation set and internal
fitting/validation split within each repetition.
Budgets are sampled independently rather than as nested prefixes.
The budget includes validation observations.

Each surrogate jointly predicts four utility outputs:
the three-relation composition, the $A\rightarrow B$ composition,
an additive atomic-relation utility, and a redundant-$A$ utility.
Hyperparameters are selected by prediction MSE averaged over
the validation coalitions and all four outputs.
Figure~\ref{fig:main-har-attribution} reports attribution errors
for the primary three-relation output.
Shapley values, pairwise interactions, third-order interactions,
and held-out coalition predictions all use the same fitted
surrogate. Exact reference attributions and held-out errors
are not used for model selection.

\paragraph{Surrogates and fitting.}
Linear and Degree-2 use ridge regression on binary inclusion
indicators and their corresponding monomials, with an
unpenalized intercept and 19 penalties logarithmically spaced
from $10^{-6}$ to $10^3$.
Degree-3 uses orthogonal matching pursuit, considering support
sizes $\{2,4,8,16,32,64,128\}$ subject to the fitting-sample
limit, with a shared support across outputs.
RBF uses
$\exp(-\gamma\,\mathrm{Hamming}(S,T))$,
with $\gamma\in\{0.05,0.1,0.25,0.5,1,2\}$ and the same
ridge-penalty grid.
CP and TT are fitted by gradient descent. CP considers ranks $\{1,2,3,4,6,8,12,16\}$ and TT considers
$\{1,2,3,4,6,8\}$.
Both use ridge coefficients $\{10^{-4},10^{-2},1\}$,
two restart candidates.
\paragraph{Attribution accuracy.}
For completely enumerated set functions, reference
attribution quantities are computed from the measured
coalition outcomes. The corresponding quantities are
then evaluated on each fitted datamodel.
Figure~\ref{fig:main-har-attribution} reports relative $\ell_2$ errors
for the modular retraining task.

Reference values from enumeration are exact for the
measured fixed-seed set function. They do not represent
an average over learner randomness.

\subsection{Hyperparameter search at \texorpdfstring{$d=12$}{d=12}}
\label{app:d12-hyperparameters}

The following configurations apply to the
residual models fitted after cardinality
adjustment.

\paragraph{Polynomial and kernel models.}
Polynomial ridge uses thirteen logarithmically
spaced regularization values from $10^{-5}$
to $10^2$.
Its design contains indicator monomials through
the specified degree, without an additional
intercept.
Fitting uses inverse-frequency cardinality
weights.
A single regularization value is selected
jointly across the ten relation sequences.

Degree-three OMP uses the same weighted
monomial design and considers sparsities
from one through sixty.
The sparsity is selected jointly across
sequences.
RBF kernel ridge considers
$
\gamma\in
\{0.05,0.1,0.2,0.35,0.5,0.75,1.0\}
$
with the same thirteen ridge parameters.
For fitting weights $w_i$, its weighted
kernel system uses the regularizer
$\alpha\,\operatorname{diag}(1/w_i)$.

\paragraph{Direct tensor models.}
CP, TT, and HT consider ranks
$
r\in\{2,4,8,16,32\}.
$
The weight-decay and functional-penalty
coefficients are paired as
$
(10^{-4},10^{-5}),\qquad
(10^{-3},10^{-4}),\qquad
(10^{-2},10^{-3}).
$
Each relation sequence has its own factors
or cores.
TT is evaluated at every observation budget;
the CP and HT variants are evaluated at
$m_{\mathrm{new}}=160$.

Before optimization, each sequence's residual
is divided by its fitting-set standard
deviation, with a minimum scale of $0.05$.
The objective combines cardinality-weighted
squared error with a penalty on mean squared
model output over 128 randomly generated
coalitions.
These additional coalitions require no
measured learner outcomes.

Optimization uses float64 parameters and
AdamW with learning rate $0.02$ and gradient
norm clipping at ten.
Training runs for at most 201 steps.
Validation is evaluated every ten steps,
with early stopping after six checks without
improvement.
The best validation checkpoint is retained.
Each configuration uses a deterministic
initialization; no additional restart search
is performed.

\paragraph{TT degree ladder.}
The degree ladder uses an interaction-degree decomposition
of the predictor, following the perspective of
\citet{convy2022interaction}:
\[
\widehat{v}(z)
=
\sum_{k=0}^{d}\widehat{v}^{(k)}(z),
\qquad
\widehat{v}^{(k)}(z)
=
\sum_{\substack{A\subseteq[d]\\|A|=k}}
c_A\prod_{i\in A}z_i.
\]
Each component collects interactions among exactly $k$
training groups. Restricting the sum to $k\leq K$ gives
a degree-truncated model. The ladder increases this cutoff
and selects the retained degree using validation outcomes,
with polynomial coefficients compressed into TT form.
At each polynomial degree, ridge candidates
are fitted over the same regularization grid.
The three candidates with lowest validation
error are converted to coefficient tensors
and compressed by TT-SVD at ranks
$
\{2,4,8,16,32,64\}.
$
The ladder always evaluates through degree
three.
Higher degrees are accepted only when the
relative validation-error improvement is at
least $1\%$; the search stops at the first
rejected degree.
This procedure compresses fitted polynomial
coefficients and is distinct from directly
optimizing a TT residual model.
\section{Additional experimental results}
\label{app:additional-results}
\label{app:full-results}

\subsection{Coalition prediction at \texorpdfstring{$d=24$}{d=24}}
\label{app:completion-results}

Table~\ref{tab:d24-prediction} reports prediction error
for all evaluated surrogate families. Direct TT and CP
achieve the lowest errors across the three observation
budgets. At 512 observations, both outperform degree-3
ridge, degree-3 OMP, and RBF at 1,024 observations.
This comparison holds separately for each learner seed.
The improvement is specific to the fitted representation:
HT and the TT degree ladder do not match the performance
of direct TT or CP in this experiment.

\begin{table}[t]
    \centering
    \footnotesize
    \setlength{\tabcolsep}{4pt}
    \renewcommand{\arraystretch}{1.15}
    \begin{tabular}{lccc}
        \toprule
        Method & $m=256$ & $m=512$ & $m=1{,}024$ \\
        \midrule
        Cardinality only
        & $0.972\;[0.971,0.972]$
        & $0.934\;[0.933,0.934]$
        & $0.918\;[0.917,0.919]$ \\
        Linear
        & $0.749\;[0.749,0.749]$
        & $0.687\;[0.686,0.687]$
        & $0.669\;[0.669,0.670]$ \\
        Degree-2 ridge
        & $0.626\;[0.625,0.627]$
        & $0.467\;[0.466,0.469]$
        & $0.348\;[0.347,0.348]$ \\
        Degree-3 ridge
        & $0.669\;[0.668,0.670]$
        & $0.510\;[0.508,0.511]$
        & $0.342\;[0.340,0.343]$ \\
        Degree-3 OMP
        & $0.859\;[0.857,0.860]$
        & $0.635\;[0.624,0.654]$
        & $0.271\;[0.247,0.284]$ \\
        RBF kernel ridge
        & $0.562\;[0.562,0.562]$
        & $0.386\;[0.385,0.387]$
        & $0.257\;[0.256,0.258]$ \\
        \midrule
        CP
        & $0.490\;[0.447,0.526]$
        & $\mathbf{0.148}\;[0.133,0.177]$
        & $0.087\;[0.084,0.089]$ \\
        TT
        & $\mathbf{0.472}\;[0.464,0.485]$
        & $0.188\;[0.184,0.191]$
        & $\mathbf{0.071}\;[0.066,0.078]$ \\
        HT
        & $0.951\;[0.924,0.966]$
        & $0.850\;[0.825,0.877]$
        & $0.766\;[0.716,0.828]$ \\
        TT degree ladder
        & $0.660\;[0.646,0.671]$
        & $0.432\;[0.429,0.437]$
        & $0.269\;[0.267,0.272]$ \\
        \bottomrule
    \end{tabular}
    \caption{Held-out NMSE on the $d=24$ reasoning task;
    lower is better. Entries are means over learner seeds
    31, 43, and 59, with seed ranges in brackets.
    Each seed's NMSE averages over ten relation sequences
    on 256 random test coalitions. The observation budget
    $m$ includes the validation allocation. Seeds share
    the task and coalition design. Bold denotes the lowest
    mean within each budget. Method labels follow the
    fitting procedure in Appendix~\ref{app:surrogate-fitting}. Direct CP and TT give the lowest prediction errors across all observation budgets, while other tensor parameterizations do not show the same advantage.}
    \label{tab:d24-prediction}
\end{table}

\paragraph{Fully enumerated eight-group task.}
The $d=8$ task contains eight optional training groups and eight
relation sequences. Each sequence requires three relations, with
two supporting groups per relation, giving a coverage indicator
of interaction degree six. Retraining on all $2^8=256$ coalitions
provides a complete reference for evaluating subset prediction.

The observation-budget sweep first averages test NMSE over the four
outcomes within each split, then reports the mean and sample standard
deviation over three splits. At 128 observations, TNDM-TT attains
mean NMSE 0.041, compared with 0.230 for degree-three ridge and
0.259 for RBF.

\subsection{Predicting joint-removal effects}
\label{app:joint-removal}

Using the fitted 24-group datamodels, we predict changes
in utility after removing either one training group or
every group supplying a required relation. Effects are
measured relative to the full training coalition.
Evaluation includes 24 single-group and 20
relation-removal masks, using the same learner seeds
and fitting designs as the prediction comparison.
No additional datamodel is fitted for these effects. Table~\ref{tab:d24-effects-full} reports errors across
observation budgets. At 1,024 observations, TNDM-TT
has the lowest joint-removal RMSE among all ten
families in each learner seed. This evaluates the
prediction of interacting deletions; selection of
retained training subsets is evaluated separately.
For single-group removal, HT has the lowest
mean error.

\begin{table}[t]
\centering
\footnotesize
\setlength{\tabcolsep}{4pt}
\begin{tabular}{lcccc}
\toprule
Method
& Joint, $m=256$
& Joint, $m=512$
& Joint, $m=1{,}024$
& LOO, $m=1{,}024$ \\
\midrule
Cardinality
& $3.472\pm0.019$
& $3.648\pm0.202$
& $3.698\pm0.022$
& $0.556\pm0.007$ \\

Linear
& $2.562\pm0.018$
& $2.663\pm0.257$
& $2.741\pm0.016$
& $1.184\pm0.004$ \\

Degree-2 ridge
& $2.672\pm0.012$
& $2.733\pm0.297$
& $2.125\pm0.007$
& $0.962\pm0.007$ \\

Degree-3 ridge
& $4.324\pm0.012$
& $5.053\pm0.230$
& $3.869\pm0.019$
& $1.326\pm0.005$ \\

Degree-3 OMP
& $3.780\pm0.349$
& $4.390\pm0.361$
& $6.292\pm0.057$
& $3.136\pm0.026$ \\

RBF
& $\mathbf{2.077\pm0.009}$
& $2.484\pm0.454$
& $2.100\pm0.035$
& $1.011\pm0.027$ \\

HT
& $3.395\pm0.101$
& $3.454\pm0.373$
& $3.392\pm0.139$
& $\mathbf{0.555\pm0.005}$ \\

TNDM-TT, degree ladder
& $5.016\pm0.426$
& $5.957\pm0.173$
& $4.303\pm0.104$
& $1.661\pm0.181$ \\

TNDM-CP
& $2.123\pm0.211$
& $\mathbf{1.875\pm0.436}$
& $1.939\pm0.258$
& $0.780\pm0.193$ \\

TNDM-TT
& $2.614\pm0.053$
& $2.174\pm0.355$
& $\mathbf{1.395\pm0.043}$
& $0.559\pm0.016$ \\
\bottomrule
\end{tabular}
\caption{RMSE of predicted changes in retrained utility,
in nats; lower is better. Joint removal deletes every
group supplying one required relation; leave-one-out
(LOO) removal deletes one group. Entries show means
$\pm$ sample SD over learner seeds 31, 43, and 59,
which share the task and coalition design.
Bold marks the lowest mean in each column. TNDM-TT is most accurate for joint removals, while single-group removal is easier and is best matched by HT in this experiment.}
\label{tab:d24-effects-full}
\end{table}

\subsection{Interaction discovery at \texorpdfstring{$d=12$}{d=12}}
\label{app:interaction-results}

Table~\ref{tab:d12-interactions} reports average precision
for identifying complete triples from predicted
third-order differences. TT improves on degree-3 ridge
at the larger observation budgets. At 160 new observations,
its AP is $0.779$ and $0.746$ on learner seeds 43 and 59,
respectively, compared with $0.356$ and $0.281$ for
degree-3 ridge.

The ordering is budget-dependent. At 48 and 64
observations, degree-3 ridge has higher mean AP than TT.
At 128 observations, the linear-plus-TT variant has
the highest mean AP among the reported methods.
Direct TT has the highest mean at 160 observations.

\begin{table}[t]
    \centering
    \small
    \setlength{\tabcolsep}{4pt}
    \renewcommand{\arraystretch}{1.1}
    \begin{tabular}{lrrrrrr}
        \toprule
        & \multicolumn{6}{c}{New coalition observations} \\
        \cmidrule(lr){2-7}
        Method & 32 & 48 & 64 & 96 & 128 & 160 \\
        \midrule
        Linear
        & 0.053 & 0.054 & 0.083 & 0.050 & 0.052 & 0.055 \\
        Degree-2 ridge
        & 0.057 & 0.045 & 0.049 & 0.044 & 0.042 & 0.052 \\
        Degree-3 ridge
        & 0.081 & 0.143 & 0.200 & 0.260 & 0.289 & 0.318 \\
        Degree-3 OMP
        & 0.071 & 0.094 & 0.074 & 0.083 & 0.101 & 0.093 \\
        RBF kernel ridge
        & 0.045 & 0.043 & 0.071 & 0.071 & 0.146 & 0.089 \\
        \midrule
        CP
        & -- & -- & -- & -- & -- & 0.313 \\
        TT
        & 0.081 & 0.120 & 0.173 & 0.305 & 0.423 & 0.762 \\
        HT
        & -- & -- & -- & -- & -- & 0.214 \\
        TT degree ladder
        & 0.078 & 0.143 & 0.200 & 0.260 & 0.406 & 0.306 \\
        \midrule
        Linear + CP
        & -- & -- & -- & -- & -- & 0.328 \\
        Linear + TT
        & 0.056 & 0.122 & 0.096 & 0.271 & 0.540 & 0.668 \\
        Linear + HT
        & -- & -- & -- & -- & -- & 0.114 \\
        \bottomrule
    \end{tabular}
    \caption{Average precision for interaction discovery
    on the $d=12$ reasoning task; higher is better.
    Each entry averages over ten relation sequences
    and learner seeds 43 and 59. All 220 triples are
    ranked by the absolute predicted third difference
    at the empty coalition, with complete triples as
    positives. Budgets count new coalition observations
    and exclude the separate gate collection.
    CP and HT variants were evaluated only at 160 new
    coalition observations. This is a secondary endpoint;
    comparisons across methods and budgets are not
    multiplicity-corrected. TNDM-TT becomes substantially better at recovering the true complementary triples as more coalition outcomes are observed, although the strongest method depends on the observation budget.}
    \label{tab:d12-interactions}
\end{table}

\subsection{Training-set selection}
\label{app:boundary-results}

\paragraph{Primary comparison at $d=12$.}
Table~\ref{tab:d12-selection} reports selection of
size-four coalitions at a budget of 160 new observations.
Each datamodel ranks the candidate coalitions by
predicted utility and retains three recommendations
per relation sequence.

TT, degree-3 ridge, RBF, and the TT degree ladder
identify complete highest-ranked coalitions for every
relation sequence in both blind learner seeds.
Among the three retained recommendations, TT has
completeness $0.983$, compared with $1.000$ for
degree-3 ridge, RBF, and the TT degree ladder.
Thus, the interaction-discovery advantage of TT
does not imply better training-set selection.

\begin{table}[t]
    \centering
    \small
    \setlength{\tabcolsep}{5pt}
    \renewcommand{\arraystretch}{1.1}
    \begin{tabular}{lrrr}
        \toprule
        Method
        & \shortstack{Top-1\\completeness}
        & \shortstack{Top-3\\completeness}
        & \shortstack{Top-3 utility\\(nats)} \\
        \midrule
        Linear
        & 0.650 & 0.633 & $-4.187$ \\
        Degree-2 ridge
        & 0.900 & 0.950 & $-0.585$ \\
        Degree-3 ridge
        & 1.000 & 1.000 & $-0.001$ \\
        Degree-3 OMP
        & 0.500 & 0.500 & $-5.546$ \\
        RBF kernel ridge
        & 1.000 & 1.000 & $-0.003$ \\
        \midrule
        CP
        & 0.850 & 0.883 & $-1.336$ \\
        TT
        & 1.000 & 0.983 & $-0.173$ \\
        HT
        & 0.650 & 0.583 & $-4.717$ \\
        TT degree ladder
        & 1.000 & 1.000 & $-0.001$ \\
        \midrule
        Linear + CP
        & 0.800 & 0.850 & $-1.704$ \\
        Linear + TT
        & 0.950 & 0.917 & $-0.967$ \\
        Linear + HT
        & 0.700 & 0.683 & $-3.596$ \\
        \midrule
        Random
        & 0.050 & 0.117 & $-9.985$ \\
        Cardinality / exact LOO
        & 0.100 & 0.100 & $-10.030$ \\
        \bottomrule
    \end{tabular}
    \caption{Selection of size-four training coalitions
    at $d=12$ and 160 new observations. Entries average
    over ten relation sequences and learner seeds
    43 and 59. Top-1 completeness evaluates the
    highest-ranked recommendation; top-3 completeness
    is the fraction of complete sets among the three
    retained recommendations. Top-3 utility is their
    mean measured final-answer log probability.
    Higher values are better for all three metrics.
    Cardinality and exact leave-one-out selectors
    have identical reported results. Several nonlinear methods, including TNDM-TT, reliably identify complete subsets; stronger interaction recovery does not automatically translate into better subset selection.}
    \label{tab:d12-selection}
\end{table}

Exact leave-one-out scores vanish in this construction:
removing one group from the full coalition leaves
every required relation supported and the deduplicated
training pool unchanged. The resulting selection
comparison therefore does not establish an advantage
over influence-based attribution methods in general.

\paragraph{Retrained recommendations at $d=24$.}
For each relation sequence, we rank all size-three
coalitions by predicted utility and retrain the
highest-ranked recommendation using learner seed 31.
Table~\ref{tab:d24-selection} reports completeness
and measured utility at observation budgets 512
and 1,024.

At 512 observations, TT selects complete coalitions
for all ten relation sequences, matching degree-3
ridge, OMP, and RBF. At 1,024 observations, these
three baselines remain complete on all ten sequences,
whereas TT is complete on eight. CP improves from
nine complete recommendations to ten. The prediction
advantage of TT and CP therefore does not establish
a corresponding advantage in this selection task.

\begin{table}[t]
    \centering
    \small
    \setlength{\tabcolsep}{6pt}
    \renewcommand{\arraystretch}{1.1}
    \begin{tabular}{lrrrr}
        \toprule
        & \multicolumn{2}{c}{$m=512$}
        & \multicolumn{2}{c}{$m=1{,}024$} \\
        \cmidrule(lr){2-3}
        \cmidrule(lr){4-5}
        Method
        & Complete & Utility
        & Complete & Utility \\
        \midrule
        Linear
        & 8/10 & $-2.324$
        & 8/10 & $-2.326$ \\
        Degree-3 ridge
        & 10/10 & $-0.001$
        & 10/10 & $-0.001$ \\
        Degree-3 OMP
        & 10/10 & $-0.001$
        & 10/10 & $-0.001$ \\
        RBF kernel ridge
        & 10/10 & $-0.001$
        & 10/10 & $-0.001$ \\
        \midrule
        CP
        & 9/10 & $-1.142$
        & 10/10 & $-0.001$ \\
        TT
        & 10/10 & $-0.001$
        & 8/10 & $-2.285$ \\
        \bottomrule
    \end{tabular}
    \caption{Retrained evaluation of the highest-ranked
    size-three coalition for each of ten relation
    sequences at $d=24$. Completeness counts sequences
    whose recommended coalition supports every required
    relation. Utility is the mean measured final-answer
    log probability in nats; higher is better.
    All entries use learner seed 31.
    These results concern one recommendation per
    sequence, rather than the top-three completeness
    metric averaged across learner seeds. TNDMs can select high-utility subsets, but their prediction advantage does not consistently produce better subset selection than the polynomial or kernel baselines.}
    \label{tab:d24-selection}
\end{table}

\subsection{Subset selection from individual attribution scores}
\label{app:shapley-selection}

For each fitted 12-group datamodel, we enumerate its
4,096 predicted coalition values and compute individual
Shapley values. We retain the three groups with the
largest scores, breaking exact ties by ascending group
index. Each selected subset is evaluated using its
recorded retrained utility and whether it supplies all
three required relations. This selects one subset per
evaluation sequence, rather than ranking candidate
subsets directly by predicted utility.

At 160 new observations, Linear + TNDM-TT selects
complete subsets for five of ten sequences in learner
seed 43 and seven of ten in seed 59. Its mean
completeness is 60\%, compared with 45\% for quadratic
ridge and 20\% for degree-3 ridge and RBF. It also achieves
the highest mean retrained utility among the compared
methods at this budget
(Table~\ref{tab:shapley-selection-endpoint}).
These results use the two blind learner seeds and
exclude the seed-31 development run.

This is a secondary comparison at a retrospectively
selected budget. The ordering varies across budgets
and attribution rules: degree-3 ridge reaches 75\%
Shapley-based completeness at 64 observations, while
RBF reaches 65\% at 96. 

\begin{table}[t]
\centering
\footnotesize
\setlength{\tabcolsep}{5pt}
\begin{tabular}{lcc}
\toprule
Method & Complete (\%) & Retrained utility (nats) \\
\midrule
Linear + TNDM-TT
& $\mathbf{60.0\pm14.1}$
& $\mathbf{-4.568\pm1.582}$ \\

TNDM-CP
& $45.0\pm7.1$
& $-5.989\pm0.770$ \\

Degree-2 ridge
& $45.0\pm7.1$
& $-6.354\pm0.783$ \\

Linear + TNDM-CP
& $45.0\pm7.1$
& $-6.418\pm0.923$ \\

Linear + HT
& $40.0\pm0.0$
& $-6.891\pm0.051$ \\

Degree-3 OMP
& $40.0\pm14.1$
& $-6.510\pm1.528$ \\

TNDM-TT
& $40.0\pm14.1$
& $-6.714\pm1.504$ \\

TNDM-TT, degree ladder
& $35.0\pm7.1$
& $-7.445\pm0.685$ \\

Linear
& $35.0\pm7.1$
& $-7.450\pm0.842$ \\

HT
& $30.0\pm14.1$
& $-8.042\pm1.964$ \\

Degree-3 ridge
& $20.0\pm14.1$
& $-9.092\pm1.887$ \\

RBF
& $20.0\pm0.0$
& $-9.180\pm0.137$ \\

Cardinality
& $0.0\pm0.0$
& $-11.463\pm0.173$ \\

Fixed LOO
& $0.0\pm0.0$
& $-11.463\pm0.173$ \\

Random scores
& $0.0\pm0.0$
& $-11.006\pm0.161$ \\
\bottomrule
\end{tabular}
\caption{Selection by individual Shapley values at
160 new observations, with 17 shared observations
also used for fitting and validation.
Entries show means $\pm$ sample SD across two
learner-seed means, each averaged over ten evaluation
sequences. Higher is better in both columns.
Completeness measures relation coverage; utility is
measured after retraining. Bold marks the highest
mean in each column. Random scores denotes the
random-score baseline, not expected performance under
uniform sampling of triples. At this budget, combining a linear model with TNDM-TT gives the strongest Shapley-based subset selection among the compared methods, though the ordering varies across budgets.}
\label{tab:shapley-selection-endpoint}
\end{table}

\paragraph{Comparison across observation budgets.}
Tables~\ref{tab:shapley-selection-budgets}
and~\ref{tab:banzhaf-selection-budgets} report
completeness across all evaluated budgets.
For Banzhaf-based selection, we use the same fitted
datamodels, compute individual Banzhaf values, and
retain the three highest-scoring groups using the
same tie-breaking rule.
Each entry averages ten evaluation sequences within
each learner seed, then averages the two seed means.
The best-performing method depends on both the
observation budget and the attribution rule.

\begin{table}[t]
\centering
\footnotesize
\setlength{\tabcolsep}{4pt}
\begin{tabular}{lcccccc}
\toprule
Method & 32 & 48 & 64 & 96 & 128 & 160 \\
\midrule
Linear
& 20 & \textbf{20} & 15 & 35 & 20 & 35 \\
Degree-2 ridge
& \textbf{40} & 5 & 40 & 50 & 35 & 45 \\
Degree-3 ridge
& \textbf{40} & 5 & \textbf{75} & 40 & 30 & 20 \\
Degree-3 OMP
& 30 & \textbf{20} & 30 & 35 & 40 & 40 \\
RBF
& 35 & 10 & 55 & \textbf{65} & 15 & 20 \\
TNDM-TT
& 15 & 5 & 25 & 30 & 10 & 40 \\
Linear + TNDM-TT
& 30 & \textbf{20} & 35 & 45 & 25 & \textbf{60} \\
TNDM-TT, degree ladder
& \textbf{40} & 5 & \textbf{75} & 40 & \textbf{60} & 35 \\
Cardinality
& 0 & 0 & 0 & 0 & 0 & 0 \\
Fixed LOO
& 0 & 0 & 0 & 0 & 0 & 0 \\
Random scores
& 0 & 0 & 0 & 0 & 0 & 0 \\
\bottomrule
\end{tabular}
\caption{Shapley-based subset selection.
Columns give the number of new coalition observations.
Entries are mean percentages of complete selected
triples over blind learner seeds 43 and 59, each
evaluated on ten relation sequences.
Higher is better; bold marks all methods tied for
the highest mean within each budget.
Random scores reports the random-score baseline's completeness, not the expected
completeness of uniformly sampled triples. No method dominates across all budgets; attribution-based subset selection depends strongly on both the observation budget and the attribution rule.}
\label{tab:shapley-selection-budgets}
\end{table}

\begin{table}[t]
\centering
\footnotesize
\setlength{\tabcolsep}{4pt}
\begin{tabular}{lcccccc}
\toprule
Method & 32 & 48 & 64 & 96 & 128 & 160 \\
\midrule
Linear
& 20 & 20 & 15 & 35 & 20 & 35 \\
Degree-2 ridge
& \textbf{40} & 5 & 40 & 50 & 35 & 45 \\
Degree-3 ridge
& 35 & 0 & \textbf{65} & 40 & 25 & 30 \\
Degree-3 OMP
& 35 & 20 & 25 & 50 & 40 & 40 \\
RBF
& 35 & \textbf{25} & 40 & \textbf{70} & 40 & \textbf{50} \\
TNDM-TT
& 20 & 10 & 15 & 35 & 25 & 35 \\
Linear + TNDM-TT
& 25 & \textbf{25} & 20 & 50 & 30 & 40 \\
TNDM-TT, degree ladder
& 35 & 0 & \textbf{65} & 40 & \textbf{50} & 40 \\
Cardinality
& 0 & 0 & 0 & 0 & 0 & 0 \\
Fixed LOO
& 0 & 0 & 0 & 0 & 0 & 0 \\
Random scores
& 0 & 0 & 0 & 0 & 0 & 0 \\
\bottomrule
\end{tabular}
\caption{Banzhaf-based subset selection under the
same protocol as
Table~\ref{tab:shapley-selection-budgets}.
Entries are mean percentages of complete selected
triples; columns give new-observation budgets.
Higher is better; bold marks all methods tied for
the highest mean within each budget.
 }
\label{tab:banzhaf-selection-budgets}
\end{table}

\subsection{Pretrained-model fine-tuning evidence}
\label{app:finetuning-evidence}

\paragraph{Sport--Music.}
We adapt the task of \citet{huang2025generalization}.
Five sport--music associations define ten optional
training groups: one sport group and one music group
per association. Four subjects contribute documents
to each paired set of groups, giving four documents
per optional group. Every training run also includes
80 fixed background documents.

The study contains 96 full-parameter Qwen2-7B
fine-tunes: 88 distinct coalitions at one learner
seed and eight repeated runs covering four coalitions
at two additional seeds. Evaluation uses 80 probes
across five classes. Datamodel prediction is evaluated
over 15 shared coalition splits. These splits vary
the observed coalitions; they are not independent
replications of learner training.

Table~\ref{tab:sportmusic-readable} compares
prediction configurations. A single rank-five
TNDM-CP improves on the tuned linear, matched-pair,
and degree-2 baselines. Ensembles improve prediction
further using additional surrogate-fitting compute.
In the separate class-mode interaction analysis,
both TNDM-CP and class-specific degree-2 models
recover four of the five expected strongest pairs.

\begin{table}[t]
\centering
\small
\setlength{\tabcolsep}{5pt}
\begin{tabular}{lc}
\toprule
Datamodel configuration & Held-out $R^2$ $\uparrow$ \\
\midrule
Linear, tuned
& $0.579\pm0.048$ \\
Matched-pair, tuned
& $0.556\pm0.054$ \\
Degree-2, tuned
& $0.565\pm0.109$ \\
TNDM-CP, single fit, rank 5
& $0.668\pm0.084$ \\
TNDM-CP, best of 8, rank 5
& $0.616\pm0.133$ \\
TNDM-CP, ensemble of 8 initializations
& $0.719\pm0.070$ \\
TNDM-CP, ensemble over ranks 3--6
& $\mathbf{0.734\pm0.061}$ \\
TNDM-CP, ensemble over rank and regularization
& $0.706\pm0.059$ \\
\bottomrule
\end{tabular}
\caption{Sport--Music log-probability prediction:
mean $R^2$ $\pm$ sample SD over 15 shared coalition
splits. Ensembles use additional surrogate-fitting
compute. Bold marks the highest mean. TNDM-CP outperforms the tuned non-tensor baselines, and ensembling several low-rank fits improves prediction further.}
\label{tab:sportmusic-readable}
\end{table}

\paragraph{AG News RoBERTa--LoRA.}
Each coalition fine-tunes a fresh adapter and
classification head from the same pretrained
RoBERTa-base initialization. Eight optional training
groups vary document length, duplication, controlled
label corruption, and class enrichment. The outcome
is negative held-out probe log-loss.

The reference collection enumerates all 256
coalitions for each adapter rank in
$\{1,2,4,8,16,64\}$ and learner seed in
$\{31,43,59\}$, totaling 4,608 fine-tunes.
Reference attribution values use the complete
coalition outcomes for each rank and learner seed.
Estimators receive only the budgeted observations.

Table~\ref{tab:agnews-all-estimators} reports pooled
median relative $\ell_2$ errors at 128 observations,
using 20 observation-sampling repeats per rank/seed
combination. These combinations share the dataset
and do not represent 18 independent datasets.
TNDM-TT has the lowest median Banzhaf error among
the compared methods at this budget; KernelSHAP
has the lowest median Shapley error.
The separate comparison aggregating attribution
error across budgets favors degree-truncated and
RBF baselines over TNDMs at all six adapter ranks.
The displayed endpoint therefore supports competitive
attribution at this budget, rather than an advantage
across all fine-tuning comparisons.

\begin{table}[t]
\centering
\small
\setlength{\tabcolsep}{7pt}
\begin{tabular}{lcc}
\toprule
Estimator & Banzhaf $\downarrow$ & Shapley $\downarrow$ \\
\midrule
Linear
& 0.1701 & 0.3555 \\
Degree-2
& 0.2013 & 0.3721 \\
RBF
& 0.1724 & 0.3368 \\
TNDM-CP
& 0.1736 & 0.3358 \\
TNDM-TT
& \textbf{0.1634} & 0.3285 \\
\midrule
MSR Banzhaf
& 0.2258 & -- \\
KernelSHAP
& -- & \textbf{0.1233} \\
\bottomrule
\end{tabular}
\caption{Full AG News RoBERTa--LoRA attribution
comparison at 128 observed coalitions. Entries are
pooled median relative $\ell_2$ errors over six
adapter ranks, three learner seeds, and 20
observation-sampling repeats per combination.
MSR estimates Banzhaf values directly; KernelSHAP
estimates Shapley values directly.
Lower is better; bold marks the lowest median
for each target. TNDM-TT is competitive for attribution on AG News, while dedicated estimators can still be better for specific targets such as Shapley values.}
\label{tab:agnews-all-estimators}
\end{table}

\subsection{Prediction on broader experimental settings}
\label{app:broader-prediction}

We evaluate coalition prediction on AG News and UCI HAR
with $d=10$ training groups, and on a coin-flip task with
$d=8$ groups of in-context examples.
For AG News, each coalition defines a multinomial
naive Bayes classifier with 2048 hashed features and
add-one smoothing.
The outcome is the negative mean log-loss on 2000
test examples.
For UCI HAR, each coalition defines a multiclass ridge
classifier with regularization parameter $10$, applied
to a fixed 64-dimensional projection of the sensor
features.
The outcome is the negative mean softmax log-loss on
the official test split.
For the coin-flip task, the coalition selects examples
in the prompt of a frozen Qwen model, and the outcome
is the negative mean log-loss on the evaluation
examples.

\paragraph{Evaluation protocol.}
All $2^d$ coalition outcomes are available for each
setting.
For each of three random permutations of the
coalitions, we reveal a prefix of size $m$.
We use $75\%$ of this prefix for fitting and $25\%$
for validation; all remaining coalitions form the
test set.
The same split is used by every surrogate family.
Hyperparameters are selected separately within each
family by validation NMSE.
The polynomial baseline selects between degrees
$2$ and $3$; CP and TT select ranks from
$\{2,3,4\}$ and $\{2,3,4,6,8\}$, respectively.
Table~\ref{tab:broader-prediction} reports median test
NMSE across the three coalition splits at budgets
$m=2^d/8$, $2^d/4$, and $2^d/2$.
These repetitions vary the observed coalitions while
keeping the underlying coalition outcomes fixed.

\begin{table}[t]
\centering
\small
\setlength{\tabcolsep}{4pt}
\begin{tabular}{lrcccccc}
\toprule
Setting & $m$ & Linear & Polynomial & OMP & RBF & CP & TT \\
\midrule
AG News
& 128 & 0.522 & 0.208 & 0.778 & 0.131 & 0.062 & \textbf{0.041} \\
& 256 & 0.492 & 0.049 & 0.400 & 0.047 & 0.032 & \textbf{0.029} \\
& 512 & 0.479 & 0.044 & 0.294 & 0.034 & \textbf{0.014} & 0.041 \\
\midrule
UCI HAR
& 128 & 0.761 & \textbf{0.671} & 0.854 & 0.713 & 0.756 & 0.743 \\
& 256 & 0.777 & 0.700 & 0.719 & 0.656 & 0.774 & \textbf{0.613} \\
& 512 & 0.405 & 0.706 & 0.771 & 0.382 & 0.175 & \textbf{0.151} \\
\midrule
Coin flip
& 32  & 1.144 & 1.985 & 1.428 & 1.364 & \textbf{1.110} & 2.018 \\
& 64  & 1.904 & 0.872 & 3.443 & 1.622 & \textbf{0.868} & 1.159 \\
& 128 & 1.273 & 0.770 & 1.325 & 1.034 & 0.571 & \textbf{0.458} \\
\bottomrule
\end{tabular}
\caption{
Coalition prediction beyond relation composition.
Entries are median test NMSE over three coalition
splits; lower is better.
The budget $m$ includes both fitting and validation
coalitions.
AG News and UCI HAR use deterministic classifier
refits with $d=10$.
Coin flip uses frozen-model inference with $d=8$.
Bold marks the lowest median in each row.
}
\label{tab:broader-prediction}
\end{table}

\paragraph{Results.}
The relative performance depends on the setting and
the observation budget.
On AG News, TT has the lowest median NMSE at
$m=128$ and $256$, while CP performs best at $m=512$.
On UCI HAR, the polynomial baseline performs best at
$m=128$, and TT performs best at the two larger
budgets.
On coin flip, CP performs best at the two smaller
budgets and TT at $m=128$.
These results support tensor network surrogates
beyond relation composition, with gains that vary
across tasks and budgets.
\subsection{In-context prediction with \texorpdfstring{$d=500$}{d=500}}
\label{app:large-icl}

We evaluate coalition prediction for a frozen
Qwen2.5-1.5B-Instruct model on coin-flip reasoning
and GSM8K.
Each task has a pool of $d=500$ candidate
demonstrations, and each coalition selects four
demonstrations for the prompt.
The outcome is the negative answer-token log-loss,
averaged over 48 evaluation queries for coin flip
and 12 for GSM8K.
Model weights remain fixed throughout.

\paragraph{Evaluation protocol.}
For each task and each seed in $\{31,43\}$, we
evaluate 1300 coalitions.
We use nested fitting sets of size
$n_{\mathrm{fit}}\in\{400,800\}$, a separate set of
200 validation coalitions, and 300 test coalitions.
Thus, fitting and model selection together use
600 or 1000 observed outcomes.
Every surrogate receives the same coalitions and
outcomes.
Candidates are selected by validation RMSE within
each family and, for TT, within each ordering.
We report the random and semantic orderings
separately.

The CP baseline factorizes the degree-two and
degree-three coefficient tensors separately and
includes an additive component.
We denote it by coefficient CP to distinguish it
from the direct value-tensor CP model used in the
preceding comparison.
We also evaluate a degree-graded TT with maximum
degree three alongside the ordinary TT.

\begin{table}[t]
\centering
\footnotesize
\setlength{\tabcolsep}{4pt}
\begin{tabular}{lcccc}
\toprule
& \multicolumn{2}{c}{Coin flip}
& \multicolumn{2}{c}{GSM8K} \\
\cmidrule(lr){2-3}
\cmidrule(lr){4-5}
Surrogate
& $n_{\mathrm{fit}}=400$
& $n_{\mathrm{fit}}=800$
& $n_{\mathrm{fit}}=400$
& $n_{\mathrm{fit}}=800$ \\
\midrule
Linear
& $0.1779 \pm 0.0123$
& $0.1853 \pm 0.0002$
& $0.3262 \pm 0.0113$
& $0.2854 \pm 0.0105$ \\
RBF
& $0.1755 \pm 0.0131$
& $0.1679 \pm 0.0078$
& $0.2885 \pm 0.0057$
& $0.2645 \pm 0.0142$ \\
Coefficient CP
& $0.1754 \pm 0.0154$
& $0.1674 \pm 0.0091$
& $0.2890 \pm 0.0087$
& $0.2760 \pm 0.0123$ \\
TT, random
& $0.1732 \pm 0.0174$
& $0.1680 \pm 0.0077$
& $0.2874 \pm 0.0093$
& $0.2758 \pm 0.0121$ \\
TT, semantic
& $0.1733 \pm 0.0174$
& $0.1671 \pm 0.0088$
& $0.2887 \pm 0.0113$
& $0.2760 \pm 0.0119$ \\
Degree-graded TT, random
& $0.1750 \pm 0.0156$
& $0.1685 \pm 0.0124$
& $0.2879 \pm 0.0100$
& $0.2761 \pm 0.0116$ \\
Degree-graded TT, semantic
& $0.1750 \pm 0.0156$
& $0.1678 \pm 0.0086$
& $3.2994 \pm 0.3212$
& $1.1528 \pm 1.1618$ \\
\bottomrule
\end{tabular}
\caption{
Held-out coalition prediction for frozen-model
in-context learning with $d=500$ candidate
demonstrations and four demonstrations per coalition.
Entries are mean test RMSE $\pm$ sample standard
deviation across two seeds; lower is better.
Each fitting budget is supplemented by 200
validation coalitions. At larger candidate-pool dimension, CP and TT remain competitive with nonlinear baselines and improve on the linear model, while RBF is strongest on GSM8K at the larger budget.
}
\label{tab:large-icl}
\end{table}

\paragraph{Results.}
Coefficient CP and ordinary TT improve on the
linear baseline for both tasks, both fitting
budgets, and both seeds.
At $n_{\mathrm{fit}}=800$, their mean errors on
coin flip are close to those of RBF.
On GSM8K, RBF achieves the lowest mean error at
this budget.
Ordinary TT gives similar results under the two
orderings.
The degree-graded variant has substantially larger
test errors under the semantic ordering on GSM8K.
These experiments show that tensor network
surrogates remain competitive with nonlinear
baselines at a larger candidate-pool dimension.

\subsection{Attribution accuracy}
\label{app:attribution-results}

We compare attribution values computed from fitted
datamodels with reference values obtained by
enumerating all $2^{10}=1024$ measured coalition
outcomes.
We evaluate linear, RBF, and TT surrogates on
UCI HAR and a Qwen representation-probe task.

\paragraph{Settings.}
UCI HAR uses the classifier-refitting task described
above, with ten training data groups.
For the Qwen probe, each coalition selects training
groups for a logistic-regression classifier on fixed
representations from Qwen2.5-1.5B-Instruct.
The representations are standardized and projected
to 32 dimensions using a transformation fitted once
on the training representations.
Three of the ten groups have flipped training labels.
The classifier is refitted for each coalition with
$C=1$, while the representation model remains fixed.
The outcome is $\log 2$ minus the mean true-label
log-loss on 112 held-out examples.

\paragraph{Fitting and evaluation.}
For UCI HAR, we report 320 fitting coalitions and
32 validation coalitions over ten matched coalition
splits.
For the Qwen probe, we use 512 fitting coalitions
and 205 validation coalitions for each of the three
seeds $\{31,43,59\}$.
Within each setting and repetition, all surrogate
families receive the same fitting and validation
coalitions.
Candidates are selected using coalition-prediction
error on the validation set.

We measure attribution error by
$
\operatorname{RMSE}_{\mathcal{A}}
=
\left(
\frac{1}{|\mathcal{A}|}
\sum_{A\in\mathcal{A}}
\bigl(
T_A(\widehat v)-T_A(v)
\bigr)^2
\right)^{1/2},
$
where $T_A$ denotes the attribution quantity being
evaluated.
For Shapley values, Banzhaf values, and leave-one-out
effects, $\mathcal{A}$ contains the ten singleton
groups.
For pairwise Shapley interactions, it contains all
$\binom{10}{2}=45$ unordered pairs.
The reference values $T_A(v)$ use the complete
measured coalition function.

\begin{table}[t]
\centering
\small
\setlength{\tabcolsep}{5pt}
\begin{tabular}{lccc}
\toprule
Attribution & Linear & RBF & TT \\
\midrule
\multicolumn{4}{l}{
UCI HAR: 320 fitting, 32 validation; ten splits
} \\
Shapley
& $0.047190 \pm 0.000116$
& $0.044079 \pm 0.000344$
& $0.041886 \pm 0.000704$ \\
Banzhaf
& $0.001240 \pm 0.000105$
& $0.001132 \pm 0.000062$
& $0.000983 \pm 0.000072$ \\
Pairwise interaction
& $0.049988 \pm 0.000000$
& $0.047985 \pm 0.000215$
& $0.045343 \pm 0.000602$ \\
\midrule
\multicolumn{4}{l}{
Qwen probe: 512 fitting, 205 validation; three seeds
} \\
Shapley
& $0.067948 \pm 0.001012$
& $0.009971 \pm 0.002894$
& $0.055446 \pm 0.007836$ \\
Banzhaf
& $0.033136 \pm 0.002820$
& $0.005382 \pm 0.000884$
& $0.018738 \pm 0.006185$ \\
Leave-one-out
& $0.309582 \pm 0.004989$
& $0.009050 \pm 0.007205$
& $0.077350 \pm 0.024296$ \\
Pairwise interaction
& $0.229971 \pm 0.000743$
& $0.019093 \pm 0.004560$
& $0.062223 \pm 0.008455$ \\
\bottomrule
\end{tabular}
\caption{
Attribution RMSE against references computed from
all 1024 measured coalition outcomes.
Entries are mean $\pm$ sample standard deviation
across the repetitions specified in each panel;
lower is better. TNDM-TT gives the lowest attribution error on UCI HAR and improves over the linear baseline on the Qwen probe, while RBF is strongest on the probe task.
}
\label{tab:attribution-accuracy}
\end{table}

\paragraph{Results.}
TT has the lowest mean error among the three
surrogates for each reported attribution quantity
on UCI HAR.
On the Qwen probe, TT improves on the linear
baseline, while RBF achieves the lowest mean error
for all four quantities.
Thus, attribution accuracy depends on how well the
fitted datamodel captures the measured coalition
function.
Exact contraction computes attribution values for
the fitted TT; error relative to the measured
function remains a separate empirical quantity.

\subsection{What the linear datamodeling score measures}
\label{app:lds}

The linear datamodeling score (LDS) evaluates how
well sums of individual attribution scores rank
retraining outcomes \citep{park2023trak}.
For a fixed evaluation query $q$, attribution vector
$\tau(q)$, and evaluation subsets
$S_1,\ldots,S_M$ of a specified size, it is
\begin{equation}
\operatorname{LDS}_q(\tau)
=
\rho_{\mathrm{Sp}}\!\left(
\bigl(v_q(S_j)\bigr)_{j=1}^{M},
\left(
\sum_{i\in S_j}\tau_i(q)
\right)_{j=1}^{M}
\right),
\label{eq:lds-definition}
\end{equation}
where $\rho_{\mathrm{Sp}}$ denotes Spearman
correlation. We adapt this definition to training
groups. In our comparison, $\tau(q)$ is the exact
Shapley vector of each fitted datamodel, computed
from its predictions without using reference
outcomes. This is our evaluation convention;
LDS itself does not require Shapley values.

\paragraph{Individual scores do not retain all joint effects.}
For a nonlinear datamodel, LDS evaluates
$\sum_{i\in S}\tau_i(q)$ rather than the direct
prediction $\widehat v_q(S)$. Different subset
functions can have the same individual attribution
vector and therefore the same additive predictions.

Consider four training groups and the function
$
v(S)
=
\mathbf{1}\{S\cap\{1,2\}\ne\varnothing\}
\mathbf{1}\{S\cap\{3,4\}\ne\varnothing\}.
$
A subset succeeds only if it contains a group from
each pair. By symmetry, every group has Shapley
value $1/4$. Consequently, the sum of Shapley
values is $1/2$ for every two-element subset.
Nevertheless, four of these subsets have utility
one and the other two have utility zero.

LDS is therefore undefined because its additive
prediction is constant, even when the Shapley
values and the underlying datamodel are exact.
The direct prediction $\widehat v=v$ instead has
Spearman correlation one.
Moreover, the additive function $|S|/4$ has the
same Shapley vector despite representing a different
subset function. Evaluating only the attribution
vector cannot distinguish these representations.

\paragraph{Evidence from retraining experiments.}
Table~\ref{tab:lds-native} compares LDS with Spearman
correlation of the direct datamodel prediction
$\widehat v_q(S)$. Both use the same evaluation
subsets within each experiment. For LDS, the
prediction is the sum of the fitted datamodel's
individual Shapley values.

On the eight-group task, replacing direct nonlinear
predictions with summed individual scores reduces
correlation for every displayed nonlinear family.
For example, RBF achieves direct correlation 0.738
but LDS 0.386; TNDM-TT achieves 0.697 and 0.358,
respectively. Linear predictions and their score
sums have identical correlations.
On the twelve-group task, TNDM-TT's direct
correlation is 0.164, compared with LDS 0.031.
Thus the reduction to individual scores can discard
predictive information from polynomial, kernel,
and tensor datamodels alike.

\begin{table}[t]
\centering
\footnotesize
\setlength{\tabcolsep}{4pt}
\begin{tabular}{lcccc}
\toprule
& \multicolumn{2}{c}{$d=8$}
& \multicolumn{2}{c}{$d=12$} \\
\cmidrule(lr){2-3}
\cmidrule(lr){4-5}
Datamodel & LDS & Direct & LDS & Direct \\
\midrule
Linear
& $0.341\pm0.053$
& $0.341\pm0.053$
& $0.032\pm0.021$
& $0.032\pm0.021$ \\

Degree-2 ridge
& $0.377\pm0.055$
& $0.730\pm0.023$
& $0.026\pm0.018$
& $0.014\pm0.026$ \\

Degree-3 ridge
& $0.381\pm0.057$
& $0.735\pm0.023$
& $0.031\pm0.009$
& $0.037\pm0.020$ \\

RBF
& $0.386\pm0.054$
& $0.738\pm0.025$
& $0.022\pm0.003$
& $0.045\pm0.013$ \\

TNDM-CP
& $0.282\pm0.135$
& $0.571\pm0.108$
& $\mathbf{0.037\pm0.001}$
& $0.035\pm0.034$ \\

TNDM-TT
& $0.358\pm0.063$
& $0.697\pm0.046$
& $0.031\pm0.001$
& $\mathbf{0.164\pm0.008}$ \\

HT
& $\mathbf{0.403\pm0.047}$
& $\mathbf{0.755\pm0.033}$
& $0.020\pm0.024$
& $0.030\pm0.019$ \\

TNDM-TT, degree ladder
& $0.376\pm0.050$
& $0.734\pm0.021$
& $0.020\pm0.007$
& $0.024\pm0.023$ \\
\bottomrule
\end{tabular}
\caption{LDS and direct-prediction Spearman
correlation on the same evaluation subsets.
Entries are means $\pm$ sample SD; higher is better.
For $d=8$, the observation budget is 128 and
evaluation uses unobserved size-four subsets.
Statistics use 20 observation-sampling repeats
at one learner seed, after averaging eight
relation sequences within each repeat.
For $d=12$, the budget is 160 new observations
and evaluation uses all 220 size-three subsets.
Statistics use two blind learner seeds, after
averaging ten sequences within each seed.
Bold marks the highest displayed mean per column.
The two experiments use different protocols
and should not be pooled. Summing individual attribution scores can discard information captured by nonlinear datamodels; direct subset predictions better preserve their joint effects.}
\label{tab:lds-native}
\end{table}

\paragraph{Correlation conventions.}
Values tied within $10^{-10}$ receive average ranks.
A constant prediction or reference makes correlation
undefined; these cases are recorded as undefined,
not zero. The displayed methods have defined
correlations for all repetitions used in the table.
The twelve-group results represent two learner
seeds, rather than twenty independent training
replications.

\paragraph{Matching the metric to the prediction task.}
LDS assesses the predictive usefulness of additive
attribution scores. It does not, by itself,
evaluate a nonlinear datamodel's joint predictions.
Direct rank correlation retains those predictions,
but it is unchanged by strictly increasing
transformations and therefore does not measure
whether predicted utility changes have the correct
magnitude.

We consequently evaluate coalition prediction with
NMSE, removal effects with RMSE, interaction recovery
with average precision, and subset selection with
completeness and measured retrained utility.
These metrics assess different uses of the fitted
function. LDS can complement them, but is
insufficient as the sole measure of nonlinear
datamodel quality.

\subsection{Numerical verification of attribution computations}
\label{app:attribution-verification}

For the $d=10$ Qwen-probe experiment, we verify
the attribution implementations against exhaustive
enumeration of each fitted surrogate.
We evaluate the surrogate on all 1024 coalitions
and compute Shapley values, Banzhaf values,
leave-one-out effects, and pairwise Shapley
interactions directly from these predictions.
We compare these values with those returned
by the model-specific attribution routines.

Table~\ref{tab:attribution-verification} reports
the largest absolute discrepancy across seeds
31, 43, and 59.
The singleton column additionally takes the
maximum over the three singleton attribution
quantities and all ten groups.
The pairwise column covers all 45 unordered
pairs.

\begin{table}[t]
\centering
\small
\begin{tabular}{lcc}
\toprule
Surrogate & Singleton quantities & Pairwise interactions \\
\midrule
Linear
& $1.13\times10^{-14}$
& $8.23\times10^{-17}$ \\
Coefficient CP
& $1.33\times10^{-15}$
& $5.83\times10^{-16}$ \\
TT
& $4.77\times10^{-15}$
& $4.50\times10^{-15}$ \\
\bottomrule
\end{tabular}
\caption{
Maximum absolute discrepancy between
model-specific attribution computations and
exhaustive enumeration of the same fitted
surrogate. The tensor-based attribution computations match exhaustive enumeration to numerical precision.
}
\label{tab:attribution-verification}
\end{table}

The two computations agree to numerical
precision on these fitted models.
This check concerns evaluation of the surrogate.
Accuracy relative to the measured coalition
function is assessed separately by the
attribution errors reported above.
\subsection{Budget and replication conventions}
\label{app:reproducibility}

\paragraph{Observation budgets.}
We distinguish the number of fitting coalitions
from the total number of outcomes used for fitting
and model selection.
For the $d=24$ prediction experiment and the
small-dimensional broader-setting comparison,
$m$ includes both fitting and validation coalitions.
For the $d=500$ experiments, $n_{\mathrm{fit}}$
counts fitting coalitions, with 200 additional
validation coalitions.
The attribution comparisons use 320 fitting and
32 validation coalitions for UCI HAR, and 512
fitting and 205 validation coalitions for the
Qwen probe.
Complete enumeration supplies reference attribution
values without making all coalition outcomes
available for surrogate fitting.

\paragraph{Replication and aggregation.}
The repetitions differ across experiments.
The $d=24$ prediction comparison varies the learner
seed while sharing the coalition design.
The small-dimensional broader-setting comparison
varies the fitting and validation split of a fixed
coalition function.
The UCI HAR attribution comparison likewise varies
the coalition split with the underlying learner
fixed.
Accordingly, each table specifies its replication
unit and summary statistic.
Reported ranges and standard deviations describe
variation across the recorded repetitions;
they are not confidence intervals.

\paragraph{Reference quantities.}
For fully enumerated settings, reference
attributions are computed from measured coalition
outcomes.
Attributions computed from a fitted datamodel are
evaluated against these references.
For the relation-composition tasks, structural
completeness and interaction-support recovery
instead use the specified relation-to-group
assignments.
We report these structural criteria separately
from prediction error and retrained utility.

\subsection{Group construction and tensor ordering}
\label{app:group-construction}

\paragraph{AG News.}
We construct ten training groups by partitioning
each class into text-hash bins.
Classes zero and one each contribute three bins,
while classes two and three each contribute two.
Within a class, an example is assigned by its
text's SHA-256 hash modulo the number of bins.
Groups are ordered first by class and then by
bin index.

The classifier uses 2048 nonnegative hashed
features containing unigrams and bigrams.
The training and evaluation sets contain the
first 20,000 training examples and first 2000
test examples, respectively, in dataset order.
Class and feature smoothing parameters are
both one.

\paragraph{UCI HAR.}
We sort the official training-subject identifiers
and partition them into ten consecutive blocks
using approximately equal numbers of subjects.
Each block defines one training group.
The official test subjects remain held out.
Groups are ordered by their smallest subject
identifier.

The 561 sensor features are projected to
64 dimensions using a fixed random matrix
with seed 20260822.
Its entries take values $-1$, $0$, and $1$
with probabilities $1/6$, $2/3$, and $1/6$,
respectively, and are scaled by $\sqrt{3/64}$.
A bias term is added after projection and is
excluded from the classifier's ridge penalty.

\paragraph{Small-dimensional coin-flip task.}
The eight coalition units are the eight
in-context demonstrations.
Their tensor order follows their original
order in the prompt.
The empty coalition corresponds to evaluation
without demonstrations.
The outcome is the negative mean answer
log-loss, with the language-model parameters
held fixed.

\paragraph{Ordering in the prediction comparisons.}
For these three settings, the TT prediction
fits use the natural group orders specified
above.
The separate tensor-compression analysis
considers the natural order and twenty fixed
random permutations, using seeds
41001 through 41020.
Those compression-order comparisons are
distinct from the fitted prediction results.

\paragraph{Ordering at $d=500$.}
For coin flip, the semantic order sorts
demonstrations lexicographically by starting
face, sequence length, number of flips, and
the binary flip pattern.
For GSM8K, it sorts demonstrations by the
tuple of distinct arithmetic operators in
the annotated calculations, the number of
annotated calculation steps, and the character
length of the reference solution.
Ties are broken by the original demonstration
index.

Ordinary TT and degree-graded TT are fitted
under both the semantic order and a random
permutation generated with seed $s+909$,
where $s$ is the experiment seed.
Their results are reported separately.
The remaining fitted families are invariant
to permutations of the coalition coordinates.

\subsection{Fitting details for the \texorpdfstring{$d=500$}{d=500} experiments}
\label{app:large-icl-fitting}

\paragraph{Coalition partitions.}
Each task and seed has an ordered collection
of 1300 evaluated coalitions.
The first 800 form the fitting pool, the next
200 form the validation set, and the final
300 form the test set.
The smaller fitting budget uses only the first
400 coalitions of the fitting pool.
The validation and test sets remain fixed
across the two budgets.
All surrogate families use these same
partitions.

\paragraph{Target standardization.}
For each task, seed, and fitting budget, we
compute the outcome mean $\mu$ and population
standard deviation $\sigma$ using only the
fitting coalitions.
If the standard deviation is zero, we set
$\sigma=1$.
The surrogate is fitted to
$
\widetilde v(S)
=
\frac{v(S)-\mu}{\sigma}.
$
Predictions are transformed back to the
original outcome scale:
$
\widehat v(S)
=
\mu+\sigma\,\widehat f(S),
$
where $\widehat f$ predicts the standardized
outcome.
Validation and test outcomes do not contribute
to the standardization parameters.

\paragraph{Configuration selection.}
Candidates are selected separately for each
surrogate family and, for the two TT families,
for each ordering.
Selection minimizes validation RMSE, with ties
broken by candidate identifier.
The selected candidate is evaluated on the
held-out test coalitions.
The semantic and random orderings remain
separate reported configurations.

\paragraph{Reported error.}
Test RMSE is computed in the original
log-loss units.
For each task, fitting budget, family, and
ordering, we report the mean and sample
standard deviation across seeds 31 and 43.
These errors describe prediction of coalition
outcomes; they are distinct from the frozen
language model's answer accuracy.

\begin{figure}[t]
\centering
\includegraphics[width=\linewidth]{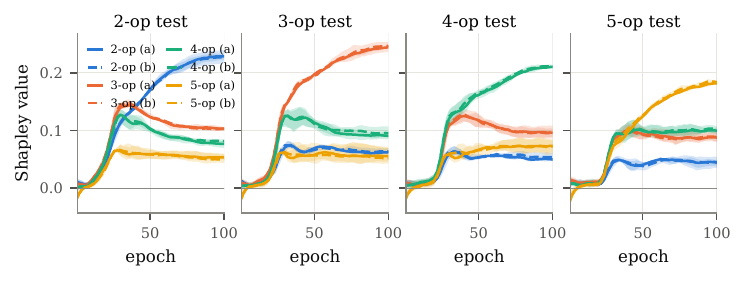}
\caption{\textbf{Data contributions depend on the target and training stage.}
Shapley values of eight training data groups for arithmetic targets
with two through five operators. Solid and dashed lines distinguish
the two random halves of each operator-count group.
Lines show three-seed means; bands span the seed minimum and maximum.
The group matching the evaluation target becomes dominant later in
training.}
\label{fig:dynamics-eight-targets}
\end{figure}

\subsection{Training dynamics of data attribution}
\label{app:arithmetic-dynamics}
To study how training data contributions evolve, the modular-arithmetic
chain-of-thought experiment is evaluated every two training epochs.
Training expressions containing two, three, four, or five operators are
grouped by operator count, and each group is divided into two random
halves, giving $d=8$ training data units. Every one of the $2^8=256$
coalitions is trained under three learner seeds.
At checkpoint $t$, $v_t^{(s)}(S)$ is the exact-answer accuracy for
coalition $S$ and seed $s$, evaluated on 500 held-out expressions per
target operator count. These are attributions to groups of training
examples. Checkpoint evaluation reuses the same training trajectories,
with additional evaluation cost but no additional retraining runs.

Shapley values and pairwise Banzhaf interaction indices are computed
from the complete measured coalition games, without fitting a surrogate.
For each pair $i,j$, the interaction index is the uniform average of
$v_t(S\cup\{i,j\})-v_t(S\cup\{i\})-v_t(S\cup\{j\})+v_t(S)$
over $S\subseteq[d]\setminus\{i,j\}$; positive and negative values
indicate average synergy and redundancy, respectively.
The non-additivity index, $1-R^2$, is the fraction of coalition-outcome
variance left unexplained by the best affine predictor under uniform
weighting of all coalitions. The Walsh spectrum decomposes this variance
by interaction degree. After removing the affine predictor, the
unfolding diagnostic measures the fraction of squared singular values
retained by rank $r$, reporting the minimum over cuts in the displayed
group ordering.
Related spectral analyses characterize the effective rank
and compression of hidden representations in language models
\citep{skean2025layer}. The present analysis instead tracks
the compressibility of training-subset outcomes across
training checkpoints.
For the five-operator target, contributions from three- and four-operator
training data rise early and subsequently plateau or decline, whereas
five-operator data becomes dominant later. Selected pairwise
interactions also change sign during training. In particular, the two
halves of the five-operator group are synergistic near the learning
transition but redundant at the final checkpoint, with a Banzhaf
interaction of approximately $-0.20$ (seed range $[-0.22,-0.17]$).
Non-additive terms account for up to approximately $66\%$ of the variance
of the seed-averaged game, and approximately $22\%$ at the final
checkpoint. Nevertheless, rank four retains at least approximately
$96\%$ of the residual unfolding energy at every cut over the reported
time window. This is another example of the relevance of  the theoretical distinction
between interaction degree and tensor rank. Here we see that substantial non-additive
dependence need not produce large unfolding ranks.
Across all four evaluation targets, the group matching the target
operator count becomes the largest contributor later in training
(Fig.~\ref{fig:dynamics-eight-targets}).

\end{document}